\newcommand{\rebuttal}{}
\newcommand\Tstrut{\rule[2.6ex]{0pt}{0pt}}  % = `top' strut
\newcommand\Bstrut{\rule[-0.9ex]{0pt}{0pt}} % = `bottom' strut
\newcommand{\N}{\mathbb{N}}
\newcommand{\R}{\mathbb{R}}
\newcommand{\U}{\mathcal{U}}
\newcommand{\G}{G}
\newtheorem{theorem}{Theorem}
\title{Learning Dissipative Dynamics in Chaotic Systems}
\author{%
  Zongyi Li\thanks{Equal contribution}
 \\
  Caltech\\
  \texttt{zongyili@caltech.edu}
%   \texttt{hippo@cs.cranberry-lemon.edu} \\
  %%
  \And
  Miguel Liu-Schiaffini$^*$\\
  Caltech\\
  \texttt{mliuschi@caltech.edu}
  \And
  Nikola Kovachki\\
  NVIDIA\\
  \texttt{nkovachki@nvidia.com}
  \And
  Burigede Liu\\
  University of Cambridge\\
  \texttt{bl377@eng.cam.ac.uk}
  \And
  Kamyar Azizzadenesheli\\
  NVIDIA\\
  \texttt{kamyara@nvidia.com}
  \And
  Kaushik Bhattacharya\\
  Caltech\\
  \texttt{bhatta@caltech.edu}
  \And
  Andrew Stuart\\
  Caltech\\
  \texttt{astuart@caltech.edu}
\And
  Anima Anandkumar\\
  Caltech\\
  \texttt{anima@caltech.edu}
  % examples of more authors
  % \And
  % Coauthor \\
  % Affiliation \\
  % Address \\
  % \texttt{email} \\
  % \AND
  % Coauthor \\
  % Affiliation \\
  % Address \\
  % \texttt{email} \\
  % \And
  % Coauthor \\
  % Affiliation \\
  % Address \\
  % \texttt{email} \\
  % \And
  % Coauthor \\
  % Affiliation \\
  % Address \\
  % \texttt{email} \\
}
\begin{document}

\maketitle

\begin{abstract}
Chaotic systems are notoriously challenging to predict because of their sensitivity to perturbations and errors due to time stepping. Despite this unpredictable behavior, for many dissipative systems \rebuttal{the statistics of the long term trajectories are governed by an invariant measure supported on a set, known as the global attractor; for many problems this set is finite dimensional, even if the state space is infinite dimensional.} For Markovian systems, the statistical properties of long-term trajectories are uniquely \rebuttal{determined by the solution operator that maps the evolution of the system over arbitrary positive time increments.} In this work, we propose a machine learning framework to \rebuttal{learn the underlying solution operator for dissipative chaotic systems, showing that the resulting learned operator accurately captures short-time trajectories and long-time statistical behavior.} Using this framework, we are able to predict various statistics of the invariant measure for the  turbulent Kolmogorov Flow dynamics with Reynolds numbers up to \rebuttal{$5000$}.
\end{abstract}

\section{Introduction}
\label{sec:intro}

\paragraph{Machine learning methods for chaotic systems.}
Chaotic systems are characterized by strong instabilities. \rebuttal{Small perturbations to the system, such as the initialization, lead to errors which accumulate during time-evolution, due to positive Lyapunov exponents.} Such instability makes chaotic systems challenging, both for
mathematical analysis and numerical simulation. 
Because of the intrinsic instability, it is infeasible for any method to capture the exact trajectory of a chaotic system for long periods.
Therefore, prior works \rebuttal{mainly focus on fitting} recurrent neural networks (RNN) on extremely short trajectories or learn\rebuttal{ing} a step-wise projection from a randomly generated evolution using reservoir computing (RC) \citep{vlachas2018data, pathak2018model, vlachas2020backpropagation, fan2020long, pathak2017using}. \rebuttal{Another approach to this problem is to work in the space of probability densities propagated by, or invariant under, the dynamics \citep{dellnitz1999approximation,deuflhard1999computation}, the Frobenius-Perron approach; a second, related, approach is to consider the adjoint of this picture and study the propagation of functionals on the state space of the system, the Koopman operator approach \citep{alexander2020operator, kaiser2021data}. However, even when the state space is finite dimensional, the Frobenius-Perron and Koopman approaches require approximation of a linear operator on a function space; when the state space is infinite-dimensional approximation of this operator is particularly challenging.}
These previous attempts are able to push the limits of faithful prediction to moderate periods on low dimensional ordinary differential equations (ODEs), e.g. the Lorenz-63 system, or on one-dimensional partial differential equations (PDEs), e.g. the Kuramoto-Sivashinsky (KS) equation. However, they are less effective at modeling more complicated turbulent systems such as the Navier-Stokes equation (NS), especially over long time periods. Indeed, \rebuttal{because of positive Lyapunov exponents,} we cannot expect \rebuttal{predictions of long-time trajectories of such chaotic systems }to be successful.
\rebuttal{Instead, we take a new perspective: we capture statistical properties of long trajectories by accurately learning the solution operator, mapping initial condition to solution at a short time later.}

\paragraph{Invariants in chaos.}
Despite their instability, many chaotic systems exhibit certain reproducible statistical properties, such as the auto-correlation and, for PDEs, the energy spectrum. Such properties remain the same for different realizations of the initial condition~\citep{gleick2011chaos}. This is provably the case for the Lorenz-63 model \cite{TUCKER97,holland2007central} and empirically holds for many dissipative PDEs, such as the KS equation and the two-dimensional Navier-Stokes equation (Kolmogorov flows) \citep{temam2012infinite}. Dissipativity is a physical property of many natural systems. \rebuttal{Dissipativity may be encoded mathematically as the existence of a compact set into which all bounded sets of initial conditions evolve in a finite time, and therafter remain inside; the \textbf{global attractor} captures all possible long time behaviour of the system by mapping this compact set forward in time \citep{temam2012infinite, stuart1998dynamical}. There} is strong empirical evidence that many dissipative systems are ergodic, i.e., there exists an invariant measure which \rebuttal{is supported on} the global attractor. \rebuttal{For autonomous systems, our focus here, the \textbf{solution operator} for the dynamical system maps the initial condition to the solution at any positive time. By fixing an arbitrary positive time and composing this map with itself many times, predictions can be made far into the future. This compositional property follows from the Markov semigroup property of the solution operator \citep{temam2012infinite, stuart1998dynamical}. It is natural to ask that approximations of the solution operator inherit dissipativity of the true solution operator, and that invariant sets and the invariant measure, which characterize the global attractor, are well-approximated; this problem is well-studied in classical numerical analysis \citep{humphries1994runge, stuart1994numerical} and here we initiate the study of inheriting dissipativity for machine-learned solution operators.} By learning a \rebuttal{solution} operator, we are able to quickly and accurately generate an approximate attractor and estimate its invariant measure for a variety of chaotic systems that are of interest to the physics and applied mathematics communities \citep{fukami2021machine,cardesa2017turbulent,chandler2013invariant,koltai2020diffusion,bramburger2021data,page2021revealing,churchill22}. \rebuttal{Prior work in learning complex dynamics from data \citep{chattopadhyay2022long, pathak2022fourcastnet, chattopadhyay2020deep, weyn2020improving} has labeled long-term stability a desirable feature (e.g., \citep{schmidt2019identifying, brenner2022tractable} learning the attractors of dynamical systems), but all lack a principled data-driven modeling paradigm to enforce it. In this paper we work 
within a precise mathematical setting encompassing a wide variety of practical problems \citep{temam2012infinite}, and we demonstrate two principled approaches to enforce dissipativity. One of these leads to provably dissipative models, something not previously achieved in the literature for our definition of dissipativity.} \footnote{\rebuttal{There are other, more restrictive, definitions of dissipativity; we work with the widely adopted definition from \citep{temam2012infinite,stuart1998dynamical}.}}

\paragraph{Neural operators.}
To learn the \rebuttal{solution} operators for PDEs, \rebuttal{one must} model the time-evolution of functions in infinite-dimensional function spaces. This is especially challenging when we need to generate long trajectories since even a small error accumulates over multiple compositions of the learned operator, potentially causing an exponential \rebuttal{blow-up} or a collapse due to the \rebuttal{space's} high dimension\rebuttal{ality}. Because we study the evolution of functions in time, we propose to use \rebuttal{neural operators~\citep{li2020neural,kovachki2021nueral}, a recently developed operator learning method.} Neural operators are deep learning models that are maps between function spaces. Neural operators generalize conventional neural networks which are maps between finite-dimensional spaces and subsume neural networks when limited to fixed grids. The input functions to neural operators can be represented in any discretization, and the output functions can be evaluated at any point in the domain. Neural operator remedies the mesh-dependent nature of finite-dimensional neural network models such as RNNs, CNNs, and RC. Neural operators are guaranteed to universally approximate any operator in a mesh independent manner \rebuttal{\citep{kovachki2021nueral}}, and hence, can capture the \rebuttal{solution} operator of chaotic systems. This approximation guarantee and the absorption of trajectories by the global attractor makes it possible to accurately follow it over long time horizons, allowing access to the invariant measure of chaotic systems.

\begin{figure*}[t]
    \centering
    \includegraphics[width=\textwidth]{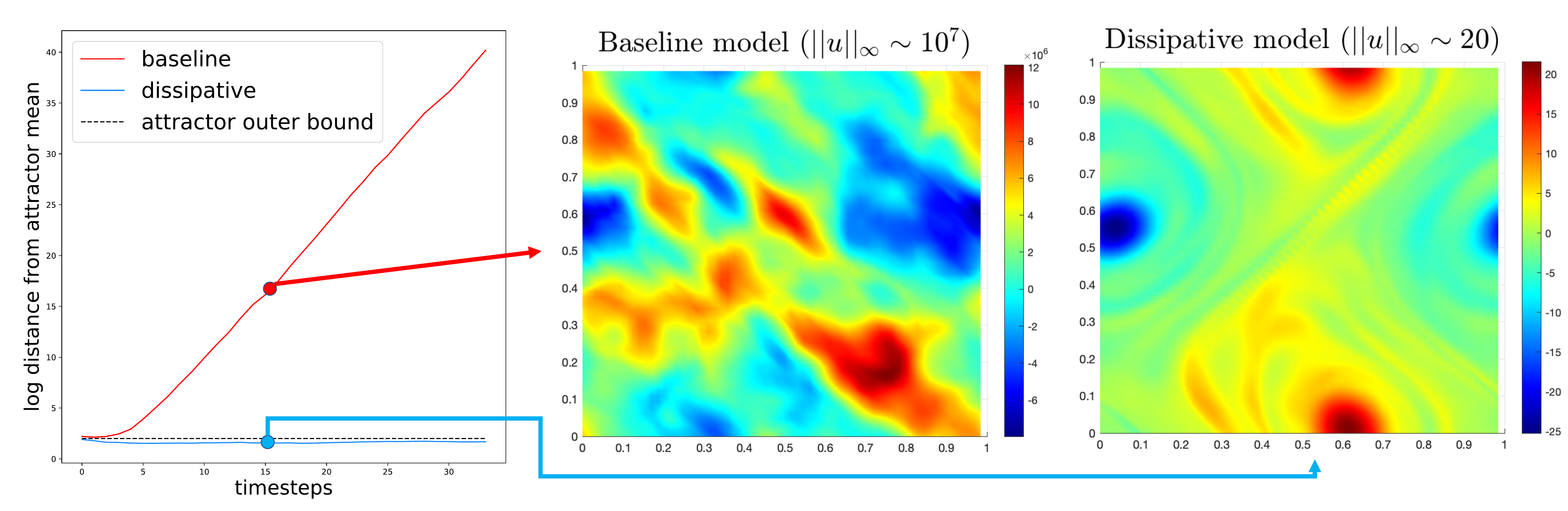}
    \caption{Dynamic evolution of the Markov neural operator for Kolmogorov \rebuttal{flow systems}}
    
    {\small \rebuttal{Starting} from initial conditions near the attractor, with and without dissipativity \rebuttal{regularization}. The baseline model has no dissipativity \rebuttal{regularization} while the dissipative model has the \rebuttal{regularization} enforced during \rebuttal{training}. Baseline model blows up, whereas the dissipative model returns to the attractor. The dissipative model is trained using the Fourier neural operator architecture in the manner shown in Figure~\ref{fig:MNO}.}
    \label{fig:NS-blowup}
\end{figure*}

\paragraph{Our contributions. }
In this work, we formulate a machine learning framework for chaotic systems exploiting their dissipativity and Markovian properties. 
We propose the Markov neural operator (MNO) and train it given only one-step evolution data from a chaotic system. By composing the learned operator over a long horizon, we accurately approximate the global attractor of the system~\citep{sviridyuk1994general}. Our architecture is outlined in Figure~\ref{fig:MNO}. In order to assess its performance, we study the statistics of the associated invariant measure such as the Fourier spectrum, the spectrum of the proper orthogonal decomposition (POD), the point-wise distribution, the auto-correlation, and other domain-specific statistics such as the turbulence kinetic energy and the dissipation rate. Furthermore we study the behavior of our leaned operator over long horizons and ensure that it does not blow up or collapse but rather accurately follows the global attractor. In this work:

\begin{itemize}[leftmargin=*]
\item We theoretically prove that, under suitable conditions, the MNO can approximate the underlying \rebuttal{solution} operator of chaotic PDEs, while conventional neural networks lack such strong guarantees.
\item We impose dissipativity by augmenting the data on an outer shell to enforce that the dynamic evolution stays close to the attractor. \rebuttal{As an additional safeguard, we impose a hard dissipativity constraint on MNO predictions far from the attractor and the data augmentation shell.} We show this is crucial for learning in a chaotic regime as demonstrated by Figure~\ref{fig:NS-blowup}. The resulting system remains stable against large perturbations.
\item We study the choice of time steps for training the MNO, demonstrating that the error follows a valley-shaped phenomenon \rebuttal{(cf. \citep{liu2022hierarchical, chattopadhyay2022towards, levine2021framework})}. This gives rise to a recipe for choosing the optimal time step for accurate learning.
\item We show that standard mean square error (MSE) type losses for training are not adequate, and the models often fail to capture the higher frequency information induced from the derivatives. We investigate various Sobolev losses in operator learning. We show that using Sobolev norms for training captures higher-order derivatives and moments, as well as high frequency details~\citep{czarnecki2017sobolev, beatson2020learning}. This is similar in spirit to pre-multiplying the spectrum by the wavenumber, an approach commonly used in \rebuttal{the study of fluid systems} \citep{smits2011high}.
\item We investigate multiple \rebuttal{existing} deep learning architectures,  including U-Net \citep{ronneberger2015u}, long short-term memory convolution neural networks (LSTM-CNN) \citep{shi2015convolutional}, and gated recurrent \rebuttal{units} (GRU)~\citep{chung2014empirical}, in place of the neural operator to learn the \rebuttal{solution} operator. We show MNO provides an order of magnitude lower error on all loss functions studied. Furthermore, we show that MNO outperforms the \rebuttal{aforementioned} neural network \rebuttal{architectures} on all statistics \rebuttal{studied}.
\end{itemize}
\rebuttal{In summary, we propose a principled approach to the learning dissipative chaotic systems. Dissipativity is enforced through data augmentation and through post-processing of the learned model. The methodology allows for treatment of infinite dimensional dynamical systems (PDEs) and leads to methods with short term trajectory accuracy and desirable long-term statistical properties.}

\begin{figure}[t]
    \centering
    % \vspace{-0.5cm}
    \includegraphics[width=\textwidth]{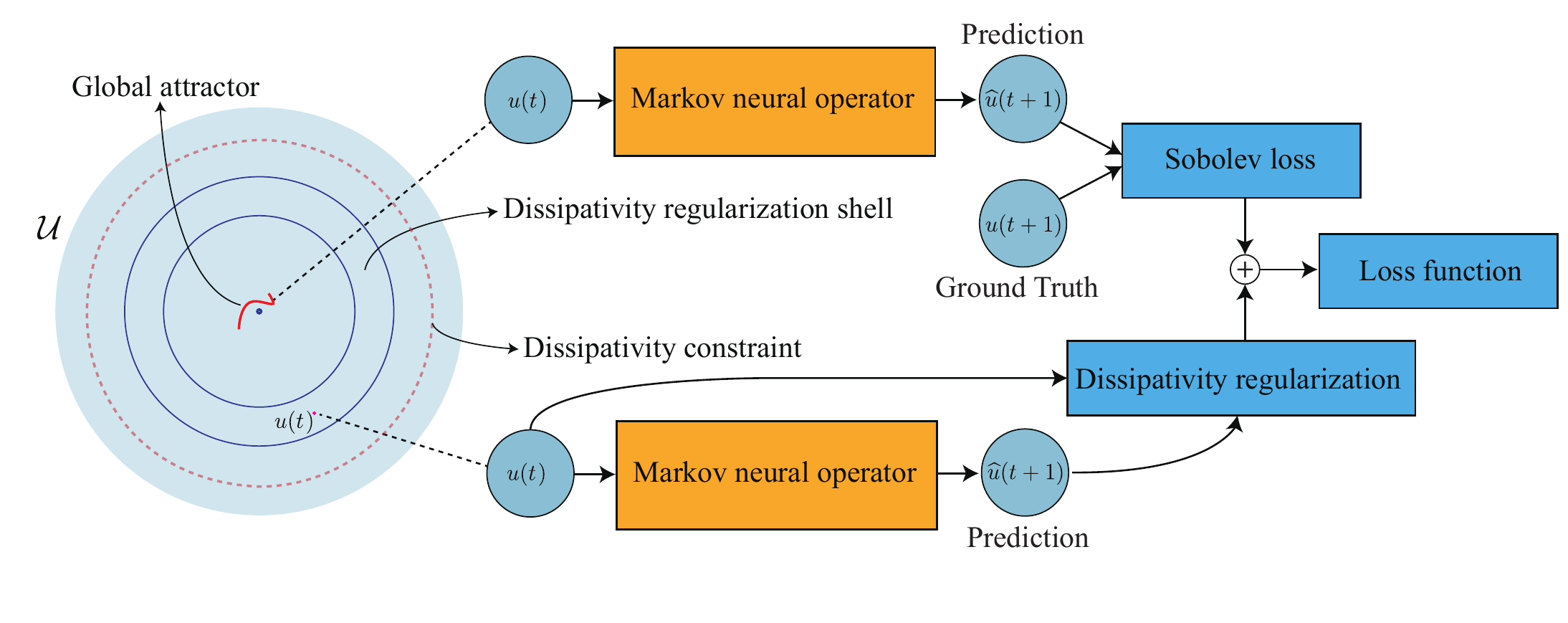}
    \caption{Markov neural operator (MNO): learn global dynamics from local data}
    \label{fig:MNO}
    {\small  \rebuttal{Learning} the MNO from the local time-evolution data with the Sobolev \rebuttal{loss} and dissipativity \rebuttal{regularization}. $u(t)$ is $t$'th time step of the chaotic system. Sobolev \rebuttal{norms} of various \rebuttal{orders} are used to compute the \rebuttal{step-wise} loss. Dissipativity \rebuttal{regularization} is computed by drawing a random sample $u(t)$ from the dissipativity shell to make sure that in expectation next time step prediction $\widehat u(t+1)$ dissipates \rebuttal{in norm}. }
\end{figure}

\section{Problem setting}
\label{sec:problem-setting}

We consider potentially infinite dimensional dynamical systems where the phase space \(\U\) is a Banach space and, in particular, a function space on a Lipschitz domain \(D \subset \R^d\) (for finite dimensional systems, \(\mathcal{U}\) will be a \rebuttal{Euclidean} space). We are interested in the initial-value problem
\begin{equation}
\label{eq:pde}
    \frac{du}{dt}(t) = F(u(t)), \qquad u(0) = u_0, \qquad t \in (0,\infty)
\end{equation}
% \begin{align}
% % \label{eq:pde}
% \begin{split}
%     \frac{du}{dt}(t) &= F(u(t)), \qquad t \in (0,\infty), \\
%     L u &= f\\
%     u(0) &= u_0, 
% \end{split}
% \end{align}
for initial conditions \(u_0 \in \U\) where \(F\) is usually a non-linear operator. We will assume, given some appropriate boundary conditions on \(\partial D\) when applicable, the solution \(u(t) \in \U\) exists and is unique for all times \(t \in (0,\infty)\).
When making the spatial dependence explicit, if it is present, we will write \(u(x,t)\) to indicate the evaluation \(u(t)|_x\) for any \(x \in D\). We define the family of operators \(S_t : \U \to \U\) as mapping \(u_0 \mapsto u(t)\) for any \(t \geq 0\), and note that, since \eqref{eq:pde} is autonomous, \(S_t\) satisfies the Markov property i.e. \(S_t(S_s(u_0)) = u(s+t)\) for any \(s,t \geq 0\). We adopt the viewpoint of casting time-dependent PDEs into function space ODEs \eqref{eq:pde}, as this leads to the semigroup approach to evolutionary PDEs which underlies our learning methodology. 

\rebuttal{One approach to this problem is to learn $F$ itself (e.g., \citep{bakarji2022discovering, champion2019data, shea2021sindy, chen2018neural}). However, in infinite dimensions, $F$ is an unbounded operator and learning unbounded operators is considerably more challenging than learning bounded operators; this is proven in \citep{de2021convergence} for linear operators. We avoid these issues by instead learning the solution operator $S_t$ for some fixed $t>0$; this map is bounded as an operator on $\mathcal{U}$. Furthermore, large time statistical behaviour can be obtained more efficiently by using the solution operator and not resolving dynamics at timescales smaller than time $t$.}

% \subsection{Attractors, ergodicity, and invariant measures.}

\paragraph{Dissipativity.}
Systems for which there exists some \rebuttal{fixed,} bounded, positively-invariant set $E$ such that for any bounded $B \subset \mathcal{U}$, there is some time $t^* = t^* (B)$ beyond which the dynamics of any trajectory starting in $B$ enters and remains in $E$ are known as \textbf{dissipative systems} \rebuttal{\citep{temam2012infinite, stuart1998dynamical}}. The set $E$ is known as the \textbf{absorbing set} of the system. For such systems, the global attractor \(A\), defined subsequently, is characterized as the \(\omega\)-limit set of $E$. In particular, for any initial condition \(u_0 \in \U\), the trajectory \(u(t)\) approaches \(A\) as \(t \to \infty\).
In this work, we consider dissipative dynamical systems where there exist some $\alpha \geq 0$ and $\beta > 0$ such that
\begin{equation}
    \label{def:diss_system}
    \frac{1}{2} \frac{d}{dt} ||u||^2 \rebuttal{= \langle u, F(u) \rangle}  \leq \alpha - \beta ||u||^2
\end{equation}
for all $u \in \mathcal{U}$. It can be shown that systems which satisfy this inequality are dissipative \cite{stuart1998dynamical} with the absorbing set $E$, an open ball of radius $\sqrt{{\alpha}/{\beta} + \varepsilon}$ for any $\varepsilon > 0$.
% \begin{equation}
%     \label{eq:absorbing_set}
%     E = B\left(0, \sqrt{\frac{\alpha}{\beta} + \varepsilon}\right),
% \end{equation}
% for any $\varepsilon > 0$, where $B(u, r)$ is the open ball centered at $u \in \mathcal{U}$ with radius $r$.
%
There are several well-known examples of dynamical systems that satisfy the above inequality. In this paper we consider the finite-dimensional Lorenz-63 system and the infinite-dimensional cases of the Kuramoto-Sivashinsky and 2D incompressible Navier-Stokes equations, in the form of Kolmogorov flows \cite{temam2012infinite}.

\paragraph{Global Attractors.}

%The long time behavior of the solution to \eqref{eq:pde} is characterized by 
%the set \(U = U(u_0) \subset \U\) which is \textbf{invariant} under the dynamic i.e. \(S_t (U) = U\)
%for all \(t \geq 0\), and the orbit \(u(t)\) converges 
%\[\inf_{v \in U} \|u(t) - v\|_\U \to 0 \qquad \text{as} \qquad t \to \infty.\]
%When it exists, \(U\) is often identified as the \(\omega\)-limit set of \(u_0\).
%The chaotic nature of certain dynamical systems arises due to the complex structure of this set because \(u(t)\) follows \(U\) and \(U\) can be, for example, a fractal set. 

A compact, invariant set \(A\) is called a \textbf{global attractor} if, for any bounded set \(B \subset \mathcal{U}\) and any \(\epsilon > 0\) there exists a time \(t^* = t^* (\epsilon, B)\) such that \(S_t (B)\) is contained within an \(\epsilon\)-neighborhood of \(A\) for all \(t \geq t^*\). \rebuttal{Formally it is the $\omega$-limit set of any absorbing set; this
characterizes it uniquely \citep{temam2012infinite, stuart1998dynamical} under assumption \eqref{def:diss_system}. Although unique, the $\omega$-limit set of a single initial condition may differ across initial conditions \citep{stuart1998dynamical}. In this paper we concentrate on the ergodic setting where the same $\omega$-limit set is seen for almost all initial conditions with respect to the invariant measure.} Many PDEs arising in physics such as reaction-diffusion equations \rebuttal{(dynamics of biochemical systems)} or the \rebuttal{Kuramoto-Sivashinky and} Navier-Stokes equation \rebuttal{(fluid mechanics)} are dissipative and possess a global attractor which is often finite-dimensional \citep{temam2012infinite}. Therefore, numerically characterizing the attractor is an important problem in scientific computing with many potential applications.

\paragraph{Data distribution.}
For many applications, an exact form for the possible initial conditions to \eqref{eq:pde} is not available; it is therefore convenient to use a stochastic model to describe the initial states. To that end, let \(\mu_0\) be a probability measure on \(\U\) and assume 
that all possible initial conditions to \eqref{eq:pde} come as 
samples from \(\mu_0\) i.e. \(u_0 \sim \mu_0\). Then any possible state of the dynamic \eqref{eq:pde} after some time \(t > 0\) can be thought of as being distributed according to the pushforward measure \(\mu_t \coloneqq S_t^\sharp \mu_0\) i.e. \(u(t) \sim \mu_t\). Therefore as the dynamic evolves, so does the type of likely functions that result. This further complicates the problem of long time predictions since training data may only be obtained up to finite time horizons hence the model will need the ability to predict not only on data that is out-of-sample but also out-of-distribution.

\paragraph{Ergodic systems.}
%To alleviate some of the previously presented challenges, we consider \textbf{ergodic} systems.
%Roughly speaking, a system is ergodic if there exists an \textbf{invariant measure} \(\mu\) such that after some time \(t^* > 0\),
%we have \(\mu_t \approx \mu\) for any \(t \geq t^*\) (in fact, \(\mu\) can be defined without any reference to \(\mu_0\)
%or its pushforwards, see \citep{multiscalemethods} for details). That is, after some large enough time, the distribution of possible states that the system can be in is fixed for any time further into the future. Indeed, \(\mu\) charges the global attractor \(A\).
%Notice that ergodicity is a much more general property than having \textbf{stationary states} which means that the system has a fixed period in time, or having \textbf{steady states} which means the system is unchanged in time.
\rebuttal{To alleviate some of the previously presented challenges, we consider ergodic systems. Roughly speaking, a system is ergodic if there exists an \textbf{invariant measure} $\mu$ which is unchanged by pushforward under $S_t$ and time averages of functionals on state space $\mathcal{U}$ converge to averages against $\mu$. For dissipative systems the invariant measure is supported on the global attractor $A$ and together the pair $(A, \mu)$ capture the large time dynamics and their statistics. Proving ergodicity is hard for deterministic systems. Rigorous proofs have been developed for the Lorenz-63 model \citep{holland2007central}; empirical evidence of ergodicity may be seen much more widely, including for the Kuramoto-Sivashinsky and Navier-Stokes equations studied in this paper. For stochastic differential equations ergodicity is easier to establish; see \citep{multiscalemethods} for details.}

Ergodicity mitigates learning a model that is able to predict out-of-distribution since both the input and the output of \(\hat{S}_{h}\), an approximation to \(S_h\), will approximately be distributed according to \(\mu\). Furthermore, we may use \(\hat{S}_{h}\) to learn about \(\mu\) since sampling it simply corresponds to running the dynamic forward. Indeed, we need only generate data on a finite time horizon in order to learn \(\hat{S}_{h}\), and, once learned, we may use it to sample \(\mu\) indefinitely by repeatedly composing \(\hat{S}_{h}\) with itself. Having samples of \(\mu\) then allows us to compute statistics which characterize the long term behavior of the system and therefore the global attractor \(A\). This strategy avoids the issue of accumulating errors in long term trajectory predictions since we are only interested in the property that \(\hat{S}_{h}(u(t)) \sim \mu \).

\section{Learning the Markov neural operator in chaotic dynamics}
\label{sec:methodology}
We propose the Markov neural operator \rebuttal{(MNO), a method for learning the underlying solution operators of autonomous, dissipative, chaotic dynamical systems. In particular, we approximate the operator mapping the solution from} the current to the next step $\hat{S}_{h}: u(t) \mapsto u(t+h)$. We approximate the \rebuttal{solution} operator $S_h$, an element of the underlying continuous time semigroup \(\{S_t : t \in [0,\infty)\}\), using a neural operator as in Figure \ref{fig:MNO}\rebuttal{; in the finite dimensional case we use a feedforward neural network.} See Appendix~\ref{subsec:appdx_semigroup} for background on the \rebuttal{solution} operator and semigroup.

% \subsection{\rebuttal{Sobolev norm loss}}\label{subsec:appdx_sobolev}
\paragraph{Sobolev norm as loss} We incorporate Sobolev norms in the training process to better capture invariant statistics of the learned operator. Given the ground truth operator $S_h$ and the learned operator $\hat S_h$ and $f = \hat S_h(u) - S_h(u)$, we compute the step-wise loss in the Sobolev norm,
\begin{equation}
    \label{eq:sobolev_norm}
    \rebuttal{\|f\|_{k,p} = \left(\sum_{i=0}^k \|f^{(i)}\|^p_p \right)^\frac{1}{p}.}
\end{equation}
\rebuttal{In particular, we use $p=2$, where the Sobolev norm can be easily computed in Fourier space:}
\begin{equation}
    \rebuttal{\|f\|_{k,2}^2 =  \sum_{n=-\infty}^\infty (1 + n^2 + \cdots + n^{2k}) \left|\hat f(n) \right|^2,}
\end{equation}
where $\hat f$ is the Fourier series of $f$. In practice, we use \(k = 0,1,2\) for the training.

\paragraph{Long-term predictions.} Having access to 
the map \(\hat{S}_h\), its semigroup properties allow for approximating long time trajectories of \eqref{eq:pde} by repeatedly composing \(\hat{S}_h\) with its own output. Therefore, for any \(n \in \N\), we compute $u(nh)$ as follows,
\begin{equation}
\label{eq:approxtraj}
    u(nh) \approx \hat{S}^n_h(u_0) \coloneqq \underbrace{(\hat{S}_h \circ \cdots \circ \hat{S}_h)}_{n \text{ times}} (u_0).
\end{equation}
The above semigroup formulation can be applied with various choices of the backbone model for $\hat{S}_h$. In general, we prefer models that can be evaluated quickly and have approximation guarantees so the per-step error can be controlled. Therefore, we choose the standard feed-forward neural network \cite{park1991universal} for ODE systems, and the Fourier neural operator \cite{li2020fourier} for infinite dimensional PDE systems.

For the the neural operator parametric class, we prove the following theorem regarding the \rebuttal{MNO}. The result states that our construction can approximate trajectories of infinite-dimensional dynamical systems arbitrary well. \rebuttal{The proof is given in Appendix~\ref{subsec:appdx_proof}}.

\begin{theorem}
\label{thm:existance}
Let \(K \subset \U\) be a compact set and assume that, for some \(h > 0\), the \rebuttal{solution} operator \(S_h : \U \to \U\) associated to the dynamic \eqref{eq:pde} is locally Lipschitz. Then, for any \(n \in \N\) and \(\epsilon > 0\) there exists a neural operator \(\hat{S}_h : \U \to \U\) such that
\[\sup_{u_0 \in K} \sup_{k \in \{1,\dots,n\}} \|u(kh) - \hat{S}^k_h (u_0)\|_\U < \epsilon.\]
\end{theorem}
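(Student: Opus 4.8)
The plan is to reduce the $n$-step claim to the single-step universal approximation guarantee for neural operators \citep{kovachki2021nueral}, and then to control the accumulation of error across the $n$ compositions by a discrete Gr\"onwall-type recursion driven by the local Lipschitz continuity of $S_h$. The assumption that $S_h$ is locally Lipschitz is exactly what makes this propagation estimate work, so I would use it as the engine of the proof.

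First I would assemble the compact set on which a good single-step approximation is needed. Since $S_h$ is locally Lipschitz it is continuous, so each forward image $S_h^k(K)$ is compact, and hence the union of all iterates up to time $nh$, namely $\tilde K := \bigcup_{k=0}^{n} S_h^k(K)$, is compact as a finite union of compact sets; every true iterate $u(kh) = S_h^k(u_0)$ with $u_0 \in K$ and $k \le n$ lies in $\tilde K$. By a standard covering argument, local Lipschitz continuity upgrades to a uniform statement near this compact set: there exist $\rho > 0$ and $L \ge 1$ such that $\|S_h(v) - S_h(w)\| \le L\|v - w\|$ whenever $w \in \tilde K$ and $\|v - w\| \le \rho$. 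Invoking the neural operator universal approximation theorem on $\tilde K$, for any $\delta > 0$ I obtain a neural operator $\hat S_h$ with $\sup_{u \in \tilde K}\|S_h(u) - \hat S_h(u)\| < \delta$, with $\delta$ fixed at the end.

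Next I would set up the error recursion. Writing $u_k = S_h^k(u_0)$, $\hat u_k = \hat S_h^k(u_0)$, and $e_k = \|u_k - \hat u_k\|$, note $e_0 = 0$, and split
\[
e_k \le \|S_h(u_{k-1}) - S_h(\hat u_{k-1})\| + \|S_h(\hat u_{k-1}) - \hat S_h(\hat u_{k-1})\|.
\]
Provided the approximate iterate $\hat u_{k-1}$ stays within $\rho$ of the true iterate $u_{k-1} \in \tilde K$, the first term is at most $L\, e_{k-1}$ and the single-step error contributes at most $\delta$, so $e_k \le L\, e_{k-1} + \delta$. Unrolling from $e_0 = 0$ gives $e_k \le \delta\,(L^k - 1)/(L-1) \le \delta\, C_n$ for every $k \le n$, where $C_n := (L^n - 1)/(L-1)$ (with $C_n = n$ in the borderline case $L = 1$). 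Choosing $\delta < \epsilon / C_n$ then forces $\sup_{u_0 \in K}\sup_{k \le n} e_k < \epsilon$.

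The crux, and what I expect to be the main obstacle, is justifying that the approximate trajectory never leaves the region where both the Lipschitz bound and the single-step bound apply; this is where the infinite-dimensional setting bites, because a closed $\rho$-neighborhood of a compact set need not be compact (indeed compact sets have empty interior), so one cannot simply ask $\hat S_h$ to approximate $S_h$ on a whole tube around $\tilde K$. I would close this by a bootstrap on $k$: assuming $e_j \le \delta C_n$ for $j < k$ and choosing $\delta$ small enough that $\delta C_n \le \rho$, the bound $e_{k-1} \le \rho$ keeps $\hat u_{k-1}$ in the good neighborhood and the induction propagates. The delicate point is the second term above, whose control genuinely requires proximity to $\tilde K$ rather than membership in it; I would handle it by the combined continuity estimate $\|S_h(\hat u_{k-1}) - \hat S_h(\hat u_{k-1})\| \le (L + \hat L)\,e_{k-1} + \delta$, using that the neural operator $\hat S_h$ is itself Lipschitz (with constant $\hat L$) and that $u_{k-1}\in\tilde K$. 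This folds $\hat L$ into the recursion constant, so the remaining care is to ensure the Lipschitz constant of the approximant does not blow up as $\delta \to 0$ — achievable by taking the approximation in a norm that also controls first derivatives, or by post-composing with a Lipschitz truncation — after which the final choice of $\delta < \min\{\rho,\epsilon\}/C_n$ completes the argument.
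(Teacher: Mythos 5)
Your reduction to a one-step bound plus a discrete Gr\"onwall recursion is the same skeleton as the paper's proof, and you correctly isolate the genuine difficulty: in infinite dimensions the approximate iterates $\hat{S}_h^k(u_0)$ leave the compact set $\tilde K$ on which the universal approximation theorem gives control, and a $\rho$-tube around $\tilde K$ is not compact, so you cannot simply enlarge the approximation domain. However, your proposed resolution does not close. Bounding $\|S_h(\hat u_{k-1}) - \hat S_h(\hat u_{k-1})\|$ by $(L+\hat L)\,e_{k-1} + \delta$ imports the Lipschitz constant $\hat L$ of the approximant into the recursion, but the universal approximation theorem gives no control on $\hat L$: it may grow without bound as $\delta \to 0$, and since your final choice of $\delta$ depends on the recursion constant $C_n$, which now depends on $\hat L$, which depends on the architecture realizing accuracy $\delta$, the argument is circular. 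The two escape routes you mention are not carried out and neither is routine: a universal approximation theorem in a Lipschitz or $C^1$ operator norm is not available in the cited references, and a post-composed ``Lipschitz truncation'' on a Banach space that preserves the sup-norm approximation on $\tilde K$ while uniformly bounding the global Lipschitz constant would itself need to be constructed.

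The paper closes this gap by a different device: it enlarges the compact set to $P = R \cup \bigl(\bigcup_{m=1}^{\infty} P_m(R)\bigr)$, where the $P_m$ are the finite-rank projections appearing inside the universal approximation construction, shows that $P$ is still compact, and then observes that the approximant can be chosen with $\hat S_h(P) \subseteq R_m \subseteq P$. This forward-invariance means every approximate iterate stays inside the compact set on which both the Lipschitz bound for $S_h$ and the one-step approximation bound hold, so the recursion only ever evaluates $\|S_h(v) - \hat S_h(v)\|$ for $v \in P$ and the Lipschitz constant of $\hat S_h$ never enters. If you want to salvage your argument, this structural property of the approximant --- that its range over the relevant compact set lies in a fixed compact set which can be arranged to be forward-invariant --- is the missing ingredient.
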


Theorem~\ref{thm:existance} indicates that \rebuttal{if the backbone model is rich enough, it can approximate many chaotic systems for arbitrarily long periods}. For finite-dimensional systems, the same theorem holds with feed-forward neural networks instead of neural operators.  We note that standard neural networks such as RNNs and CNNs \emph{do not possess} such approximation theorems in the infinite-dimensional setting.

\paragraph{Invariant statistics.}
A useful application of the \rebuttal{solution} operators is to estimate statistics of the invariant measure of a chaotic system.
Assume the target system is ergodic and there exists an invariant measure $\mu$ as discussed in Section \ref{sec:problem-setting}. An invariant statistic is defined as 
\begin{equation}
    \label{eq:ergodicstatistics}
    T_G \coloneqq \int_{\mathcal{U}} \G(u) \: \mathrm{d} \mu(u) 
    = \lim_{T \rightarrow \infty } \frac{1}{T} \int_{0}^{T} \G(u(t)) \: \mathrm{d}t 
\end{equation}
for any functional $\G : \mathcal{U} \to \R^d$. Examples include the $L^2$ norm, any spectral coefficients, and the spatial correlation,  as well as problem-specific statistics such as the turbulence kinetic energy and dissipation rates in fluid flow problems. Given the property \eqref{eq:approxtraj} and using the ergodicity from \eqref{eq:ergodicstatistics}, the approximate model \(\hat{S}_h\) can be used to estimate any invariant statistic simply by computing
\(T_G \approx \rebuttal{\frac{1}{n}} \sum_{k=1}^n G(S^k_h(u_0))\)
\rebuttal{where \(nh = T\)} and \(T > 0\) large enough. 
%Examples of fast approximation \(\hat{S}_h\) which accurately predict invariant statics are given in Section \ref{sec:experiments}.

\iffalse
The invariant measure can be easily estimated by composing the \rebuttal{solution} operator for a long time horizon $T$. Since neural networks and neural operators in general have a quick inference time. Examples are included in Section \ref{sec:experiments}.
\begin{equation}
    T_G 
    \approx \frac{1}{T}\sum_{t=1}^T \G(u(t))
    \approx \frac{1}{T}\sum_{t=1}^T \G(\hat{S}_h^t u_0)
\end{equation}
\fi 

\iffalse
\paragraph{Choice of the time step.} In particular, the choice of the time step $h$ can be critical. If $h$ is too larger, the correlation between the input and output are too weak, and the per-step error dominates.  However, a small $h$ can also be harmful \citep{levine2021framework}. Different from numerical solvers, where smaller $h$ is usually more accurate, the data driven model may have a larger error rate with a small $h$. If $h$ is too small, it will take more steps to reach a certain time and the error will accumulate for more iteration. We present numerical examples with various $h$ in Figure \ref{fig:dt}.
\fi

\paragraph{\rebuttal{Encouraging dissipativity via regularization}.}
\label{subsec:dissipativity}
% This can perhaps go in the introduction
In practice, training data for \rebuttal{dissipative} systems is typically drawn from trajectories close to the global attractor, so a priori there is no guarantee of a learned model's behavior far from the attractor. Thus, if we seek to learn the global attractor and invariant statistics of a dynamical system, it is crucial that we \rebuttal{encourage learning dissipativity.}
\rebuttal{To learn $\hat{S}_{h} : u(t) \mapsto u(t+h)$ we propose the following loss function, found by supplementing the Sobolev loss~\eqref{eq:sobolev_norm} with a dissipativity-inducing regularization term:}
\begin{equation}
    \label{def:diss}
    \rebuttal{\underbrace{\int_\mathcal{U} \|\hat S_h(u) - S_h(u)\|_{k,2}^2 \: \mathrm{d} \mu (u)}_{\textrm{step-wise Sobolev loss}} + \ \alpha \underbrace{\int_{\mathcal{U}}  \| \hat{S}_{h}(u) - \lambda u \|^2_{\U} \: \mathrm{d} \nu (u)}_{\textrm{dissipativity regularization}}.}
    %\mathbb{E}_{u \sim \mu} \left[\|\hat S_h(u) - S_h(u)\|_{k,2}^2 \right] + \alpha \mathbb{E}_{u \sim \nu} \left[C_D\left(\hat{S}_{h}(u), u \right) \right]
\end{equation}
\rebuttal{Here $\mu$ is the underlying distribution of the training data, $\alpha$ is a loss weighting hyperparameter, and} $0 < \lambda < 1$ is some constant factor for scaling down (i.e., enforcing dissipativity) inputs $u$ drawn from a probability measure $\nu$. We choose $\nu$ to be a uniform probability distribution supported on some shell with a fixed inner and outer radii from the origin in $\mathcal{U}$\rebuttal{; these are chosen so that the support of $\nu$ is disjoint from the attractor. Our dissipative regularization term scales down $u$ by constant} $\lambda$, but in principle alternative dissipative cost functionals can be used. 

%The norm $| \cdot |$ used in our $C_D$ is the norm on $\mathcal{U}$ (e.g., Euclidean norm in the case of the Lorenz-63 system, see Figure~\ref{fig:lorenz_shell}). 
\paragraph{\rebuttal{Enforcing dissipativity via post-processing.}}
\rebuttal{While the previous dissipativity regularization enforces a dissipative map in practice, it does not readily yield to theoretical guarantees of dissipativity. For an additional safeguard against model instability, we enforce a hard \emph{dissipativity constraint} far from the attractor and from the shell where $\nu$ is supported, resulting in provably dissipative dynamics.} 

\rebuttal{In detail, we post-process the model: whenever the dynamic moves out of an a priori defined stable region, we switch to the second model \(\Psi\) that pushes the dynamic back. The new model combines the learned model \(\hat S_h\) and the safety model \(\Psi\), via a threshold function $\rho$ :}
\begin{equation}
    \rebuttal{\hat S_h' (u) = \rho(\|u\|) \hat S_h + (1 - \rho(\| u\|)) \Psi(u)},
\end{equation}
\rebuttal{where $\Psi$ is some dissipative map and $\rho$ is a partition of unity. For simplicity we define}
\begin{equation}
    \label{eq:dissipative_map}
    \rebuttal{\Psi(u) = \lambda u \hspace{4em} \rho(\|u\|) = \frac{1}{1 + e^{\beta(\|u\| - \alpha)}},}
\end{equation}
\rebuttal{where $\alpha$ is the effective transition radius between $\hat S_h$ and $\Psi$ and $\beta$ controls the transition rate. Note that this choice of $\Psi$ is consistent with the regularization term in the loss \eqref{def:diss}.}

\rebuttal{By its construction, the post-processed model is dissipative. However, its dynamics (in particular the unlearnable transition between $\hat S_h$ and $\Psi$) are not as smooth as the dissipative regularization model, which learn the whole dynamic together, as shown in Figure \ref{fig:lorenz_shell}.
Combining regularization and post-processing results in a dissipative model with both smooth dynamics and theoretical guarantees.}

\section{Experiments}
\label{sec:experiments}

We evaluate \rebuttal{our approach} on the finite-dimensional, chaotic Lorenz-63 system as well as \rebuttal{the} chaotic 1D Kuramoto-Sivashinsky and 2D Navier-Stokes equations. In all cases we show that \rebuttal{encouraging} dissipativity is crucial for capturing the global attractor and evaluating statistics of the invariant measure. To the best of our knowledge, we showcase the first machine learning method able to predict the statistical behavior of \rebuttal{the incompressible Navier-Stokes equation in a strongly chaotic regime.}

\subsection{Lorenz-63 system}
%\todo{[Note: this is a relatively easy ODE system. mainly to visualize]}
To motivate and justify our framework for learning chaotic systems in the infinite-dimensional setting, we first apply our framework 
%(e.g., enforcing model dissipativity and evaluating invariant statistics) 
on the simple yet still highly chaotic Lorenz-63 ODE system\rebuttal{, a widely-studied \citep{sparrow} simplified model for atmospheric dynamics given by}
%given in the following, is a system of coupled ordinary differential equations, and it is typically considered to be the simplest dynamical system that exhibits chaotic properties,
% \begin{align}
%     \label{eq:L63}
%     \dot{u}_x &= \alpha (u_y - u_x), \nonumber \\
%     \dot{u}_y &= -\alpha u_x - u_y - u_x u_z,\\
%     \dot{u}_z &= u_x u_y - b u_z - b(r+\alpha) \nonumber
% \end{align}
\begin{align}
    \label{eq:L63}
    \dot{u}_x = \alpha (u_y - u_x), \qquad
    \dot{u}_y = -\alpha u_x - u_y - u_x u_z,\qquad
    \dot{u}_z = u_x u_y - b u_z - b(r+\alpha).
\end{align}
We use the canonical parameters $(\alpha, b ,r) = (10, 8/3, 28)$ \citep{lorenz1963deterministic}.
\begin{figure}[t]
    \centering
        \begin{subfigure}{0.33\textwidth}
        \centering
        \includegraphics[width=0.9\textwidth]{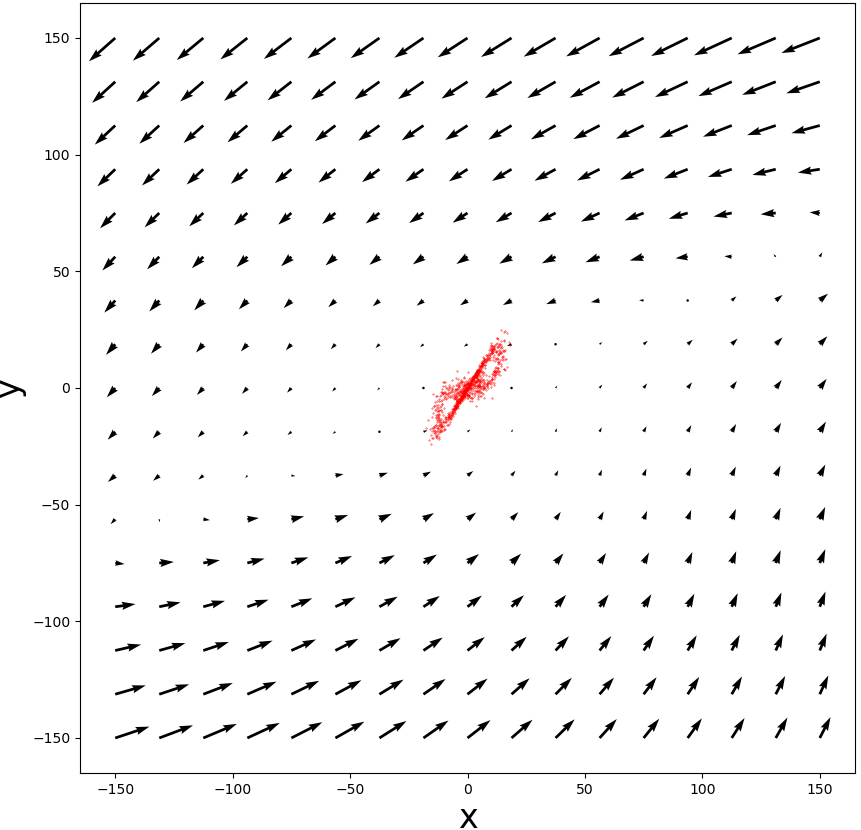}
        \caption{\rebuttal{No dissipativity}}
        \label{fig:lorenz_shell_woDissipativity}
        \end{subfigure}%
        \begin{subfigure}{0.33\textwidth}
        \centering
        \includegraphics[width=0.9\textwidth]{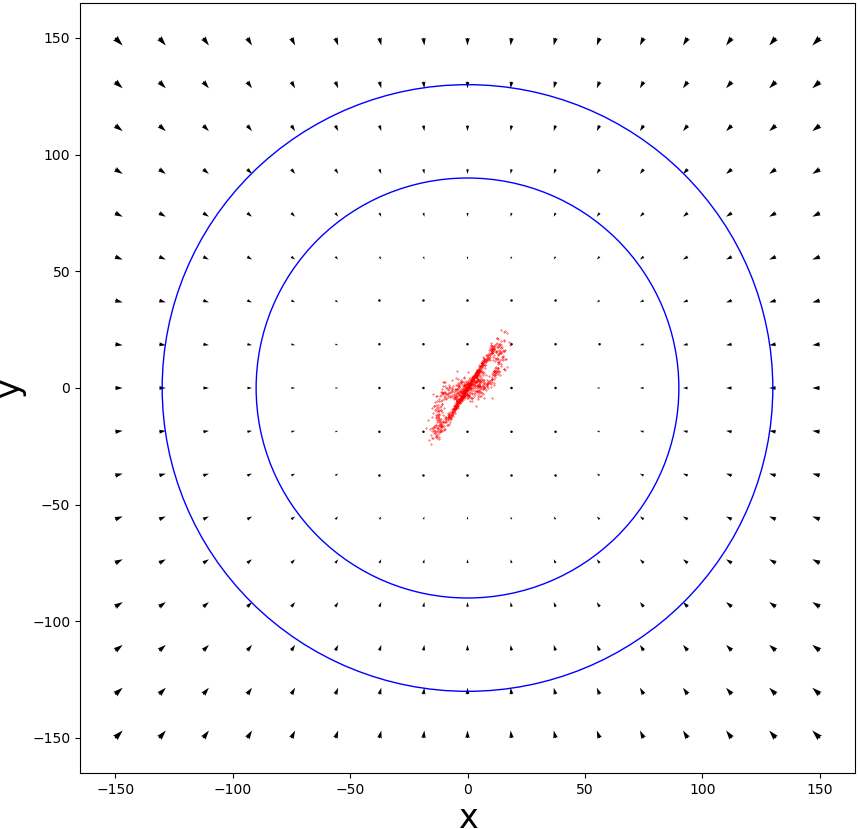}
        \caption{\rebuttal{Diss. regularization}}
        \label{fig:lorenz_shell_wDissipativity}
        \end{subfigure}%
        \begin{subfigure}{0.33\textwidth}
        \centering
        \includegraphics[width=0.9\textwidth]{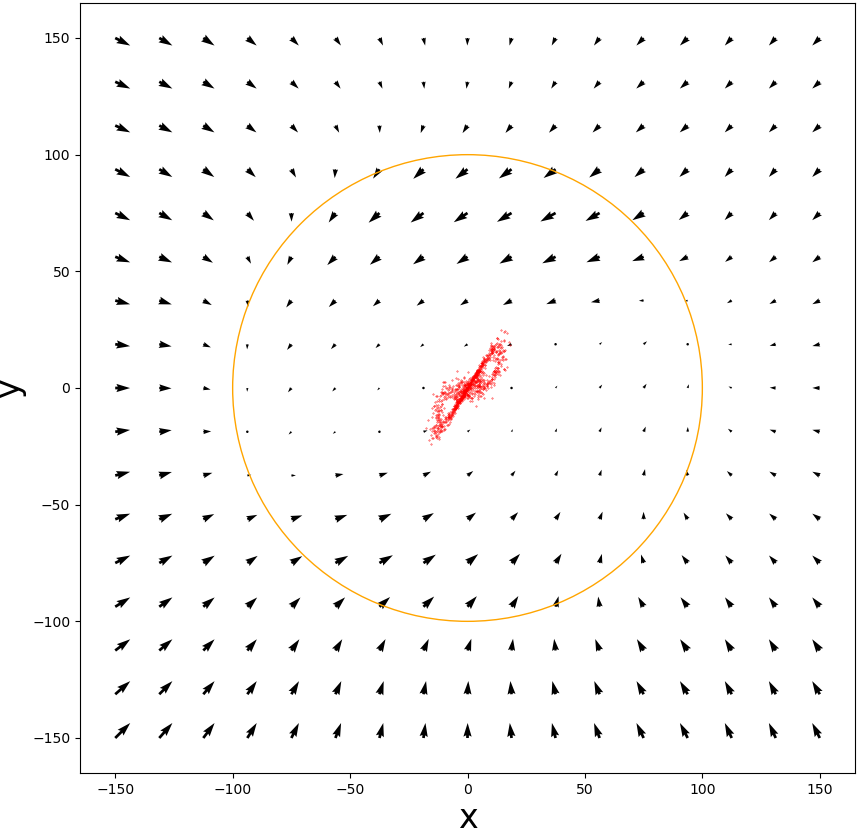}
        \caption{\rebuttal{Diss. post-processing}}
        \label{fig:lorenz_shell_wDissipativity_post_process}
        \end{subfigure}%
    \caption{Dissipativity \rebuttal{regularization} on the Lorenz 63 system  -- flow maps.}
    \label{fig:lorenz_shell}
    {\small Red points are training data on the attractor. The dissipativity regularization in (b) is imposed by augmenting the data on the blue shell. \rebuttal{(c) Post-processing with $\alpha = 100$ (the orange circle) and $\beta = 0.1$ as in eq.~\eqref{eq:dissipative_map}.}} 
    %Imposing dissipativity results in a learned dissipative solution operator.}
\end{figure}
%%
% \paragraph{Learning the Markov operator.}
Since the \rebuttal{solution} operator of the Lorenz-63 system is finite-dimensional, we learn it by training a feedforward neural network on a single trajectory with $h=0.05s$ on the Lorenz attractor. Figure \ref{fig:lorenz_shell} shows that dissipativity \rebuttal{regularization} produces predictions that isotropically point towards the attractor. Observe that the our network is also dissipative outside the \rebuttal{regularization} shell. 
%We conjecture that this is a property of ReLU networks that prevents model blow-up even in more difficult learning problems (see Figure~\ref{fig:NS-blowup}).

We empirically find that dissipativity can be \rebuttal{encouraged} without significantly affecting the model's step-wise \rebuttal{relative $L^2$} error and the learned \rebuttal{statistical} properties of the attractor. \rebuttal{We also find that varying the dissipativity hyperparameters (loss weight $\alpha$, scaling factor $\lambda$, and shell radius) does not significantly affect the step-wise or dissipative error. Details in} Appendix \ref{subsec:appendix_lorenz}.

%Further recent works provide Lorenz-specific study of partially observable equations~\citep{churchill22}.

\subsection{Kuramoto-Sivashinsky equation}
We consider the following one-dimensional KS equation,
% \begin{align}
% \begin{split}
%     \!\!\frac{\partial u}{\partial t} &\!=\! -u \frac{\partial u}{\partial x} - \frac{\partial^2 u}{\partial x^2}  - \frac{\partial^4 u}{\partial x^4}, \text{on } [0,L] \times (0, \infty),~ \text{with initial condition:}~
%     u(\cdot, 0) = u_0, \text{on } [0, L]\!\!
%     \end{split}
% \end{align}
\begin{align*}
    &\frac{\partial u}{\partial t} = -u \frac{\partial u}{\partial x} - \frac{\partial^2 u}{\partial x^2}  - \frac{\partial^4 u}{\partial x^4}, &\text{on } [0,L] \times (0, \infty),\\
    &u(\cdot, 0) = u_0, &\text{on } [0, L],
\end{align*}
\begin{figure}[t]
\centering
\begin{subfigure}{0.32\textwidth}
    \centering
    \includegraphics[width=0.85\textwidth]{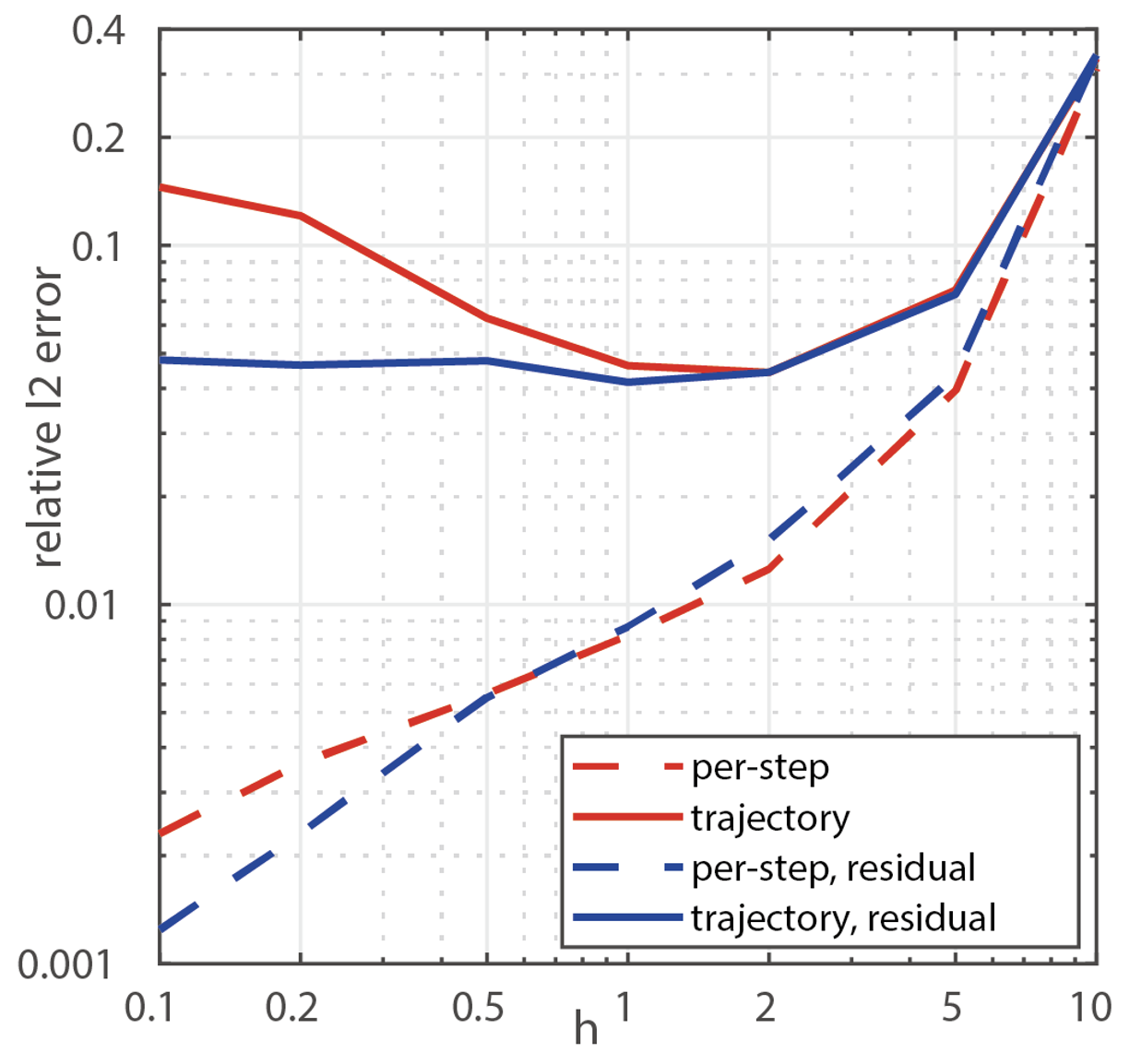}
    \caption{}
  \label{fig:dt}
\end{subfigure}%
\begin{subfigure}{0.33\textwidth}
    \centering
    \includegraphics[width=0.85\textwidth]{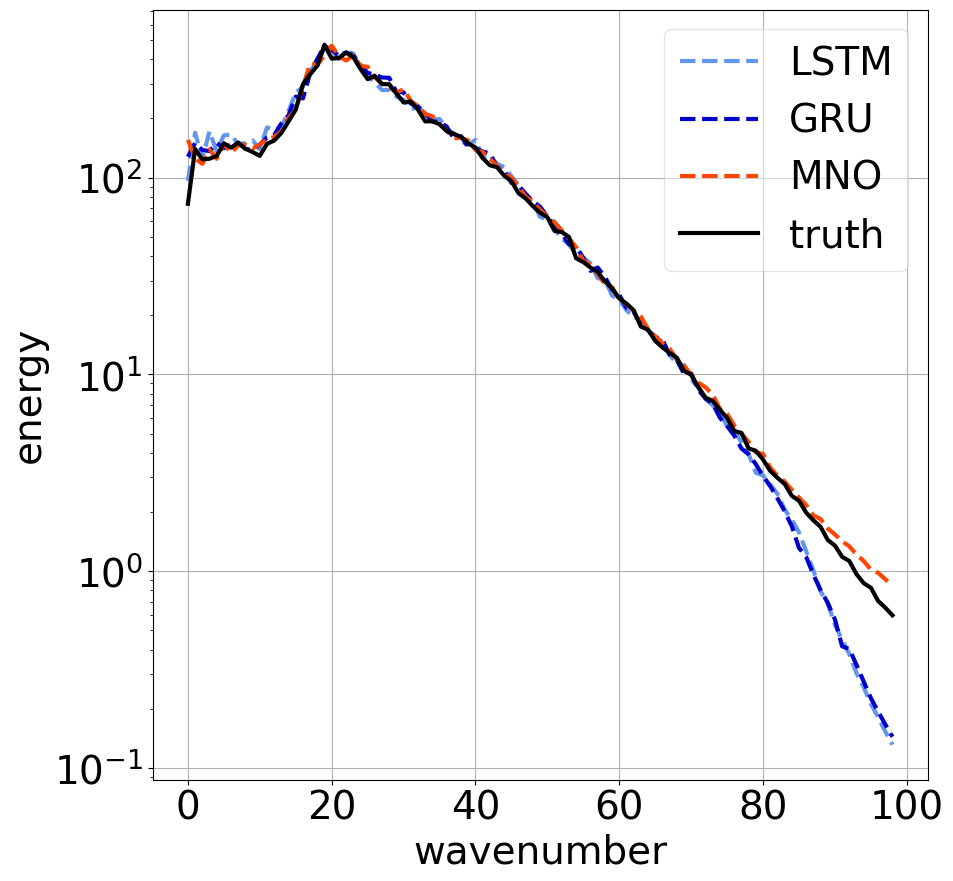}
    \caption{}
  \label{fig:KS_fourier}
\end{subfigure}%
\begin{subfigure}{0.33\textwidth}
    \centering
    \includegraphics[width=0.85\textwidth]{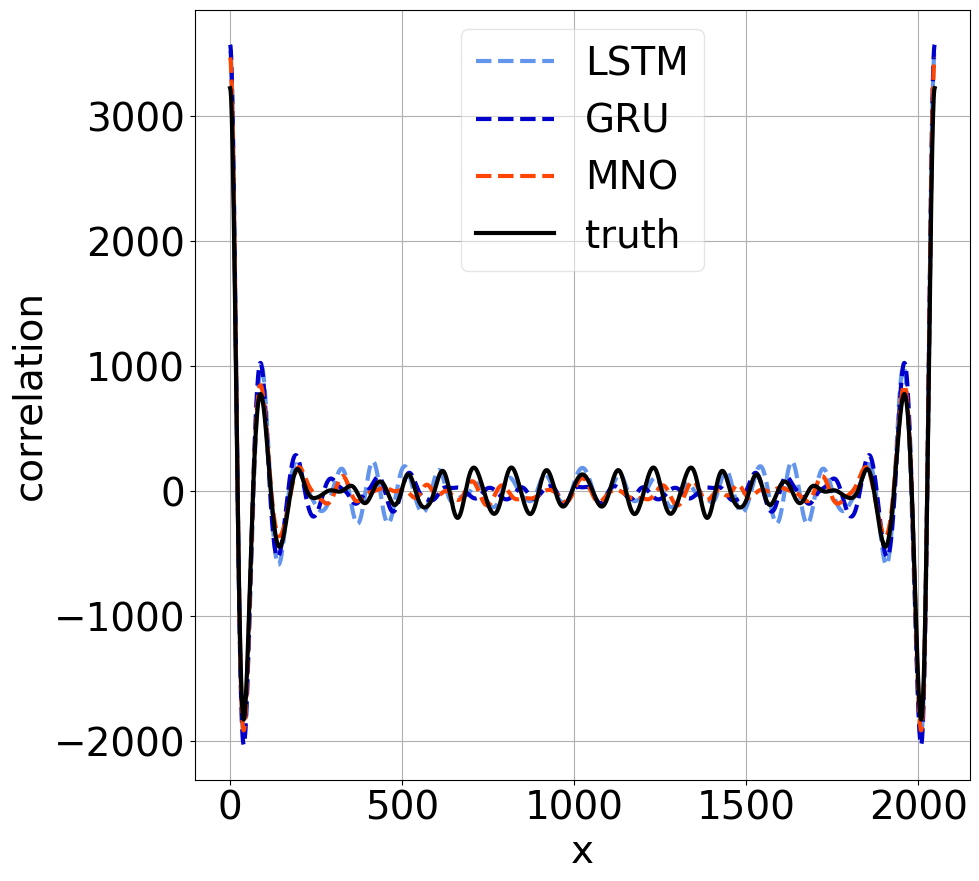}
    \caption{}
  \label{fig:KS_spatial_corr}
\end{subfigure}%
  \caption{ \rebuttal{Choice of time step (a), Fourier spectrum (b), and spatial correlation (c) for KS equations.}}
  
  {\small (a) \rebuttal{Both learning solution operator and learning identity residual} induce smaller per step error for smaller $h$. When learned models are composed to generate longer trajectories, we observe that learning residual is advantageous. (b) Fourier spectrum of the predicted attractor. All models capture Fourier modes with magnitude larger than $O(1)$, while MNO is more accurate on the tail. (c) Spatial correlation of the attractor, averaged in time. MNO is more accurate on near-range correlation, but all models miss long-range correlation.}
\end{figure}

where the spatial domain \([0,L]\) is equipped with periodic boundary conditions. We study the impact the time step $h$ has on learning. Our study shows that when the time steps are too large, the correlation is chaotic and hard to capture. But counter-intuitively, when the time steps are too small, the evolution is also hard to capture. In this case, the \rebuttal{model's} input and output \rebuttal{are} very close, and the identity map will be a local minimum. We thus use the MNO to also learn the time-derivative or residual. 
Figure~\ref{fig:dt} shows the results for varying $h$ and when MNO is used to learn either the identity residual or the \rebuttal{solution} operator. We observe that the residual model has a better per-step error and accumulated error at smaller $h$. When the time step is large, there is no difference in modeling the residual. 
%This idea can generalize to other integrators as an extension of Neural ODEs to PDEs \citep{chen2018neural}.

% \paragraph{Invariant statistics for the KS equations}
As shown in Figures \ref{fig:KS_fourier} and \ref{fig:KS_spatial_corr}, we compare the performance of the MNO model against LSTM and GRU that we use to model the evolution operator of the KS equation with $h=1s$. We observe that MNO model accurately recovers the Fourier spectrum of KS equation. For \rebuttal{all} other statistics \rebuttal{(see Appendix~\ref{appx:KS})} all models perform similarly, except on the velocity distribution where MNO \rebuttal{performs best}. We emphasize that some \rebuttal{of} these statistics are very challenging to capture and most machine learning approaches in the literature thus far fail to do so (see Appendix~\ref{appx:KS} for details).

\subsection{Kolmogorov Flow}
\begin{figure*}[t]
\centering
\hspace{-.5cm}
\begin{subfigure}{0.25\textwidth}
    \centering
    \includegraphics[width=1\textwidth]{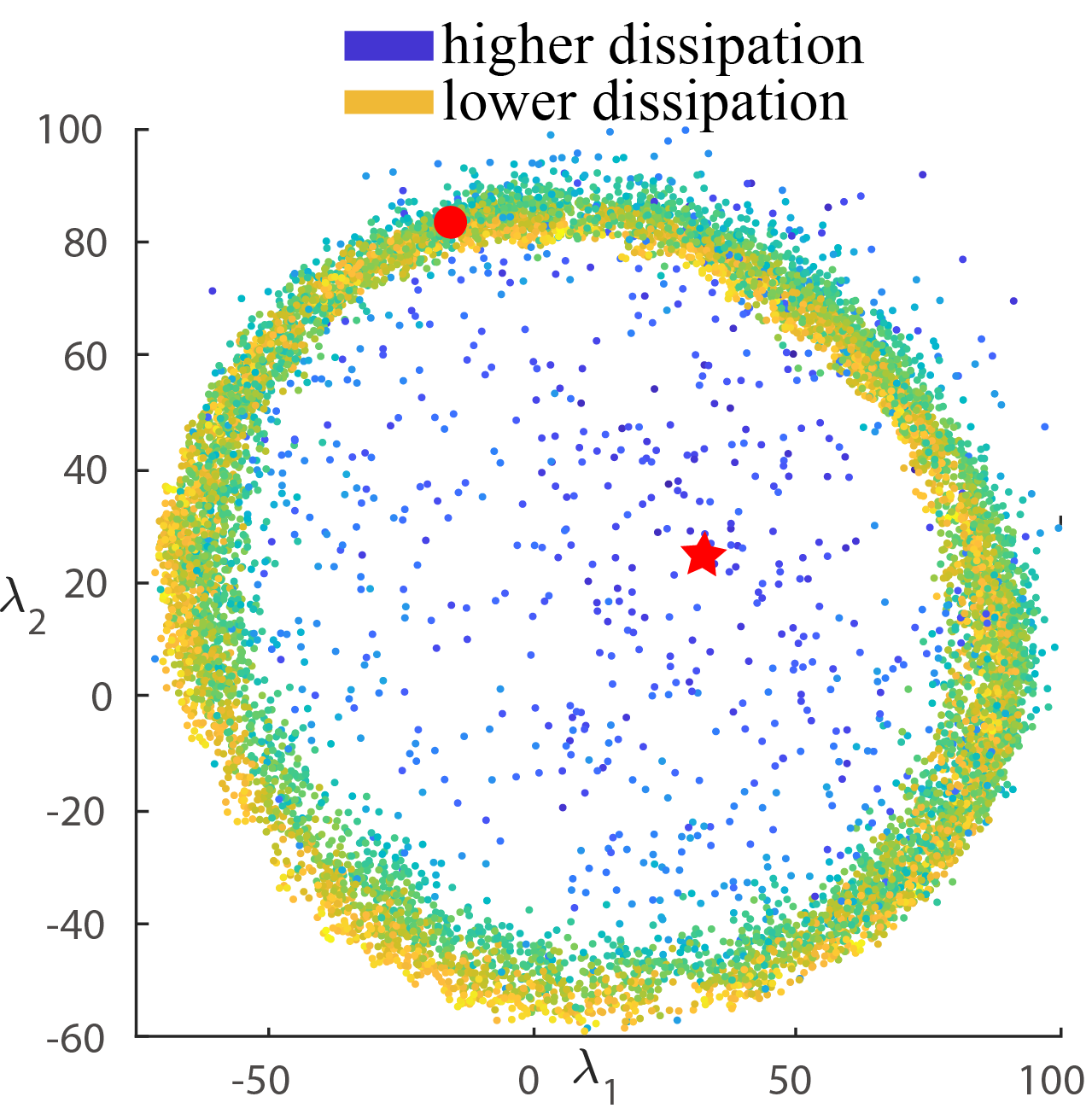}
    \caption{}
  \label{fig:NS-attractor}
\end{subfigure}%
\begin{subfigure}{0.25\textwidth}
    \centering
    \includegraphics[width=1.15\textwidth]{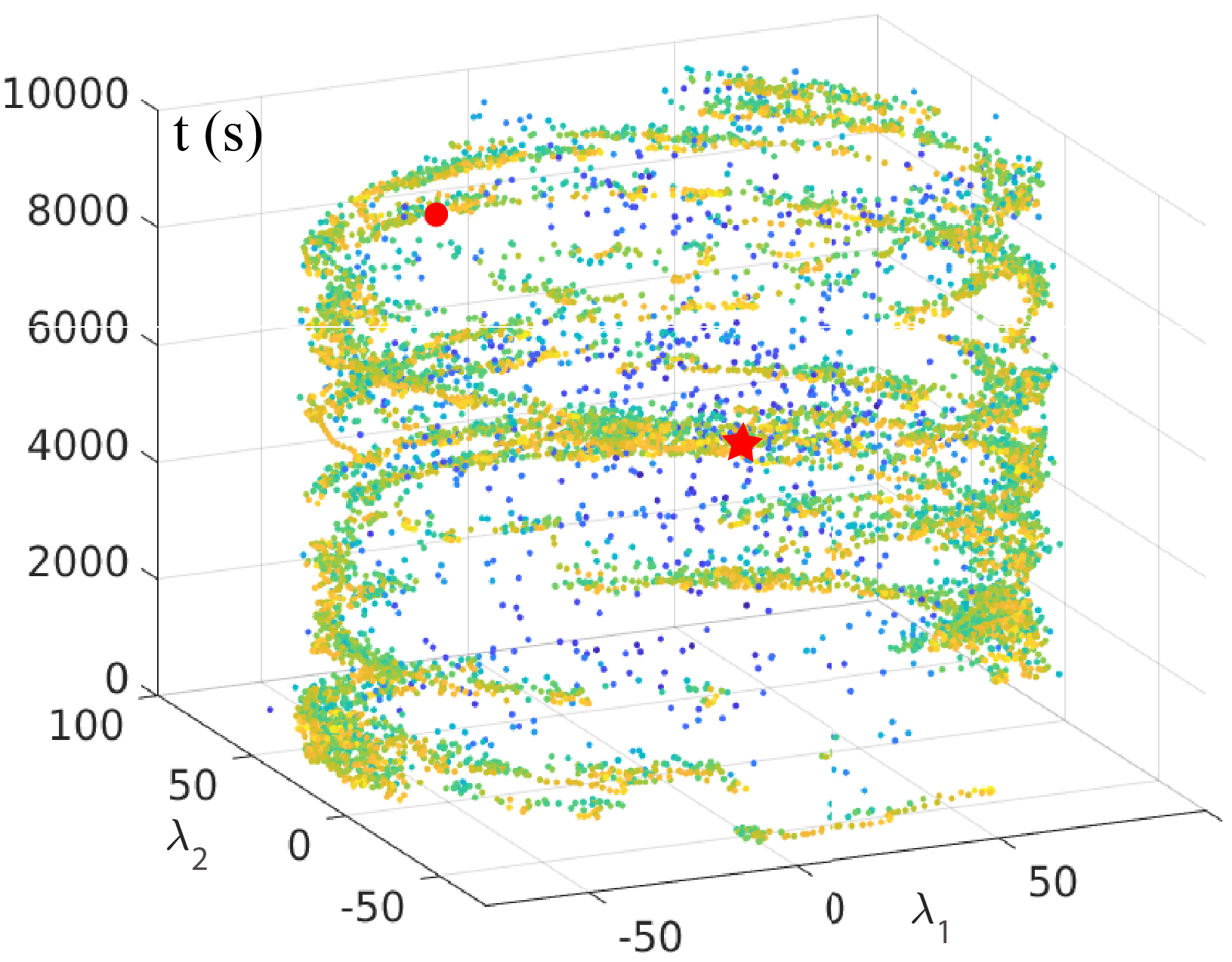}
    \caption{}
  \label{fig:NS-attractor2}
\end{subfigure}%
\hspace{-.3cm}
\begin{subfigure}{0.25\textwidth}
    \centering
    \includegraphics[width=0.56\textwidth]{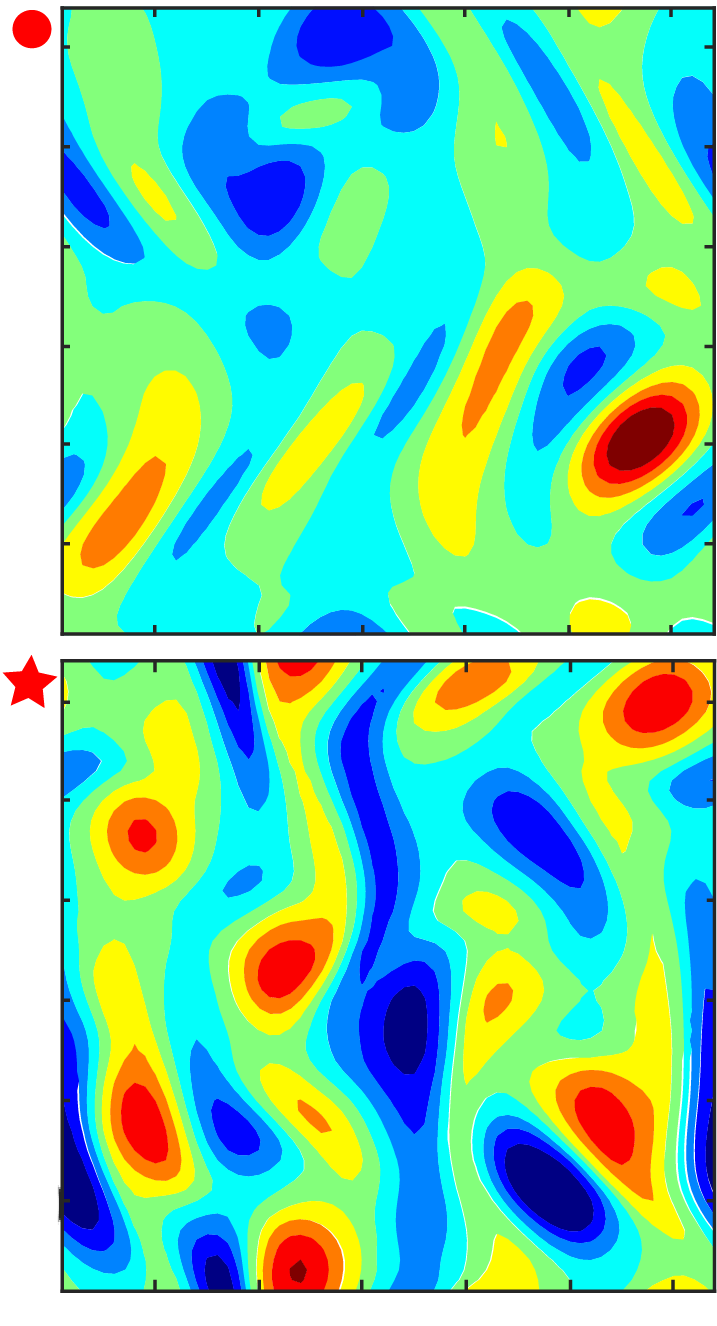}
    \caption{}
  \label{fig:NS-attractor-snapshots}
\end{subfigure}%
\hspace{-.5cm}
  \begin{subfigure}{0.25\textwidth}
    \centering
    \includegraphics[width=1.15\textwidth]{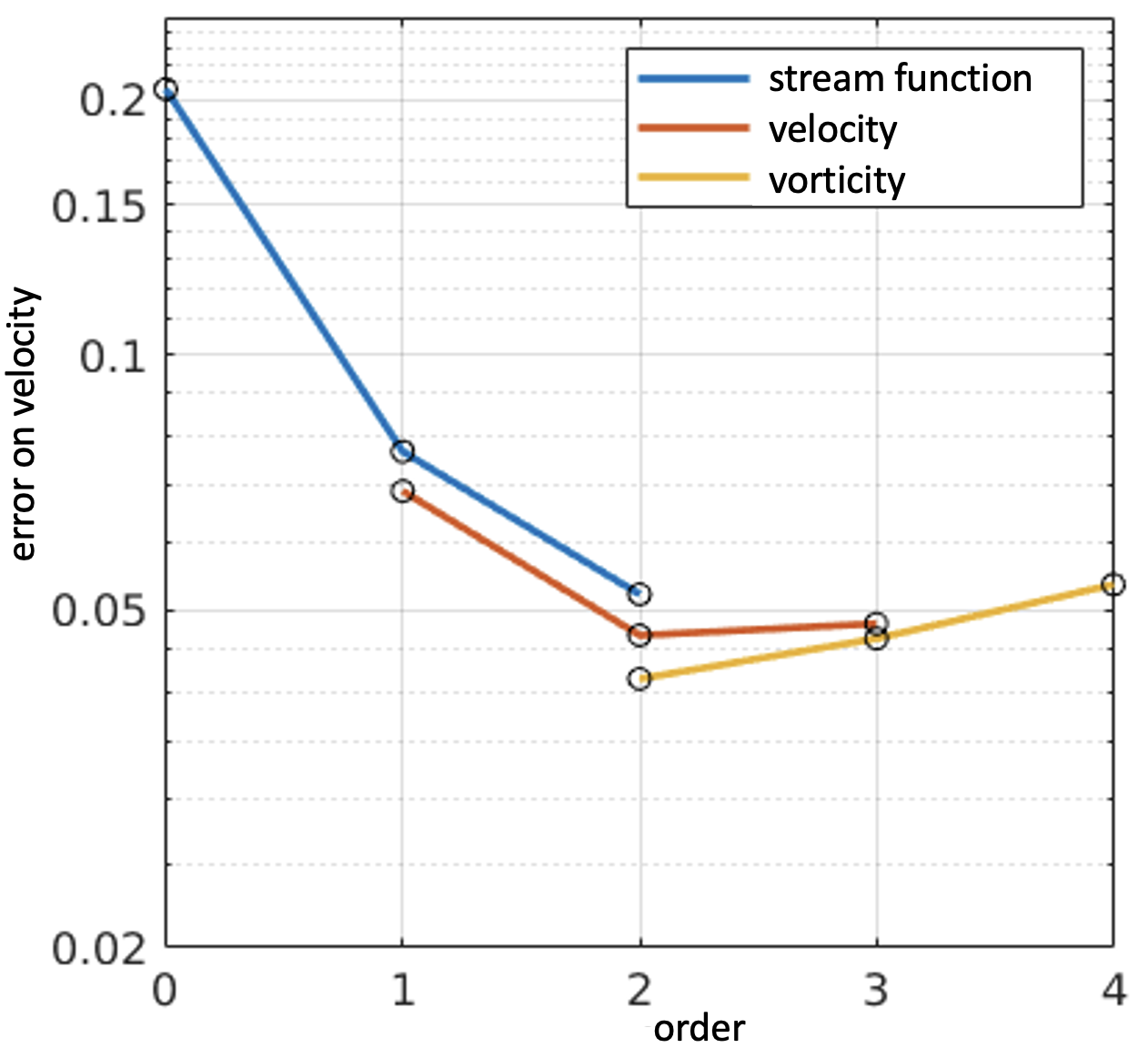}
    \caption{}
  \label{fig:NS_order}
\end{subfigure}%
\caption{The learned attractor of Kolmogorov flow ($Re=500$) and choice of Sobolev loss for KF.} 
{\small
    (a-c) The $10000$ time steps trajectory generated by MNO projected onto the first two \rebuttal{principal components}. 
    Each point corresponds to a snapshot on the attractor. \rebuttal{The vorticity field for two points is shown}. (d) \rebuttal{Velocity error for} models trained on stream function, velocity, and vorticity using Sobolev loss of different orders. 
    %Sobolev norm of second order provides best performance across different learning regimes.
}
\end{figure*}

We consider two-dimensional Kolmogorov flow (a form of the Navier-Stokes equations) for a viscous, incompressible fluid,
\iffalse
\begin{align}
\begin{split}
\frac{\partial u}{\partial t} &= - u \cdot \nabla u - \nabla p + \frac{1}{Re} \Delta u + \sin (ny) \hat{x}, \qquad \text{on } [0,2\pi]^2 \times (0,\infty) \\
\nabla \cdot u &= 0 \qquad \qquad \qquad \qquad \qquad \qquad \qquad \qquad \quad   \text{on } [0,2\pi]^2 \times [0,\infty) \\
u(\cdot,0) &= u_0 \qquad \qquad \qquad \qquad \qquad \qquad \qquad \qquad \:\: \text{on } [0,2\pi]^2
\end{split}
\end{align}
\fi
\begin{equation}
\frac{\partial u}{\partial t} = - u \cdot \nabla u - \nabla p + \frac{1}{Re} \Delta u + \sin (ny) \hat{x}, \qquad \nabla \cdot u = 0, \qquad \text{on } [0,2\pi]^2 \times (0,\infty) 
\end{equation}
with initial condition $u(\cdot,0) = u_0$ where \(u\) denotes the velocity, \(p\) the pressure, and \(Re > 0\) is the Reynolds number. 
We \rebuttal{encourage} dissipativity during training with the criterion described in eq.~\eqref{def:diss}, with $\lambda = 0.5$ and $\nu$ being a uniform probability distribution supported on a shell around the origin. We test the effect of \rebuttal{encouraging} dissipativity in the turbulent (and blow-up prone) $Re=500$ setting, where we observe that a non-dissipative model blows up when composed with itself multiple times if the initial condition is perturbed slightly from the attractor (Figure \ref{fig:NS-blowup}), even though the model achieves relatively low $L^2$ error. In contrast, we empirically observe that the dissipative MNO does not blow up and its composed predictions returns to the attractor even when the initial condition is perturbed.

\begin{figure*}[t]
   \centering
   \includegraphics[width=\textwidth]{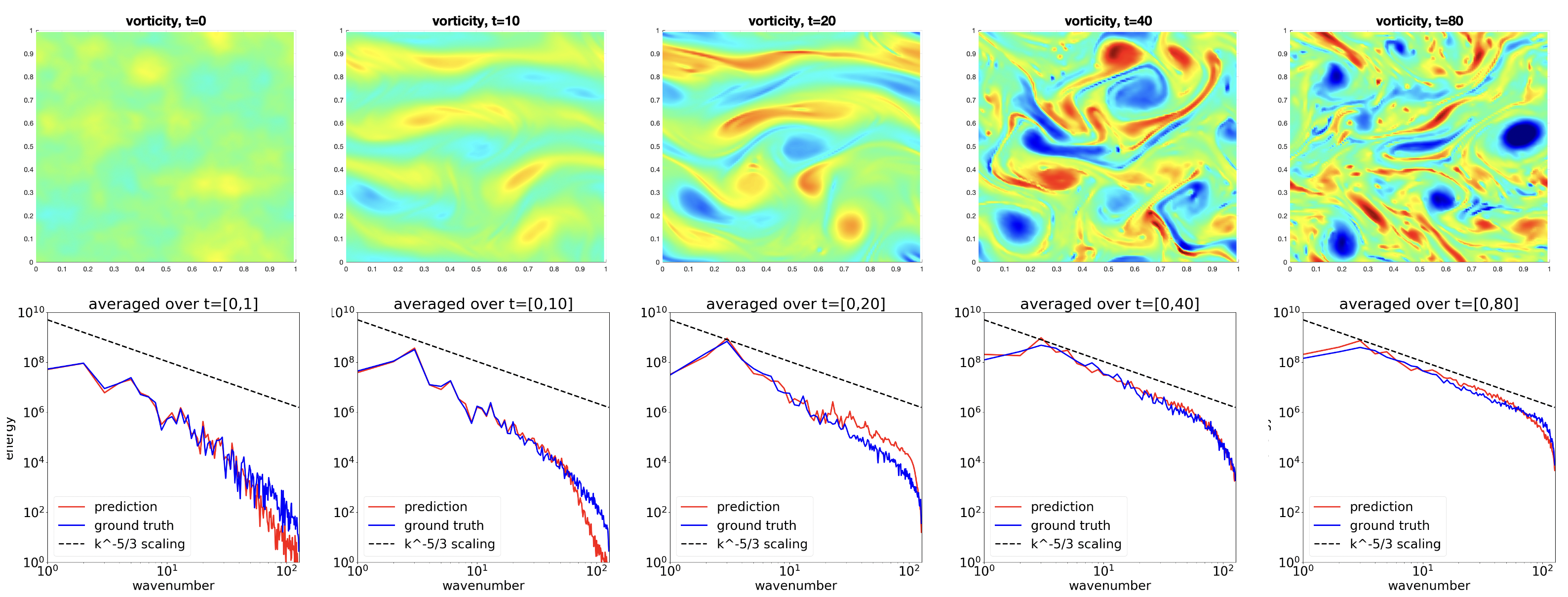}
   \caption{\rebuttal{Simulation of Kolmogorov flow ($Re=5000$) with the dissipative MNO model.}}
   {\small \rebuttal{The first column corresponds to the initial condition and is sampled from a Gaussian random field. Around $t=10$ to $t=20$, we see the energy injected from the source term $\sin{(4y)}$ \eqref{eq:kf}, and also the energy transfers from higher frequencies to the lower frequencies. The dotted line is the Kolmogorov energy cascade rate.}}
   \label{fig:re5000}
\end{figure*}

\rebuttal{
\paragraph{Simulation on turbulent $Re=5000$ case.}
The proposed dissipative MNO model can stably learn complex chaotic dynamics at high Reynolds numbers. As shown in Figure \ref{fig:re5000}, the model captures the energy spectrum that converges to the Kolmogorov energy cascade rate of  $k^{-5/3}$. The learned model is dissipative with the dissipative regularization, without compromising the accuracy as shown in Table \ref{table:re5000}. Even given a  $O(10^4)$-scaled perturbation, the dynamic quickly returns to the attractor of the system. In contrast, the model without dissipative regularization will stay outside the attractor given even a $O(10)$-scale perturbation.}

\paragraph{Accuracy with respect to various norms.}
We study performance of MNO along with other predictive architectures, U-Net \citep{ronneberger2015u} and LSTM-CNN \citep{shi2015convolutional},
on modeling the vorticity $w$ in the relatively non-turbulent $Re = 40$ case~\ref{table:vorticity}. 
The MNO achieves one order of magnitude better accuracy compared to this architectures. As shown in Table \ref{table:vorticity}, we train each model using the balanced \(L^2 (=H^0)\), \(H^1\), and \(H^2\) losses, defined as the sum of the relative \(L^2\) loss grouped by each order of derivative. 
We measure the error with respect to the standard (unbalanced) norms. The MNO with \(H^2\) loss consistently achieves the smallest error on vorticity for all of the \(L^2\), \(H^1\), and \(H^2\) norms. However, the \(L^2\) loss achieves the smallest error on the turbulence kinetic energy (TKE), while the \(H^1\) loss achieves the smallest error on the dissipation $\epsilon$. 
\rebuttal{Vorticity fields for NS equation ($Re=40$) are shown in Figure \ref{fig:re40}.} The figure indicates that the predicted trajectories (b) (c) (d) share the same behaviors as in the ground truth (a), indicating that the MNO model is stable.

\paragraph{Estimating the attractor and invariant statistics.}
We compose MNO $10000$ times to obtain the global attractor, and we compute the PCA (POD) basis of these $10000$ snapshots and project them onto the first two components. As shown in \rebuttal{Figure~\ref{fig:NS-attractor}}, we obtain a cycle-shaped attractor. 
The true attractor has dimension on the order of \rebuttal{$O(100)$} \citep{temam2012infinite}. If the attractor is a high-dimensional sphere, then most of its mass concentrates around the equator. Therefore, when projected to low-dimension, the attractor will have a ring shape.
We see that most points are located on the ring, while \rebuttal{few} points are located in the center. The points in the center have high dissipation, implying they are intermittent states. In \rebuttal{Figure~\ref{fig:NS-attractor2}} we add the time axis. While the trajectory jumps around the cycle, we observe there is a rough period of $2000$s. We perform the same PCA on the training data, showing the same behavior.
Furthermore, in Figure \ref{fig:NS-appendix}, we present invariant statistics for the \rebuttal{NS} equation ($Re=40$), computed from a large number of samples from our approximation of the invariant measure. We find that the MNO consistently outperforms all other models in accurately capturing these statistics.

\paragraph{Derivative orders.}
Roughly speaking, vorticity is the derivative of velocity; velocity is the derivative of the stream function. Therefore we can denote the order of derivative of vorticity, velocity, and stream function as $2$, $1$, and $0$ respectively. Combining vorticity, velocity, and stream function,  with \(L^2\), \(H^1\), and \(H^2\) loss, we have in total the order of derivatives ranging from $0$ to $4$. We observe, in general, it is best practice to keep the order of derivatives in the model at a number slightly higher than that of the target quantity. For example, as shown in Figure \ref{fig:NS_order}, when querying the velocity (first-order quantity), it is best to use second-order (modeling velocity plus \(H^1\) loss or modeling vorticity plus \(L^2\) loss). This is further illustrated in Table \ref{table:re500} for the $Re=500$ case.
In general, using a higher order of derivatives as the loss will increase the power of the model and capture the invariant statistics more accurately. However, a higher-order of derivative means higher irregularity. It in turn requires a higher resolution for the model to resolve and for computing the discrete Fourier transform. This trade-off again suggests it is best to pick a Sobolev norm not too low or too high.

\section{Conclusion}
\label{sec:discussion_appendix}
In this work, we propose a machine learning framework that trains from local data and enforces dissipative dynamics. Experiments also show MNO predicts the attractor which shares the same distribution and invariant statistics as the true function space trajectories. The simulations achieved by the MNO have the potential to further our understanding of many physical phenomena and the mathematical models that underlie them. 

Furthermore, the MNO shows great potential for modeling partially observed systems or studying systems that exhibit bifurcations, both of which are of great interest for engineering applications. The study of non-ergodic dissipative systems and their basins of attraction is also an important direction for future study.

\newpage
\section*{Acknowledgements}
Z. Li gratefully acknowledges the financial support from the Kortschak Scholars, PIMCO Fellows, and Amazon AI4Science Fellows programs.
M. Liu-Schiaffini is supported by the Stephen Adelman Memorial Endowment.
A. Anandkumar is supported in part by Bren endowed chair.
K. Bhattacharya, N. B. Kovachki, B. Liu,  A. M. Stuart gratefully acknowledge the financial support of the Army Research Laboratory through the Cooperative Agreement Number W911NF-12-0022.  A. M. Stuart is also grateful to the US Department of Defense for 
support as a Vannevar Bush Faculty Fellow. Research was sponsored by the Army Research Laboratory and was accomplished under Cooperative Agreement Number W911NF-12-2-0022. A part of this work took place when K. Azizzadenesheli was at Purdue University.
The views and conclusions contained in this document are those of the authors and should not be interpreted as representing the official policies, either expressed or implied, of the Army Research Laboratory or the U.S. Government. The U.S. Government is authorized to reproduce and distribute reprints for Government purposes notwithstanding any copyright notation herein. 
\bibliographystyle{unsrt}
\bibliography{ref}

% \newpage
% \input{sections/checklist}

\appendix

\section{Full Experiments}
\label{sec:appendix}
In this section, we present the full numerical experiments on the Lorenz-63 system, the Kuramoto-Sivashinsky equation, and the Kolmogorov flow.

\subsection{Lorenz-63}
\label{subsec:appendix_lorenz}

The Lorenz-63 system is three dimensional ODE system:
\begin{align*}
    \dot{u}_x &= \alpha (u_y - u_x),\\
    \dot{u}_y &= -\alpha u_x - u_y - u_x u_z,\\
    \dot{u}_z &= u_x u_y - b u_z - b(r+\alpha).
\end{align*}
In this paper, we use the canonical parameters $(\alpha, b ,r) = (10, 8/3, 28)$ \citep{lorenz1963deterministic}.

\paragraph{Learning the \rebuttal{solution} operator.}
We use a simple feedforward neural network with 6 hidden layers and 150 neurons per layer to learn the \rebuttal{solution} operator for the Lorenz-63 system. We discretize the training trajectory into time-steps of 0.05 seconds.
\paragraph{\rebuttal{Dissipativity regularization.}}
We enforce dissipativity during training with the criterion described in eq. \ref{def:diss}, with $\lambda = 0.5$ and $\nu$ being a uniform probability distribution supported on a shell around the origin with inner radius $90$ and outer radius $130$.

\begin{figure}[t]
  \centering
  \begin{subfigure}{0.50\textwidth}
    \centering
    \includegraphics[width=\textwidth]{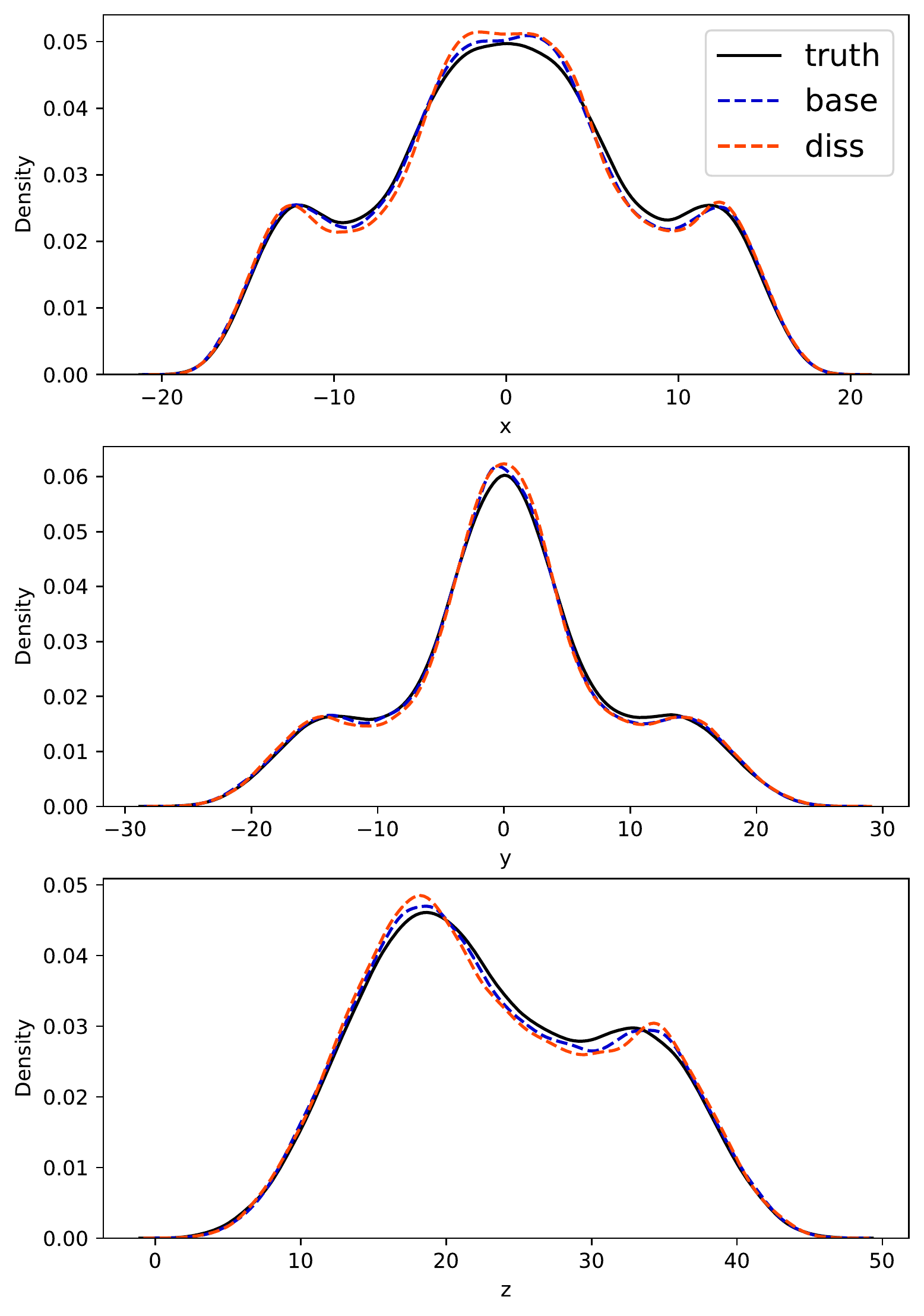}
    \caption{Density plot with linear axes}
  \end{subfigure}%
  \begin{subfigure}{0.50\textwidth}
    \centering
    \includegraphics[width=\textwidth]{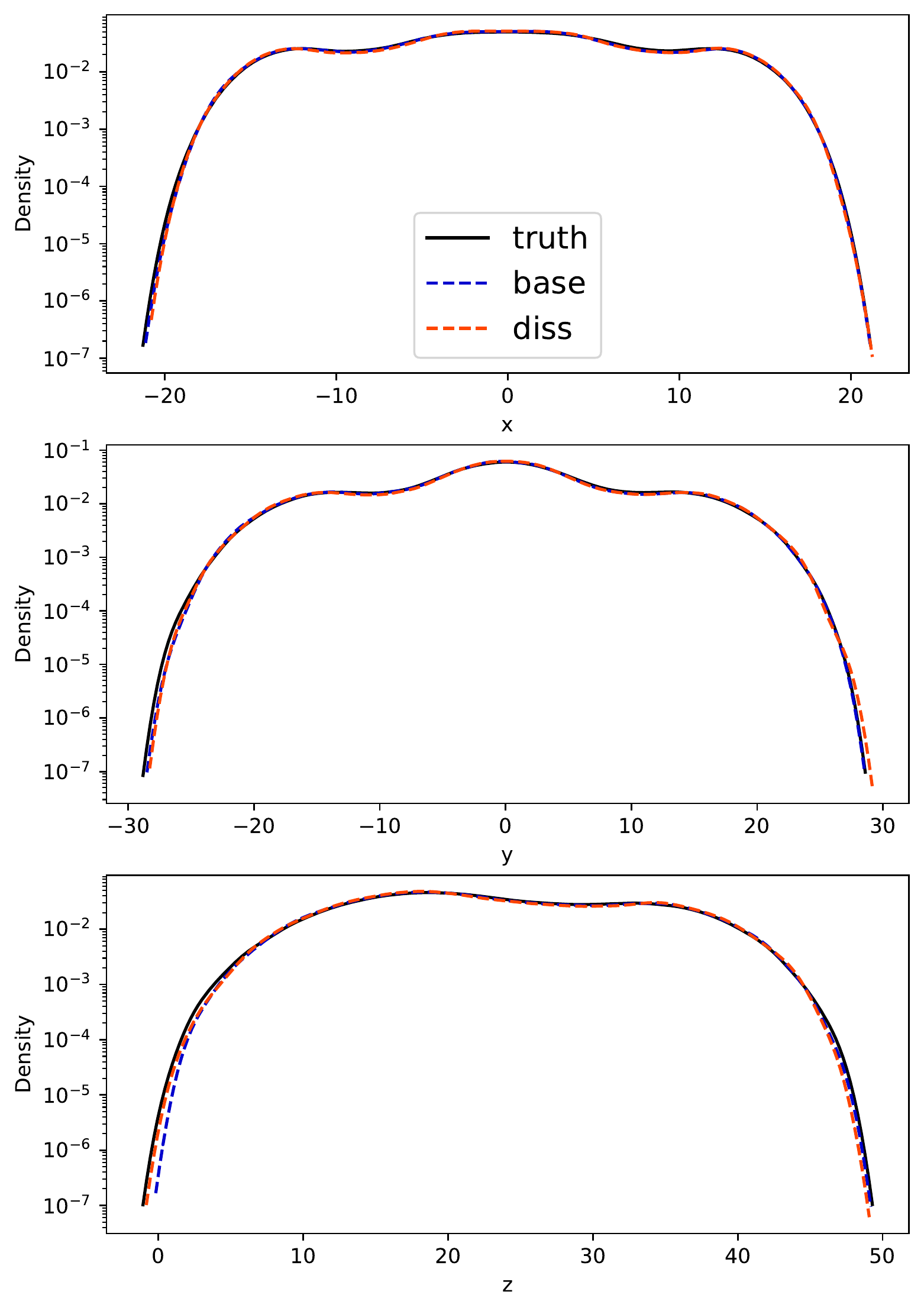}
    \caption{Density plot with semi-log axes}
  \end{subfigure}%
  \caption{Lorenz-63 density plots of $x,y,z$-coordinates for the baseline network (no dissipativity enforced) and the MNO (dissipativity enforced).}
  \label{fig:lorenz_hist}
\end{figure}

\paragraph{Evaluation metrics.}
We evaluate the effectiveness of our technique for enforcing dissipativity in three ways:
\begin{itemize}
    \item \textbf{Relative L2 error:} We evaluate the learned \rebuttal{solution} operators on relative (i.e., normalized) L2 error over 1 second by composing the model 20 times to gauge their ability to predict longer trajectories. As seen in Table \ref{table:lorenz}, enforcing dissipativity does not cause a decrease in relative L2 error when compared to the baseline network, even though both models have the same number of parameters.
    \item \textbf{Vector field far from the attractor:} We also qualitatively evaluate the learned vector fields far from the attractor both with and without enforced dissipativity. We observe that enforcing dissipativity produces predictions that isotropically point towards the attractor, implying that the attractive properties of the Lorenz attractor are learned in the process. See Figure \ref{fig:lorenz_diss_flow}. Observe that the dissipative network is also dissipative outside the shell in which dissipativity was enforced during training.
    \item \textbf{Invariant statistics of the attractor:} To justify learning the Markovian map between time-steps, we also compare the coordinate histograms of the ground-truth and the learned attractors (with and without dissipativity enforced) after composing for 200000 time steps. Both models match the ground-truth coordinate histograms. Since the coordinate-wise histograms of the models matches the ground-truth, this indicates that the models are learning the distribution of the attractor and thus the invariant measure of the system.
\end{itemize}

\begin{figure}[t]
  \centering
  \begin{subfigure}{0.33\textwidth}
    \centering
    \includegraphics[width=\textwidth]{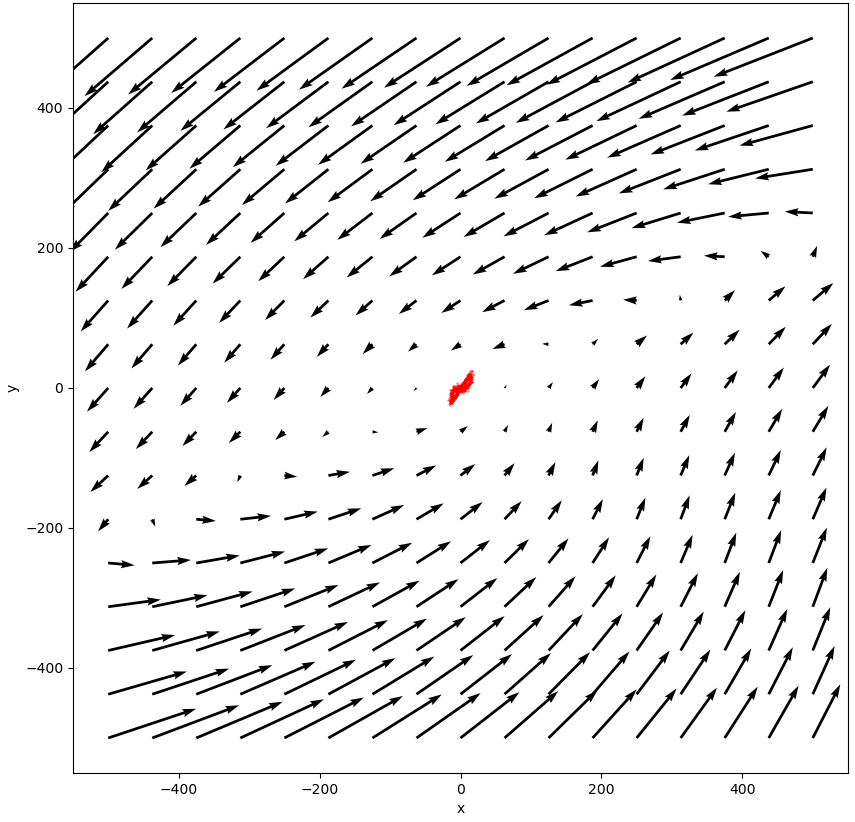}
      \caption[Image 1]{$xy$-plane at $z=0$}
  \end{subfigure}%
  \begin{subfigure}{0.33\textwidth}
    \centering
    \includegraphics[width=\textwidth]{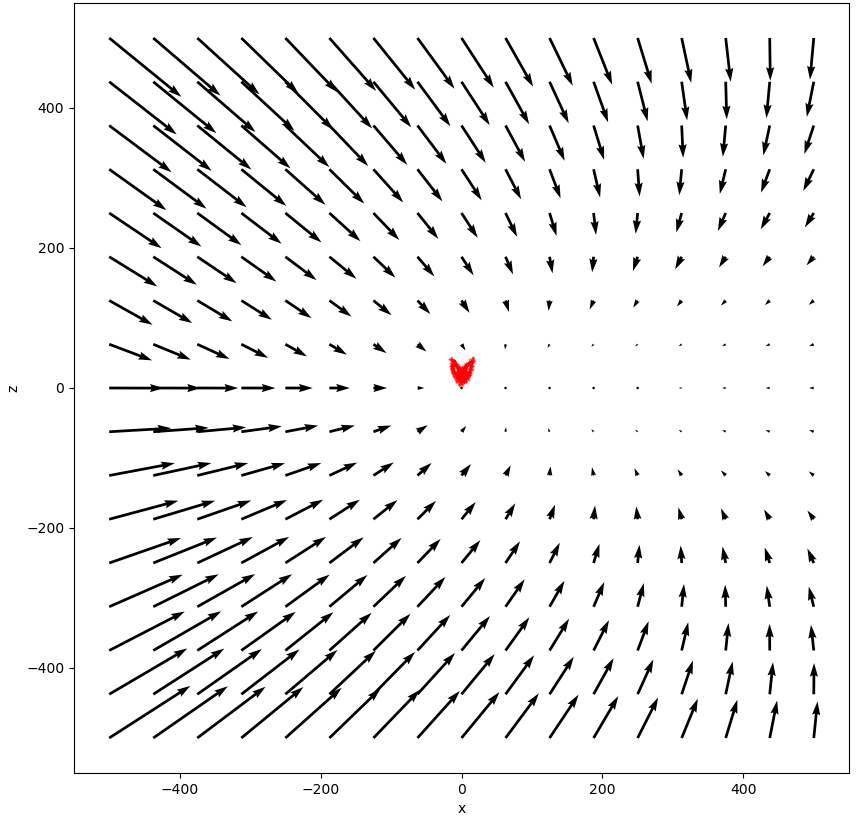}
      \caption[Image 2]{$xz$-plane at $y=0$}
  \end{subfigure}%
  \begin{subfigure}{0.33\textwidth}
    \centering
    \includegraphics[width=\textwidth]{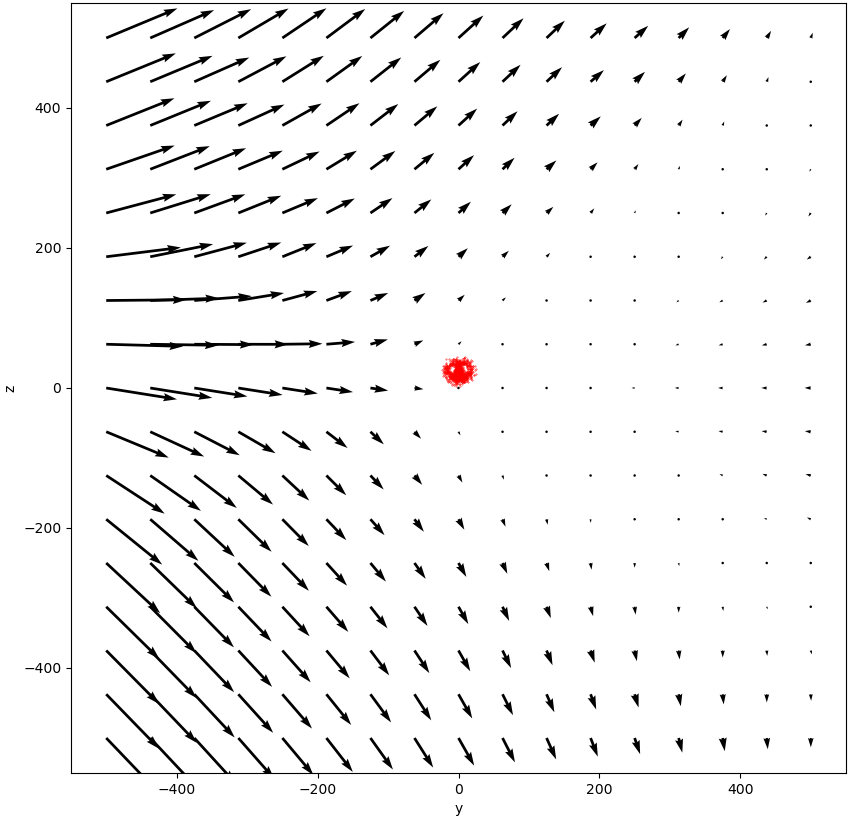}
      \caption[Image 3]{$yz$-plane at $x = 0$}
  \end{subfigure}%
%   \caption[Images]{Predicted flow field of baseline network on Lorenz-63. The red points are training points on the attractor.}
%   \label{fig:lorenz_no_diss_flow}
% \end{figure}

% \begin{figure}[t]
%   \centering
  \begin{subfigure}{0.33\textwidth}
    \centering
    \includegraphics[width=\textwidth]{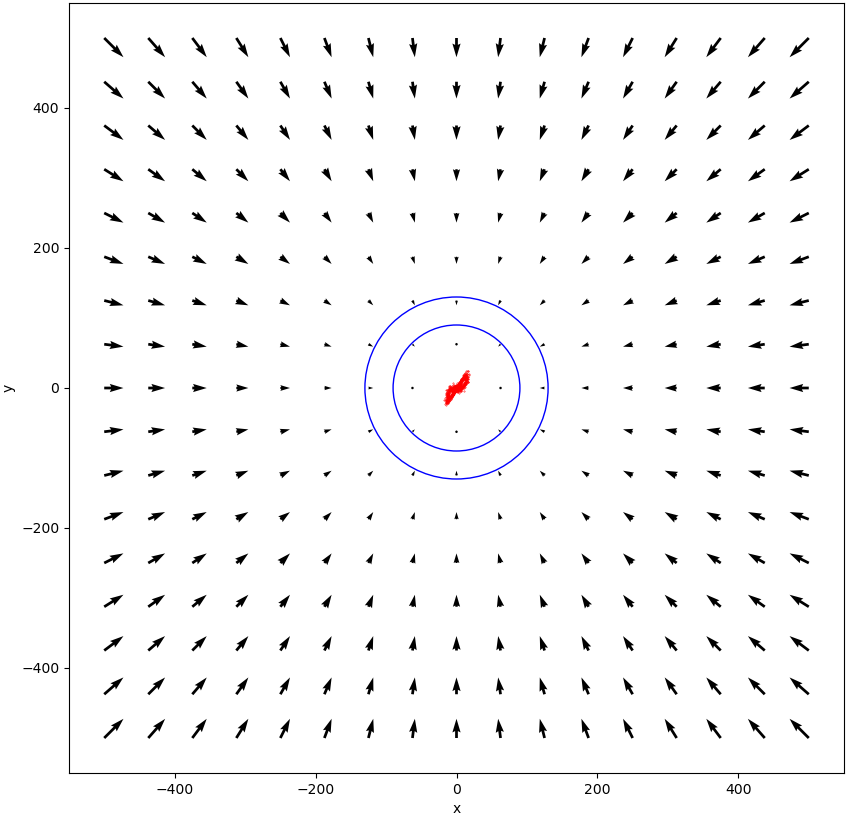}
      \caption[Image 1]{$xy$-plane at $z=0$}
  \end{subfigure}%
  \begin{subfigure}{0.33\textwidth}
    \centering
    \includegraphics[width=\textwidth]{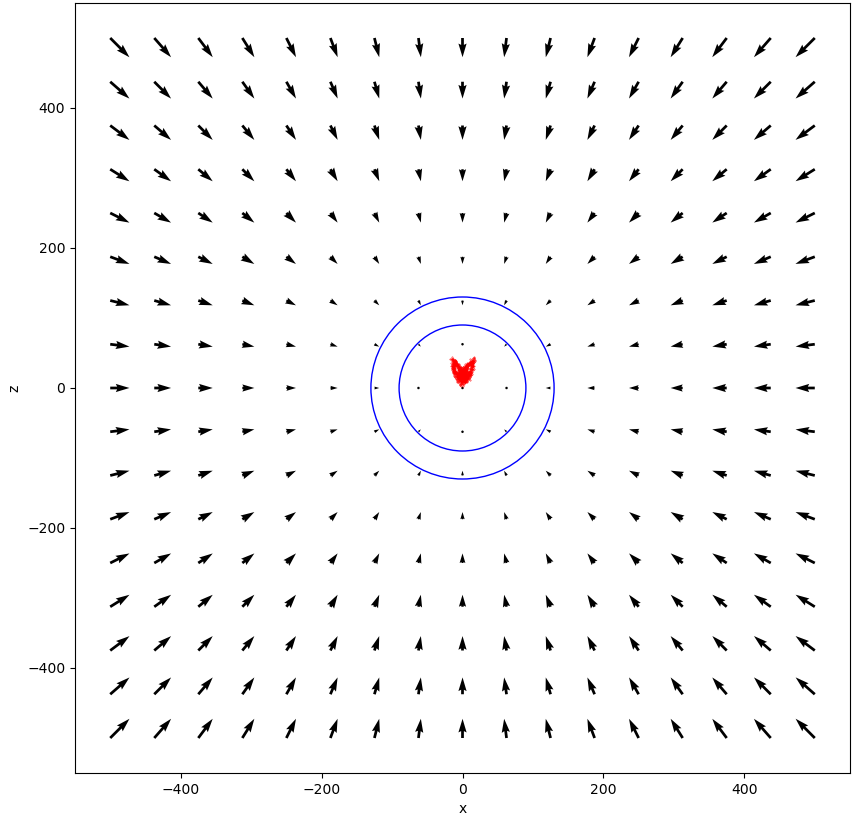}
      \caption[Image 2]{$xz$-plane at $y=0$}
  \end{subfigure}%
  \begin{subfigure}{0.33\textwidth}
    \centering
    \includegraphics[width=\textwidth]{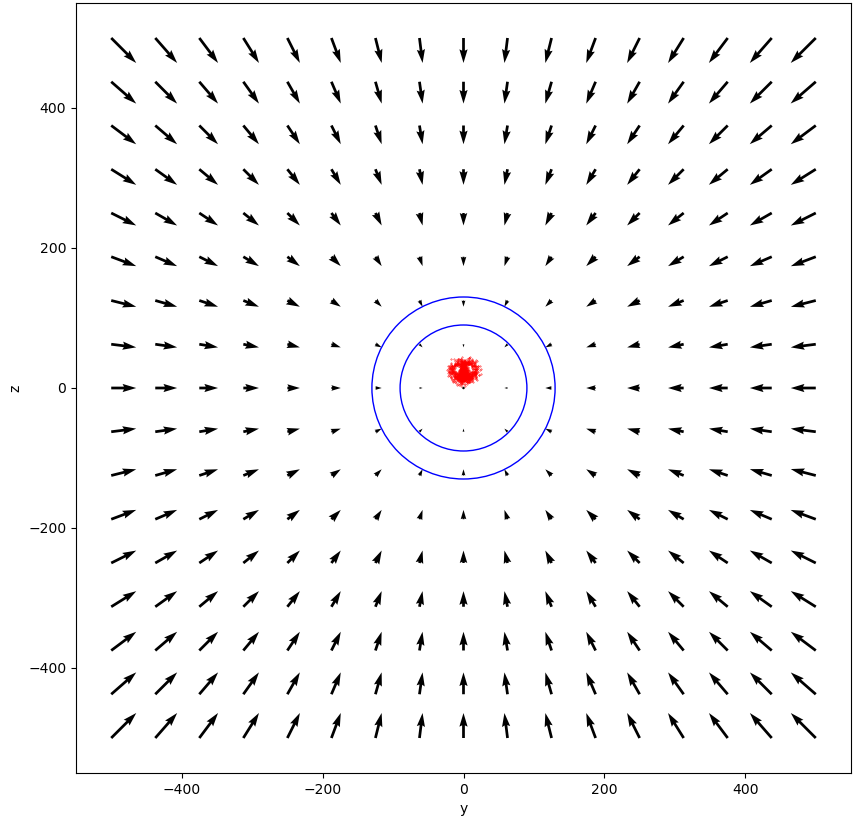}
      \caption[Image 3]{$yz$-plane at $x = 0$}
  \end{subfigure}%
  \caption[Images]{Predicted flow field of dissipative network on Lorenz-63. The red points are training points on the attractor. Dissipativity is enforced uniformly within the blue shell.}
  \label{fig:lorenz_diss_flow}
\end{figure}

\begin{table*}
\begin{center}
\begin{tabular}{l|ccc}
\multicolumn{1}{c}{\bf Model}
&\multicolumn{1}{c}{\textbf{Per-step error}}
&\multicolumn{1}{c}{\textbf{Per-second error}}
&\multicolumn{1}{c}{\textbf{Dissipativity error}} \\
\hline 
\hline 
\rule{0pt}{1em} No diss. enforced & $0.000570$ & $0.0300$ & - \\
Diss. enforced & $0.000564$ & $0.0264$ & $0.000667$ \\
\hline
\hline 
\end{tabular}
\end{center}
\caption{Relative $L^2$ error rates on the Lorenz-63 system. Per-step error is the error on the time-scale used in training. Per-second error is the error of the model composed with itself $21$ times. Dissipativity error is the error of following the enforced dissipativity constraint on each step.} 
\label{table:lorenz}
\end{table*}  

\paragraph{\rebuttal{Choice of dissipativity hyperparameters.}}
\rebuttal{We also study the choice of dissipativity hyperparameters in eq.~\ref{def:diss}. In particular, we investigate the choice of $\alpha, \lambda$ and the radius between the two shells on which measure $\nu$ is supported. We conduct this study on the Lorenz system because it is finite-dimensional and simpler to visualize and interpret.}

\rebuttal{In our ablation experiments we vary each hyperparameter while holding the other ones constant at default values of a radius of $40$, $\alpha=1$, and $\lambda = 0.5$. Further, we define the inner radius to be $\sup \frac{\|x\|}{\lambda}$ over all $x$ in the training set. The results are presented in Table~\ref{table:diss_ablation}.}

\rebuttal{The choice of the dissipativity shell radius, $\lambda$, and $\alpha$ empirically does not have a significant impact on the model's capacity to learn the dynamics on the attractor and the enforced dissipative dynamics. As such, we select the smallest inner radius of $r_i = \sup \frac{\|x\|}{\lambda}$ (over all $x$ in the training set) on which dissipativity remains well-defined. Note that if $\|x\| < r_i$, then $\lambda x$ would lie inside the smallest ball containing the attractor.}

\begin{table*}
\begin{center}
\begin{tabular}{l|ccc}
\multicolumn{1}{c}{\bf Model}
&\multicolumn{1}{c}{\textbf{Per-step error}}
&\multicolumn{1}{c}{\textbf{Per-second error}}
&\multicolumn{1}{c}{\textbf{Dissipativity error}} \\
\hline 
\hline 
$\lambda = 0.1$ & \textbf{0.000276} & \textbf{0.0132} & \textbf{0.000388}\Tstrut\\
$\lambda = 0.3$ & $0.000300$ & $0.0195$ & $0.000390$ \\
$\lambda = 0.5$ & $0.000304$ & $0.0200$ & $0.000416$ \\
$\lambda = 0.7$ & $0.000306$ & $0.0184$ & $0.000417$ \\
$\lambda = 0.9$ & $0.000305$ & $0.0232$ & $0.000492$\Bstrut\\
\hline
radius $ = 10$ & \textbf{0.000305} & $0.0218$ & \textbf{0.000399}\Tstrut\\
radius $ = 40$ & \textbf{0.000304} & $0.0200$ & $0.000416$ \\
radius $ = 80$ & \textbf{0.000304} & $0.0225$ & $0.000415$ \\
radius $ = 200$ & $0.000307$ & \textbf{0.0196} & $0.000421$ \\
radius $ = 500$ & $0.000310$ & $0.0206$ & $0.000413$\Bstrut\\
\hline
$\alpha = 0.1$ & \textbf{0.000280} & $0.0195$ & $0.000727$\Tstrut\\
$\alpha = 0.5$ & $0.000303$ & $0.0234$ & $0.000479$ \\
$\alpha = 1.0$ & $0.000304$ & $0.0200$ & $0.000416$ \\
$\alpha = 5.0$ & $0.000373$ & \textbf{0.0181} & \textbf{0.000334} \\
$\alpha = 10.0$ & $0.000441$ & $0.0209$ & $0.000339$\Bstrut\\
\hline
\hline 
\end{tabular}
\end{center}
\caption{\rebuttal{Ablation experiments showing the effect of varying dissipativity hyperparameters. Error rates are given as relative $L^2$ error. Per-step error, per-second error, and dissipativity error are defined as in Table~\ref{table:lorenz}. Unaltered hyperparameters are held constant at default values of a radius of $40$, $\alpha=1$, and $\lambda = 0.5$.}}
\label{table:diss_ablation}
\end{table*}

\subsection{Kuramoto-Sivashinsky equation}
\label{appx:KS}

\begin{figure*}[t]
    \centering
    \begin{subfigure}{0.241\textwidth}
        \centering
        \includegraphics[width=\textwidth]{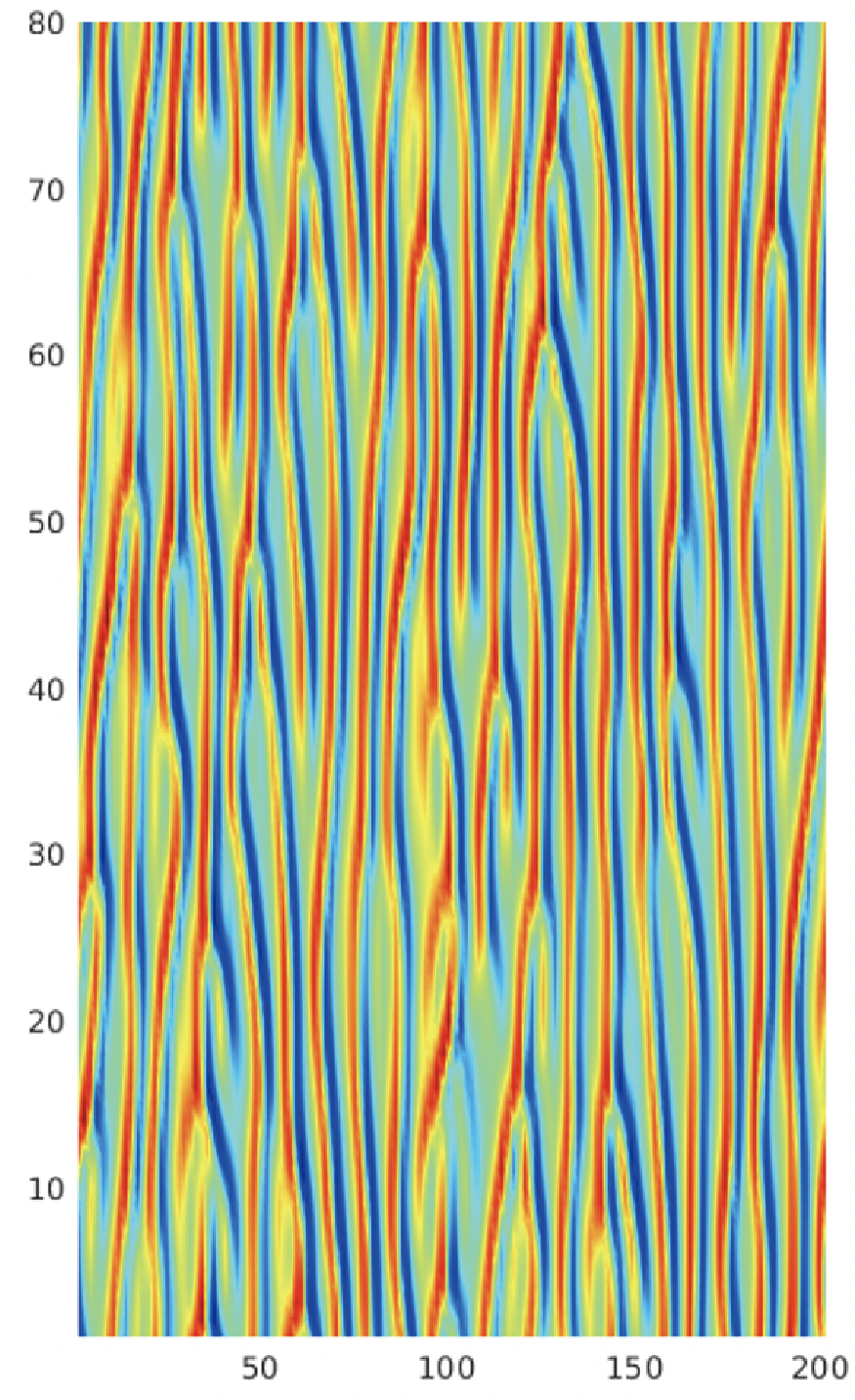}
        \caption{Truth}
    \end{subfigure}%
    \begin{subfigure}{0.245\textwidth}
        \centering
        \includegraphics[width=\textwidth]{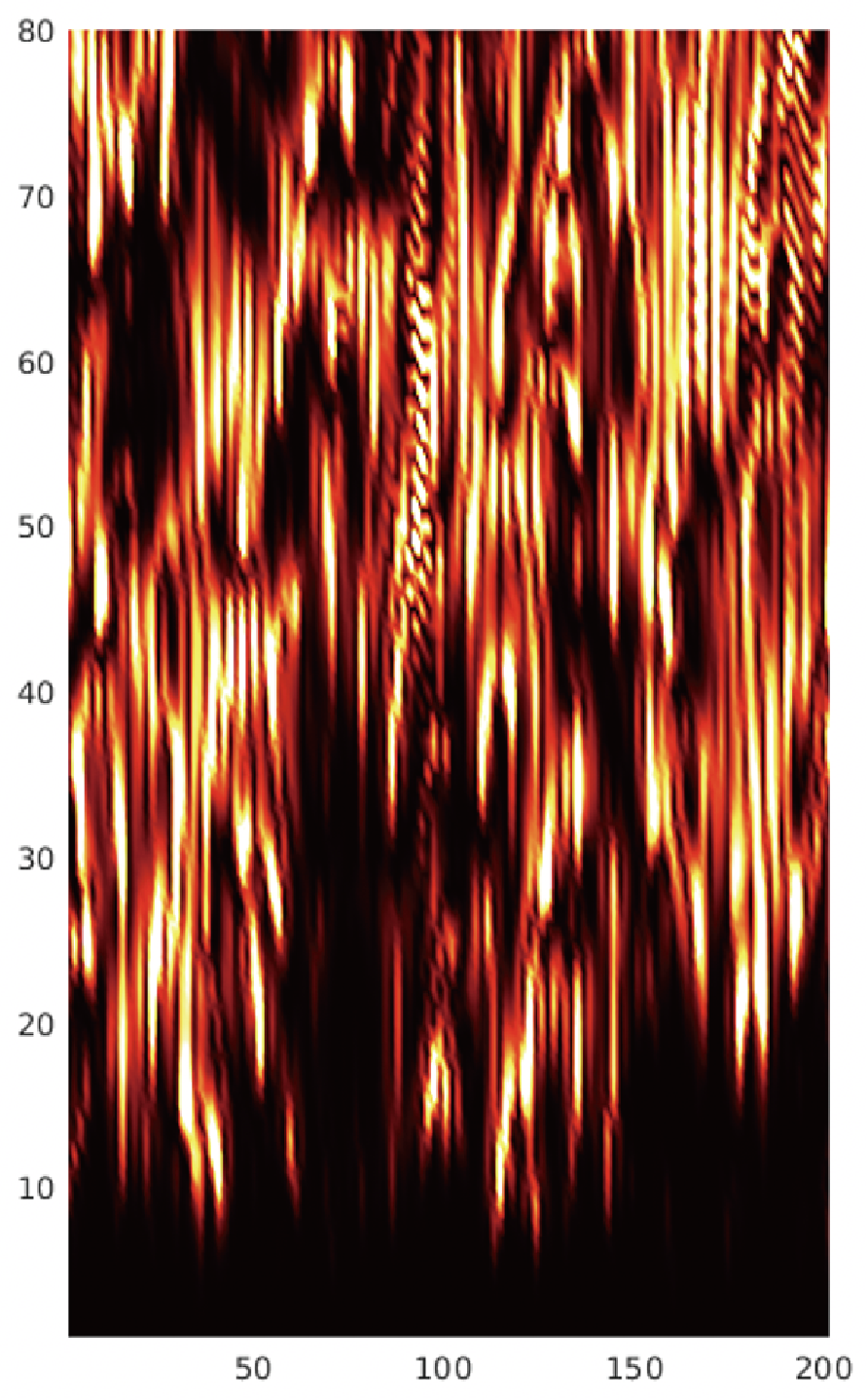}
        \caption{Error on LSTM}
    \end{subfigure}%
    \begin{subfigure}{0.23\textwidth}
        \centering
        \includegraphics[width=\textwidth]{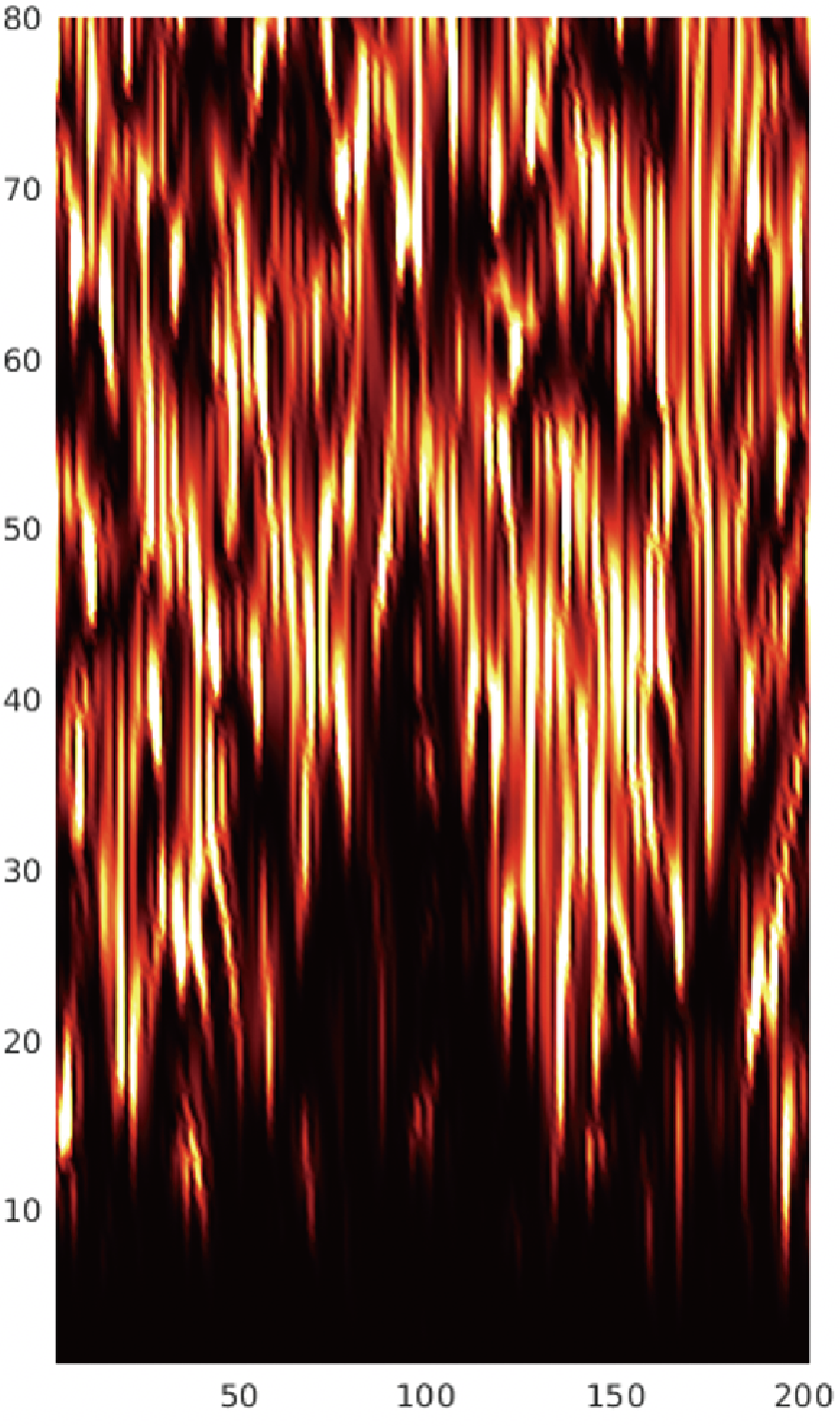}
        \caption{Error on GRU}
    \end{subfigure}%
    \begin{subfigure}{0.279\textwidth}
        \centering
        \includegraphics[width=\textwidth]{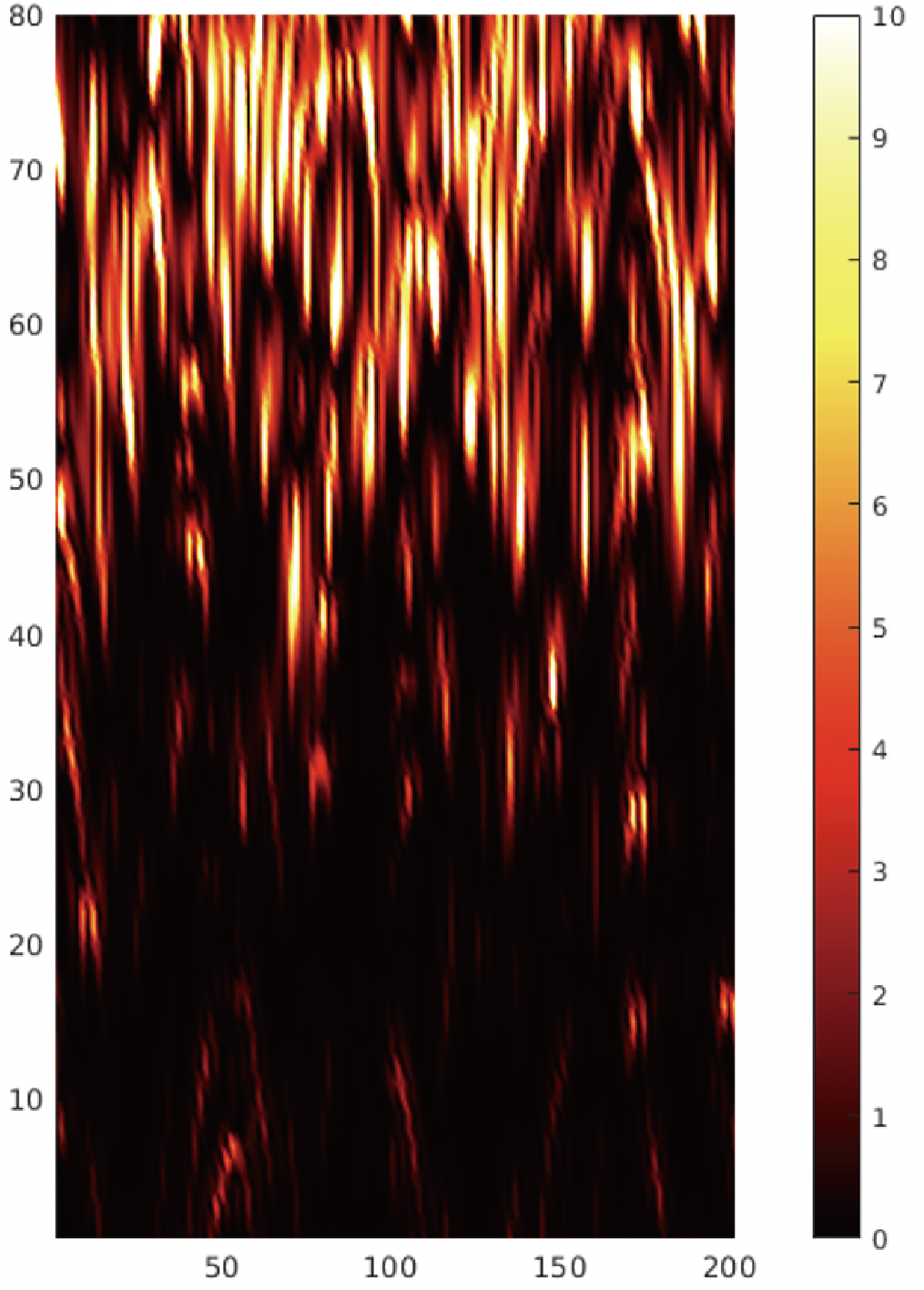}
        \caption{Error on MNO}
    \end{subfigure}%
    \caption{Trajectory and error on the KS equation}
    \label{fig:ks-attractor}
    {\small
    The x-axis is the spatial domain; the y-axis is the temporal domain. The figure shows that LSTM and GRU start to diverge at $t=20s$ while MNO is able to keep up with the exact trajectory until $t=50s$.
    }
\end{figure*}

We consider the following one-dimensional Kuramoto-Sivashinsky equation,
\begin{align*}
\begin{split}
    \frac{\partial u}{\partial t} &= -u \frac{\partial u}{\partial x} - \frac{\partial^2 u}{\partial x^2}  - \frac{\partial^4 u}{\partial x^4}, \qquad \text{on } [0,L] \times (0, \infty) \\
    u(\cdot, 0) &= u_0, \qquad \qquad \qquad \qquad \quad \:\:\:\: \text{on } [0, L]
    \end{split}
\end{align*}
where \(L = 32\pi \text{ or } 64 \pi\) and the spatial domain \([0,L]\) is equipped with periodic boundary conditions. We assume the initial condition \(u_0 \in \dot{L}^2_{\text{per}}([0,L];\R)\), where \(\dot{L}^2_{\text{per}}([0,L];\R)\) is the space of all mean zero \(L^2\)-functions that are periodic on \([0,L]\). Existence of the semigroup \(S_t : \dot{L}^2_{\text{per}}([0,L];\R) \to \dot{L}^2_{\text{per}}([0,L];\R)\) is established in \cite[Theorem 3.1]{temam2012infinite}. 
Data is obtained by solving the equation using the exponential time-differencing fourth-order Runge-Kutta method from \citep{exponentialtimediff}.
Random initial conditions are generated according to a mean zero Gaussian measure with covariance \(L^{-2/\alpha} \tau^{\frac{1}{2}(2 \alpha -1)}(- \Delta + (\tau^2/L^2)I)^{-\alpha}\) where \(\alpha=2\), \(\tau=7\), and periodic boundary conditions on \([0,L]\); for details see \citep{Lord}. 

\paragraph{Benchmarks for Kuramoto-Sivashinsky.}
We compare MNO with common choices of recurrent neural networks including the long short-term memory network (LSTM)\citep{gers1999learning} and gated recurrent unit (GRU)\citep{chung2014empirical}. All models use the time-discretization $h=1s$. The training dataset consists of  $1000$ different realizations of trajectories on the time interval $t \in [50, 200]$ (the first $50$s is truncated so the dynamics reach the ergodic state), which adds up to $1000 \times 150 = 150,000$ snapshots in total. Another $200$ realizations are generated for testing. Every single snapshot has the resolution $2048$. We use Adam optimizer to minimize the relative \(L^2\) loss with learning rate $\ = 0.001$, and step learning rate scheduler that decays by half every $10$ epochs for $50$ epochs in total.
\textbf{LSTM and GRU:} having tested many different configurations, we choose the best hyper-parameters: the number of layers $\ =1$, width $\ =1000$. During the evaluation, we additionally provide $1000$ snapshots as a warm-up of the memory.
\textbf{MNO:} we choose 1-d Fourier neural operator as our base model with four Fourier layers with \(20\) frequencies per channel and width$\ =64$.
Experiments run on Nvidia V100 GPUs.

\paragraph{Accuracy with respect to the true trajectory.}
In general, MNO has a smaller per-step error compared to RNN.
As shown in Figure \ref{fig:ks-attractor}, the MNO model captures a longer period of the exact trajectory compared to LSTM and GRU.
LSTM and GRU start to diverge at $t=20s$ while FNO is able to keep up with the exact trajectory until $t=50s$.

\paragraph{Invariant statistics for the KS equations}
As shown in Figure \ref{fig:KS-appendix}, we present enormous invariant statistics for the KS equation. We use $150000$ snapshots to train the MNO, LSTM, and GRU to model the evolution operator of the KS equation with $h=1s$. We compose each model for $T=1000$ time steps to obtain a long trajectory (attractor), and estimate various invariant statistics from them.

\begin{itemize}
    \item \textbf{(a) Fourier spectrum}: the Fourier spectrum of the predicted attractor. All models are able to capture the Fourier modes with magnitude larger than $O(1)$, while MNO is more accurate on the tail.
    \item \textbf{(b) Spatial correlation}: the spatial correlation of the attractor, averaged in the time dimension. MNO is more accurate on the near-range correlation, but all models miss the long-range correlation.
    \item \textbf{(c) Auto-correlation of the Fourier mode}: the auto-correlation of the $10^{th}$ Fourier mode. Since the Fourier modes are nearly constant, the auto-correlation is constant too.
    \item \textbf{(d) Auto-correlation of the PCA mode}: the auto-correlation of the first PCA mode (with respect to the PCA basis of the ground truth data). The PCA mode oscillates around $[-40,40]$, showing an ergodic state. 
    \item \textbf{(e) Distribution of kinetic energy}: the distribution of kinetic energy with respect to the time dimension. MNO captures the distribution most accurately.
    \item \textbf{(f) Pixelwise distribution of velocity}: since the KS equation is homogeneous, we can compute the distribution of velocity with respect to pixels. MNO captures the pixelwise distribution most accurately too.
\end{itemize}

% \begin{figure}
%     \centering
%     \includegraphics[width=12cm]{}
%     \caption{Invariant statistics for the KS equation}
%     \label{fig:KS-appendix}
% \end{figure}

\begin{figure}[t]
\centering
\begin{subfigure}{0.45\textwidth}
    \centering
    \includegraphics[width=0.85\textwidth]{fig/KS_SP.png}
    \caption{Fourier spectrum}
\end{subfigure}
\begin{subfigure}{0.45\textwidth}
    \centering
    \includegraphics[width=0.85\textwidth]{fig/KS_SPA.png}
    \caption{spatial correlation}
\end{subfigure}
\begin{subfigure}{0.45\textwidth}
    \centering
    \includegraphics[width=0.85\textwidth]{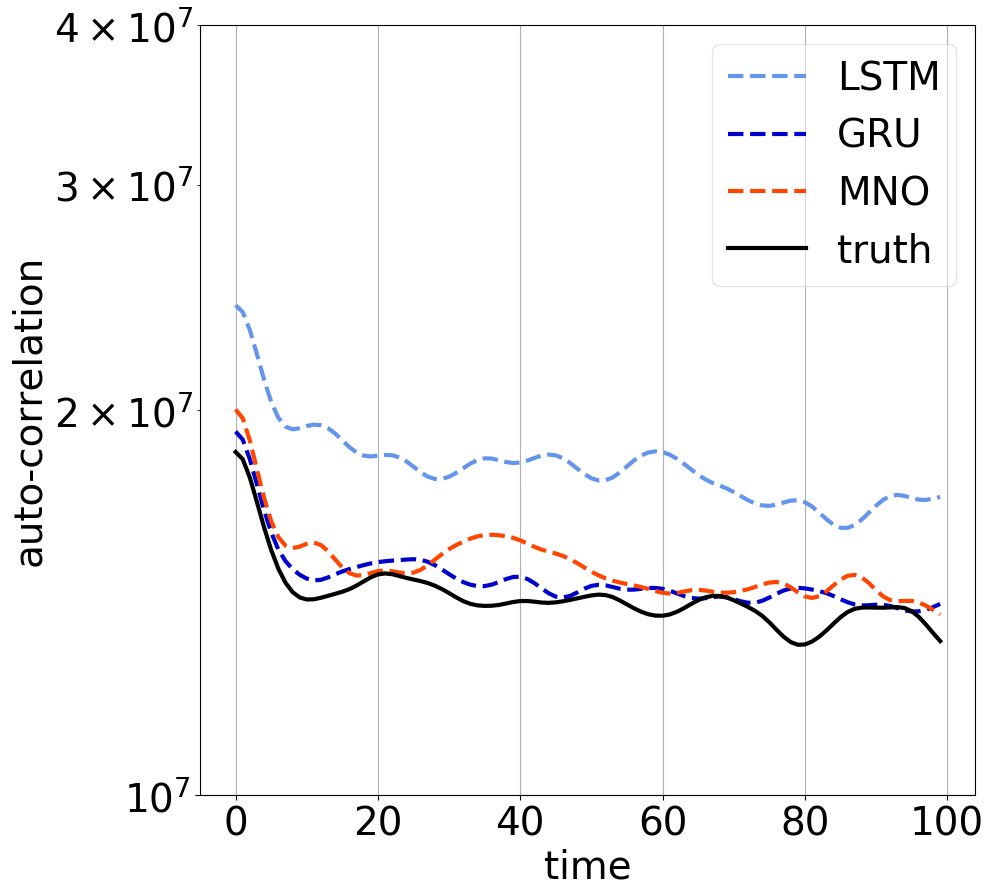}
    \caption{auto-correlation of Fourier mode}
\end{subfigure}
\begin{subfigure}{0.45\textwidth}
    \centering
    \includegraphics[width=0.85\textwidth]{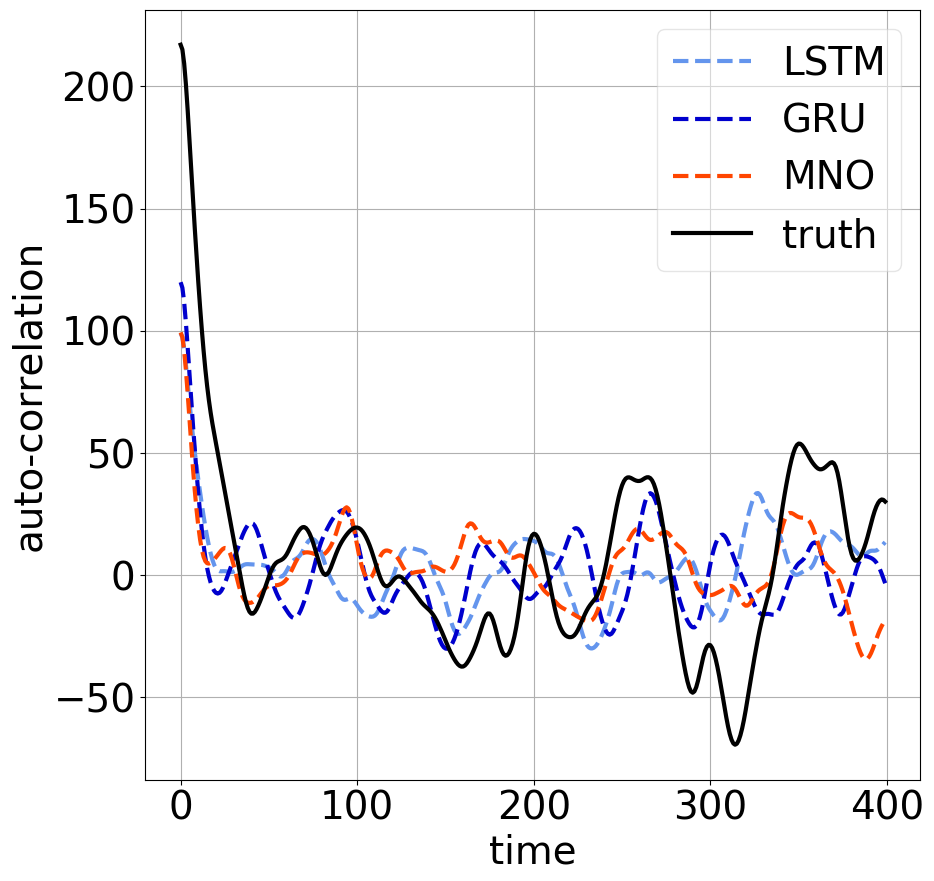}
    \caption{auto-correlation of PCA mode}
\end{subfigure}
\begin{subfigure}{0.45\textwidth}
    \centering
    \includegraphics[width=0.85\textwidth]{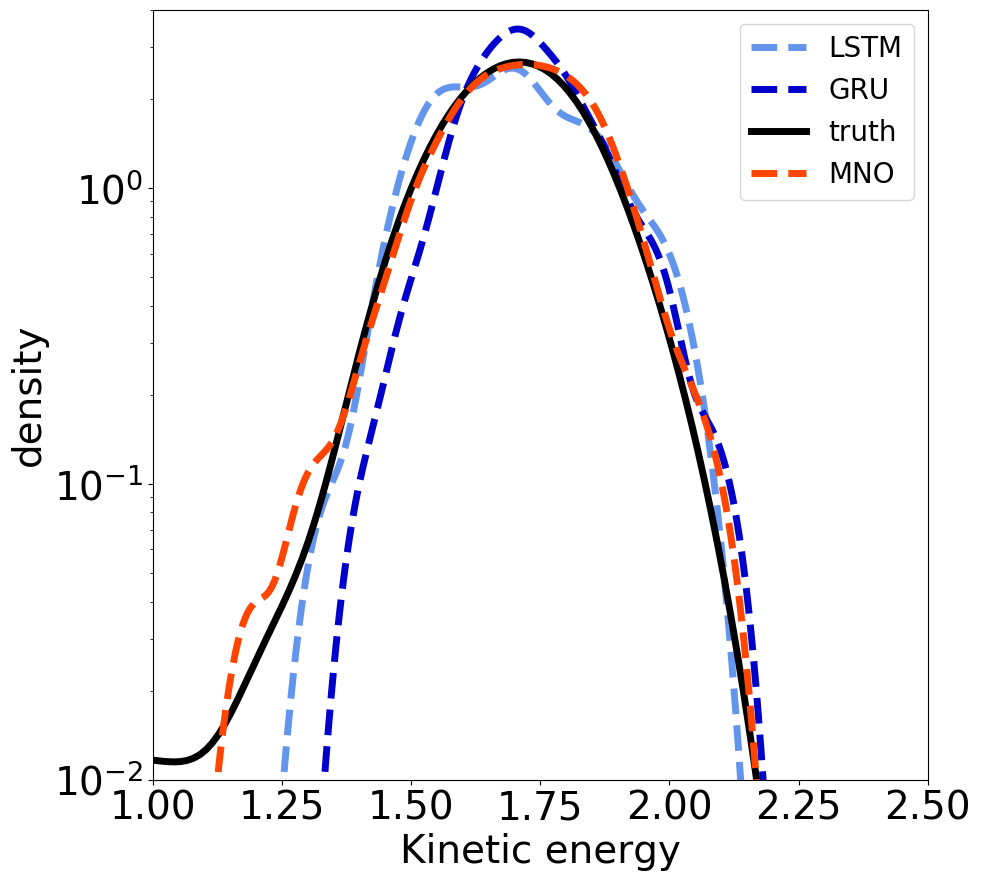}
    \caption{distribution of kinetic energy}
\end{subfigure}
\begin{subfigure}{0.45\textwidth}
    \centering
    \includegraphics[width=0.85\textwidth]{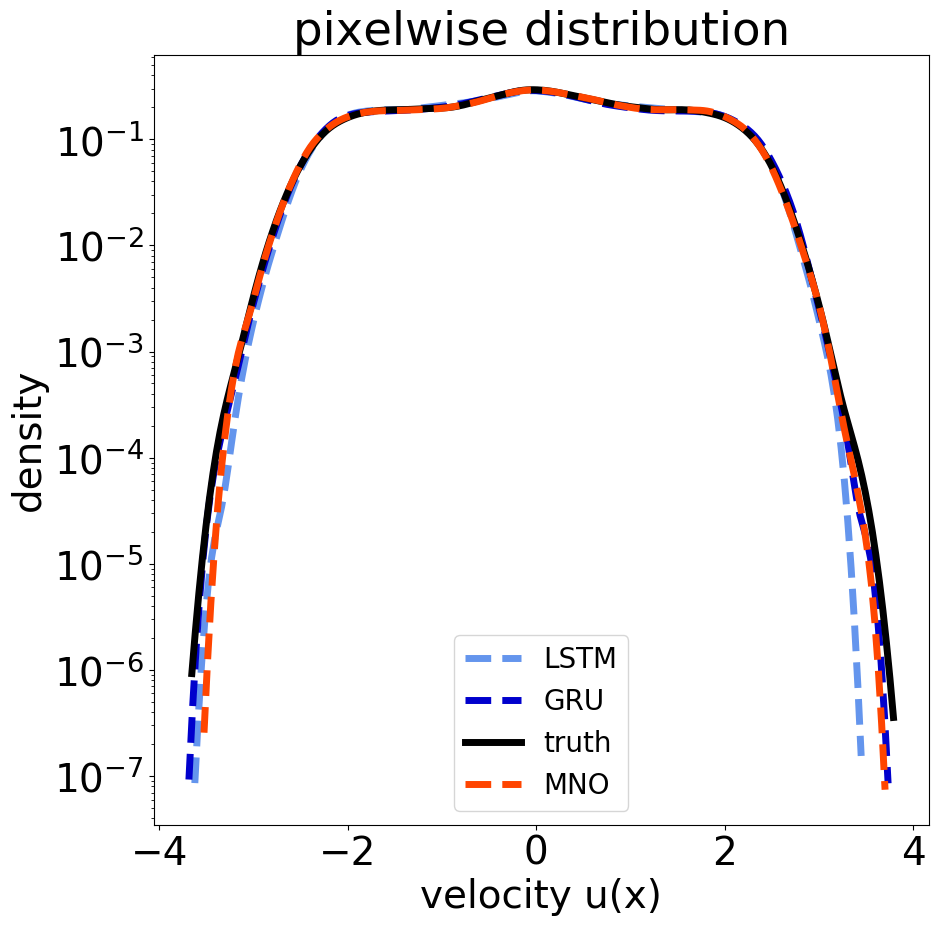}
    \caption{distribution of pixelwise velocity}
\end{subfigure}
  \caption{ Invariant statistics for the KS equation}
  \label{fig:KS-appendix}
\end{figure}

\paragraph{Choice of time discretization. }
We further study the choice of time steps $h$. As shown in Figure \ref{fig:dt}, when the time steps are too large, the correlation is chaotic and hard to capture. But counter-intuitively, when the time steps are too small, the evolution is also hard to capture. In this case, the input and output of the network will be very close, and the identity map will be a local minimum. An easy fix is to use MNO to learn the time-derivative or residual. 
This is shown in the figure, where the residual model (blue line) has a better per-step error and accumulated error at smaller $h$. When the time step is large, there is no difference in modeling the residual. 
This idea can generalize to other integrators as an extension of Neural ODEs to PDEs \citep{chen2018neural}.
% \begin{figure*}[t]
%     \centering
%     \includegraphics[width=8cm]{}
%     \caption{Choice of time step.}
%     \label{fig:dt}
% \end{figure*}

\subsection{Kolmogorov Flow}

We consider the two-dimensional Navier-Stokes equation for a viscous, incompressible fluid,
\begin{align}
\label{eq:kf}
\begin{split}
\frac{\partial u}{\partial t} &= - u \cdot \nabla u - \nabla p + \frac{1}{Re} \Delta u + \sin (ny) \hat{x}, \qquad \text{on } [0,2\pi]^2 \times (0,\infty) \\
\nabla \cdot u &= 0 \qquad \qquad \qquad \qquad \qquad \qquad \qquad \qquad \quad   \text{on } [0,2\pi]^2 \times [0,\infty) \\
u(\cdot,0) &= u_0 \qquad \qquad \qquad \qquad \qquad \qquad \qquad \qquad \:\: \text{on } [0,2\pi]^2
\end{split}
\end{align}
where \(u\) denotes the velocity, \(p\) the pressure, and \(Re > 0\) is the Reynolds number. The domain \([0,2\pi]^2\) is equipped with periodic boundary conditions. The specific choice of forcing \(\sin(ny)\hat{x}\) constitutes a Kolmogorov flow; we choose \(n=4\) in all experiments. We define \(\U\) to be the closed subspace of \(L^2([0,2\pi]^2;\R^2)\),
\(\U = \bigl \{u \in \dot{L}^2_{\text{per}}([0,2\pi]^2;\R^2) : \nabla \cdot u = 0 \bigl \}\)
and assume \(u_0 \in \U\). 
We define the vorticity \(w = (\nabla \times u)\hat{z}\) and the stream function \(f\) as the solution to the Poisson equation \(-\Delta f = w\). Existence of the semigroup \(S_t : \U \to \U\) is established in \cite[Theorem 2.1]{temam2012infinite}.
We denote turbulence kinetic energy (TKE) $\langle (u-\bar{u})^2 \rangle$, and dissipation $\epsilon = \langle  w^2 \rangle/Re$.
Data is obtained by solving the equation in vorticity form using the pseudo-spectral split step method from \citep{chandler2013invariant}. Random initial conditions are generated according to a mean zero Gaussian measure with covariance \(7^{3/2}  (-\Delta + 49I)^{-2.5}\)  with periodic boundary conditions on \([0,2\pi]^2\).
  
% \begin{figure*}[t]
%     \centering
%     \includegraphics[width=\textwidth]{}
%     \caption{The learned attractor of the Kolmogorov flow.}
%     \label{fig:NS-attractor}
%     {\small
%     The $10000$ time steps trajectory generated by MNO projected onto the first two components of PCA. 
%     Each point corresponds to an snapshot on the attractor. Two points are selected for further visualization of vorticity field.
%     }
% \end{figure*}

\paragraph{Benchmarks for the 2d Kolmogorov flow.}
We compare MNO with common standard two-dimensional dynamic models including U-Net\citep{ronneberger2015u} and LSTM-CNN\citep{shi2015convolutional}
on modeling the vorticity $w$ in the relatively non-turbulent $Re = 40$ case. We choose the discretization $h = 1s$. The training dataset consists of $180$ realizations of trajectories on time interval $t \in [100, 500]$ (the first $100$ seconds are discarded) which adds up to $180 \times 400 = 72,000$ snapshots in total. Another $20$ realizations are generated for testing. Each single snapshot has resolution $64\times 64$. We use the Adam optimizer to minimize the relative \(L^2\) loss with learning rate $ = 0.0005$, and step learning rate scheduler that decays by half every $10$ epochs for $50$ epochs in total.
{\bf U-Net:} we use five layers of convolution and deconvolution with width from $64$ to $1024$.
{\bf LSTM-CNN:} we use one layer of LSTM with width $ =64$.
{\bf MNO:} we parameterize the 2-d Fourier neural operator consists of four Fourier layers with \(20\) frequencies per channel and width$\ =64$.

\paragraph{Enforcing dissipativity.}
We enforce dissipativity during training with the criterion described in eq. \ref{def:diss}, with $\lambda = 0.5$ and $\nu$ being a uniform probability distribution supported on a shell around the origin.

\paragraph{Accuracy with respect to various norms.}
MNO shows near one order of magnitude better accuracy compared to U-Net and LSTM-CNN. As shown in Table \ref{table:vorticity}, we train each model using the balanced \(L^2 (=H^0)\), \(H^1\), and \(H^2\) losses, defined as the sum of the relative \(L^2\) loss grouped by each order of derivative. 
And we measure the error with respect to the standard (unbalanced) norms. 
% (LSTM-CNN is is abbreviated in Appendix \ref{sec:appendix}.)
The MNO with \(H^2\) loss consistently achieves the smallest error on vorticity on all of the \(L^2\), \(H^1\), and \(H^2\) norms. However, \(L^2\) loss achieves the smallest error on the turbulence kinetic energy (TKE); \(H^1\) loss achieves the smallest error on the dissipation $\epsilon$.

The NS equation with Re$40$ is shown in Figure \ref{fig:re40}. (a) show the ground truth data of vorticity field, each column represents a snapshot at $t=100s$ with a different initial condition. (b), (c), (d) show the predicted trajectory of MNO on vorticity, using $L^2$, $H^1$, and $H^2$ losses respectively. We are able to generate a long trajectory with the MNO model. The five columns represent $t=1000s, 2000s, 3000s, 4000s, 5000s$ respectively. As shown in the figure, the predicted trajectories (b) (c) (d) share the same behaviors as in the ground truth (a). It indicates the MNO model is stable.

\begin{figure}
    \centering
    \begin{subfigure}{\textwidth}
        \centering
        \includegraphics[width=\textwidth]{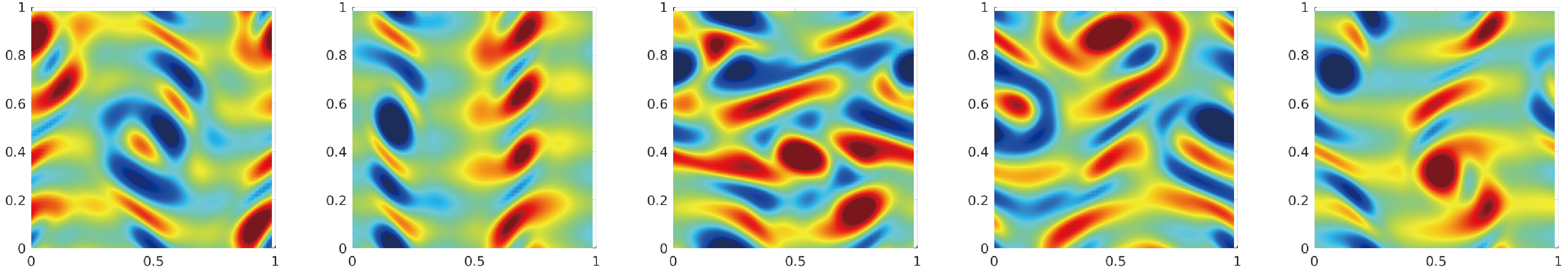}
        \caption{Vorticity ground truth}
    \end{subfigure}
    \begin{subfigure}{\textwidth}
        \centering
        \includegraphics[width=\textwidth]{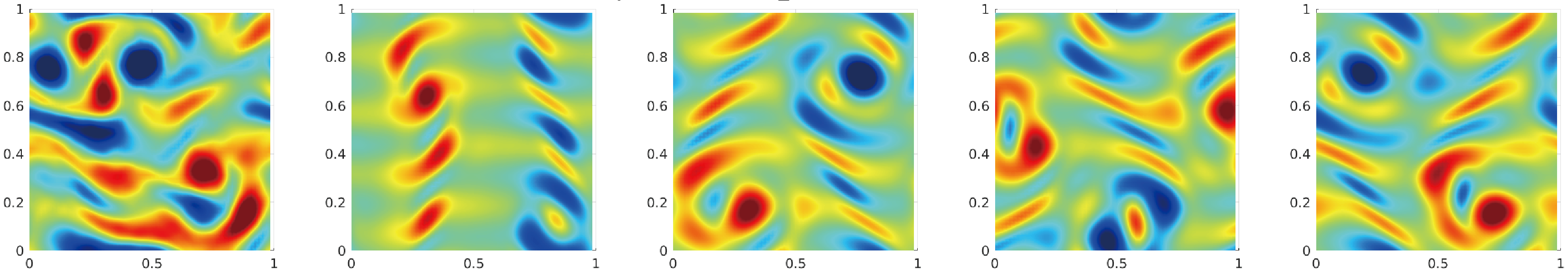}
        \caption{Vorticity prediction with $L^2$ loss}
    \end{subfigure}
    \begin{subfigure}{\textwidth}
        \centering
        \includegraphics[width=\textwidth]{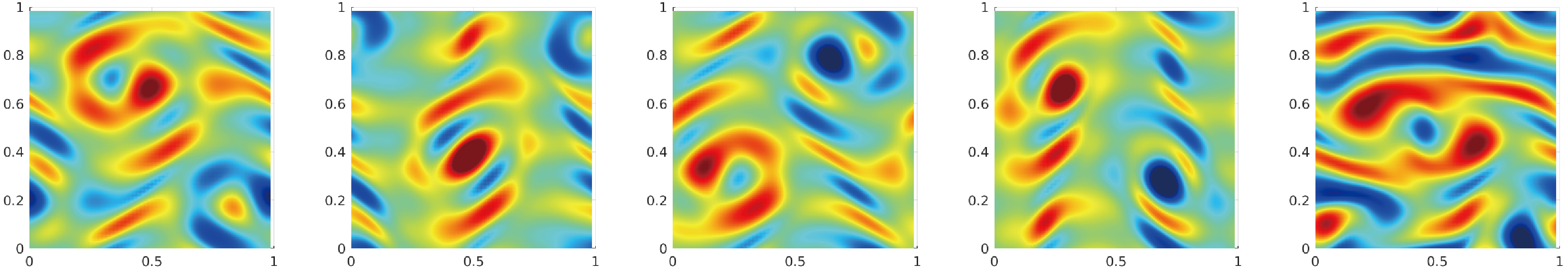}
        \caption{Vorticity prediction with $H^1$ loss}
    \end{subfigure}
    \begin{subfigure}{\textwidth}
        \centering
        \includegraphics[width=\textwidth]{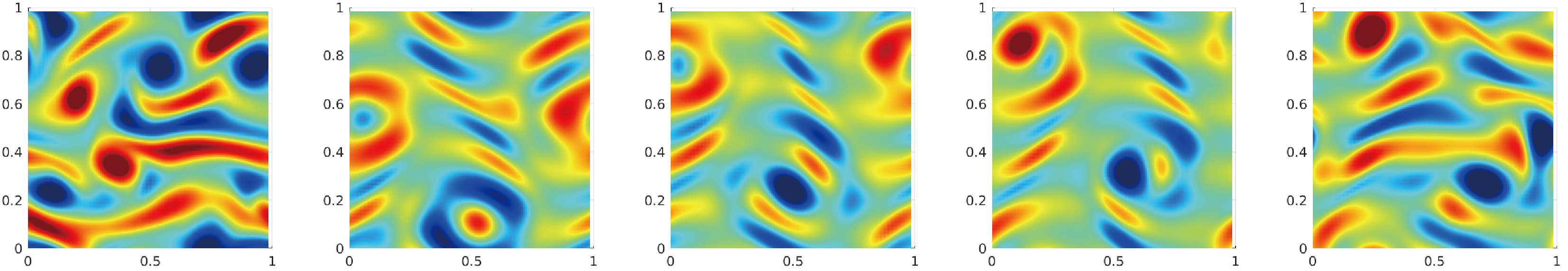}
        \caption{Vorticity prediction with $H^2$ loss}
    \end{subfigure}
    \caption{Visualization of the NS equation, $Re=40$. \rebuttal{(a) shows the ground truth vorticity field where each column represents a snapshot at $t=100s$ with different initial conditions. (b), (c), and (d) show the predicted trajectory of MNO vorticity, using the $L^2$, $H^1$, and $H^2$ losses respectively. The five columns represent $t=1000s, 2000s, 3000s, 4000s, 5000s$ respectively.}}
    \label{fig:re40}
\end{figure}

\paragraph{Visualizing the attractor generated by MNO.}
We compose MNO $10000$ times to obtain the global attractor, and we compute the PCA (POD) basis of these $10000$ snapshots and project them onto the first two components. As shown in Figure (a), we obtain a cycle-shaped attractor. 
The true attractor has a degree of freedom around $O(100)$ \citep{temam2012infinite}. If the attractor is a high-dimensional sphere, then most of the mass concentrates around its equator. Therefore, when projected to low-dimension, the attractor will have the shape of a ring.
Most of the points are located on the ring, while a few other points are located in the center. The points in the center have high dissipation, implying they are intermittent states. In Figure (b) we add the time axis. While the trajectory jumps around the cycle, we observe there is a rough period of $2000$s. We perform the same PCA analysis on the training data, which shows the same behavior.

\paragraph{Invariant statistics.}
Similarly,  we present enormous  invariant statistics for the NS equation (Re$40$), as shown in Figure \ref{fig:NS-appendix} and \ref{fig:NS-appendix2},. We use $72000$ snapshots to train the MNO, UNet, and ConvLSTM to model the evolution operator of the KS equation with $h=1s$. We compose each model for $T=10000$ time steps to obtain a long trajectory (attractor), and estimate various invariant statistics from them.

\begin{itemize}
    \item \textbf{(\ref{fig:NS-appendix} a, b) Fourier spectrum of velocity and vorticity}: the Fourier spectrum of the predicted attractor. Again, all models are able to capture the Fourier modes with magnitude larger than $O(1)$, while MNO is more accurate on the tail. Using the Sobolev norm further helps to capture the tail.
    \item \textbf{(\ref{fig:NS-appendix} c, d) Pixelwise distribution of velocity and vorticity}: All models preserve the pixelwise distribution.
    \item \textbf{(\ref{fig:NS-appendix} e, f) Distribution of kinetic energy and dissipation rate}: the distribution of kinetic energy with respect to the time dimension. MNO captures the distribution most accurately.
    \item \textbf{(\ref{fig:NS-appendix2} a) Auto-correlation of the first Fourier mode}: the auto-correlation of the $10^{th}$ Fourier mode. Since the Fourier modes are nearly constant, the auto-correlation is constant too. Notice it is very expensive to generate long-time ground truth data, so the figure does not include the ground truth. However, it is easy to obtain the auto-correlation by MNO.
    \item \textbf{(\ref{fig:NS-appendix2} b ) Auto-correlation of the first PCA mode}: the auto-correlation of the first PCA mode (with respect to the PCA basis of the ground truth data). The PCA mode oscillates around $[-1000,1000]$, showing an ergodic state. The UNet oscillates around $[0,2000]$.
    \item \textbf{(\ref{fig:NS-appendix2} c) Spatial correlation}: the spatial correlation of the attractor, averaged in the time dimension. The four columns represent the truth and MNO with different losses. As seen from the figure, there is a wave pattern matching the force term $\sin(4y)$.
\end{itemize}

\begin{figure}[t]
\centering
\begin{subfigure}{0.45\textwidth}
    \centering
\includegraphics[width=0.85\textwidth]{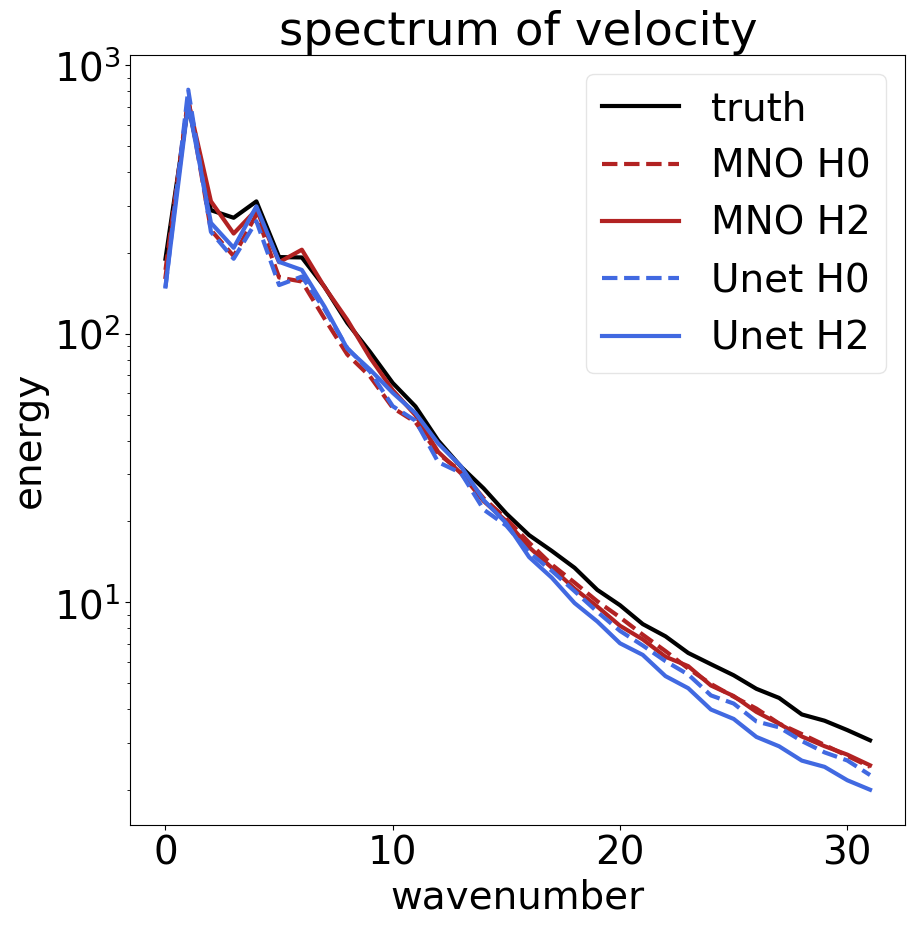}
    \caption{Fourier spectrum of velocity}
\end{subfigure}
\begin{subfigure}{0.45\textwidth}
    \centering
\includegraphics[width=0.85\textwidth]{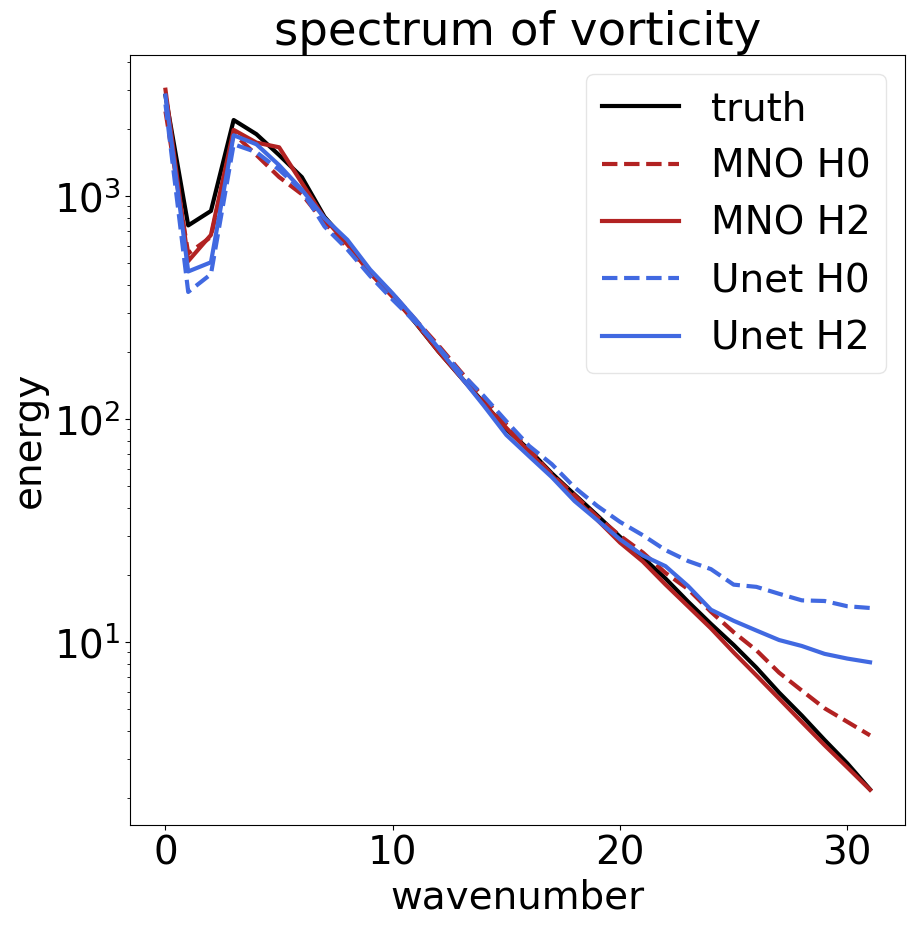}
    \caption{Fourier spectrum of vorticity}
\end{subfigure}\\
\begin{subfigure}{0.45\textwidth}
    \centering
\includegraphics[width=0.85\textwidth]{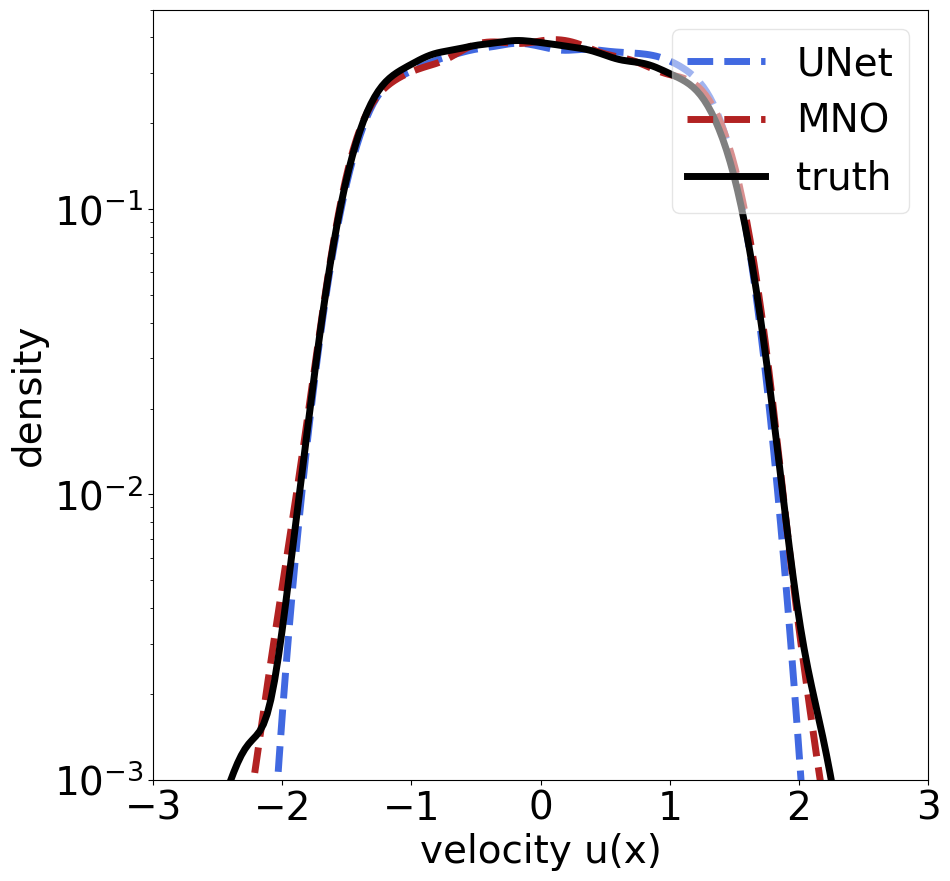}
    \caption{distribution of pixelwise velocity}
\end{subfigure}
\begin{subfigure}{0.45\textwidth}
    \centering
\includegraphics[width=0.85\textwidth]{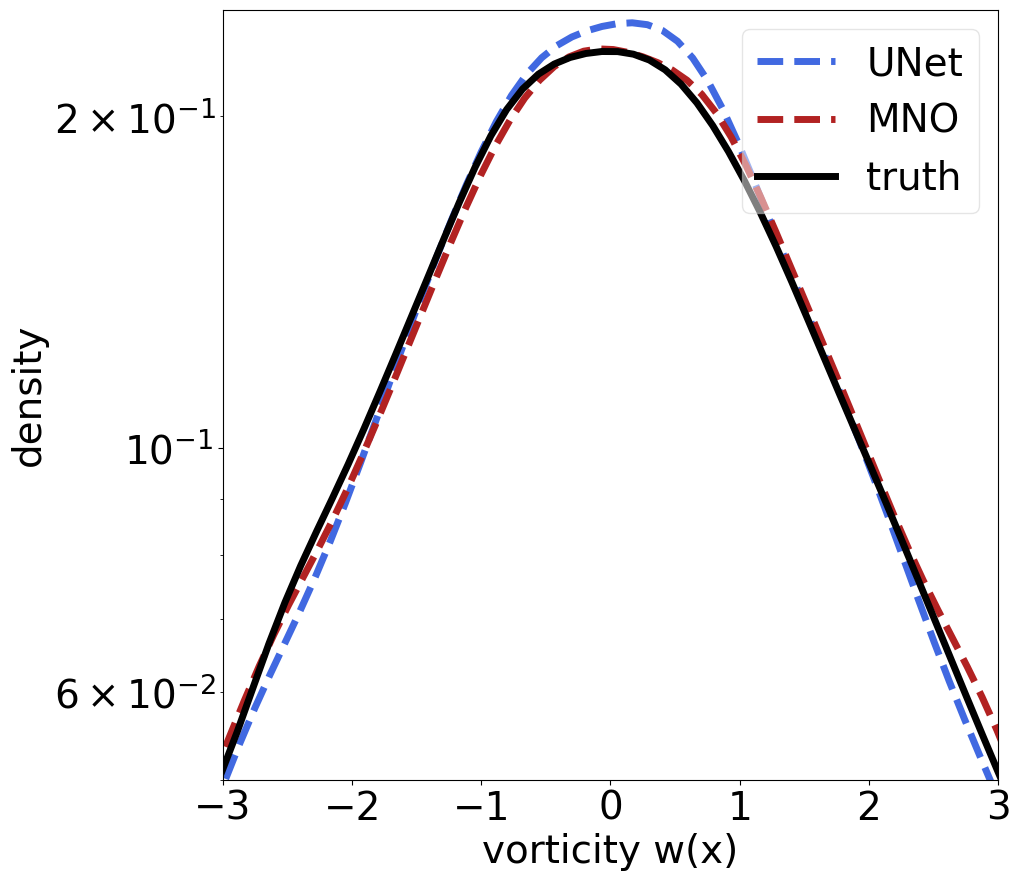}
    \caption{distribution of pixelwise vorticity}
\end{subfigure}\\
\begin{subfigure}{0.45\textwidth}
    \centering
\includegraphics[width=0.85\textwidth]{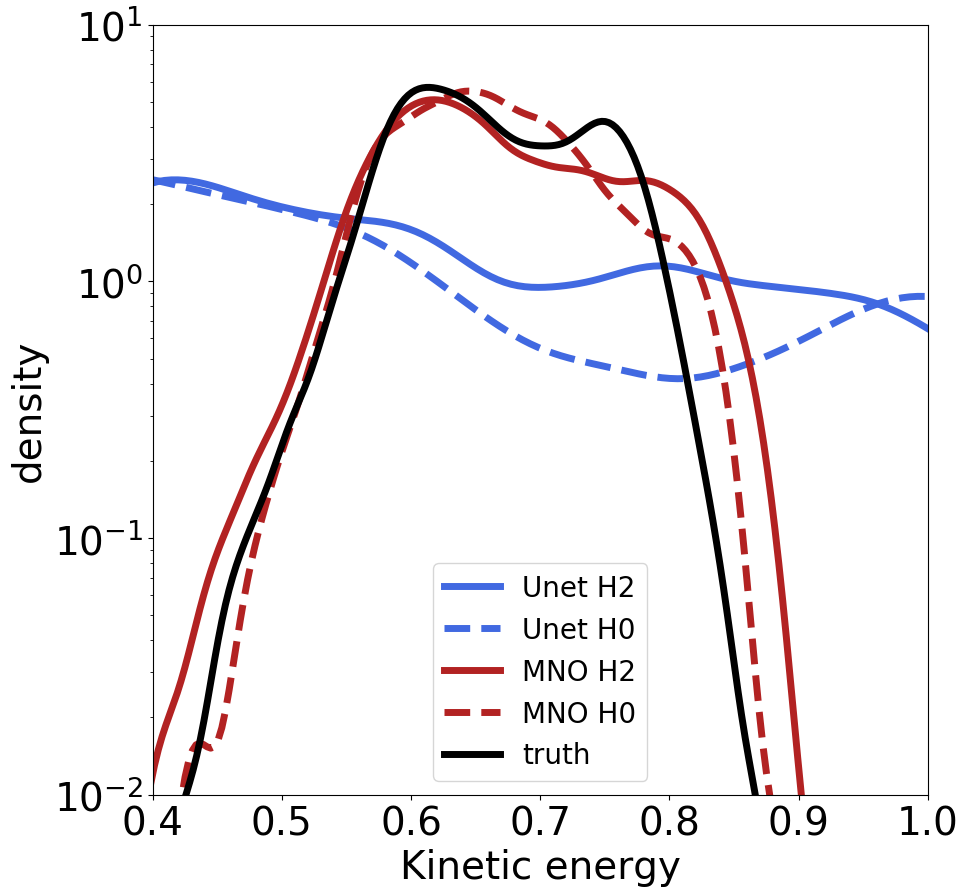}
    \caption{distribution of turbulence kinetic energy}
\end{subfigure}
\begin{subfigure}{0.45\textwidth}
    \centering
\includegraphics[width=0.85\textwidth]{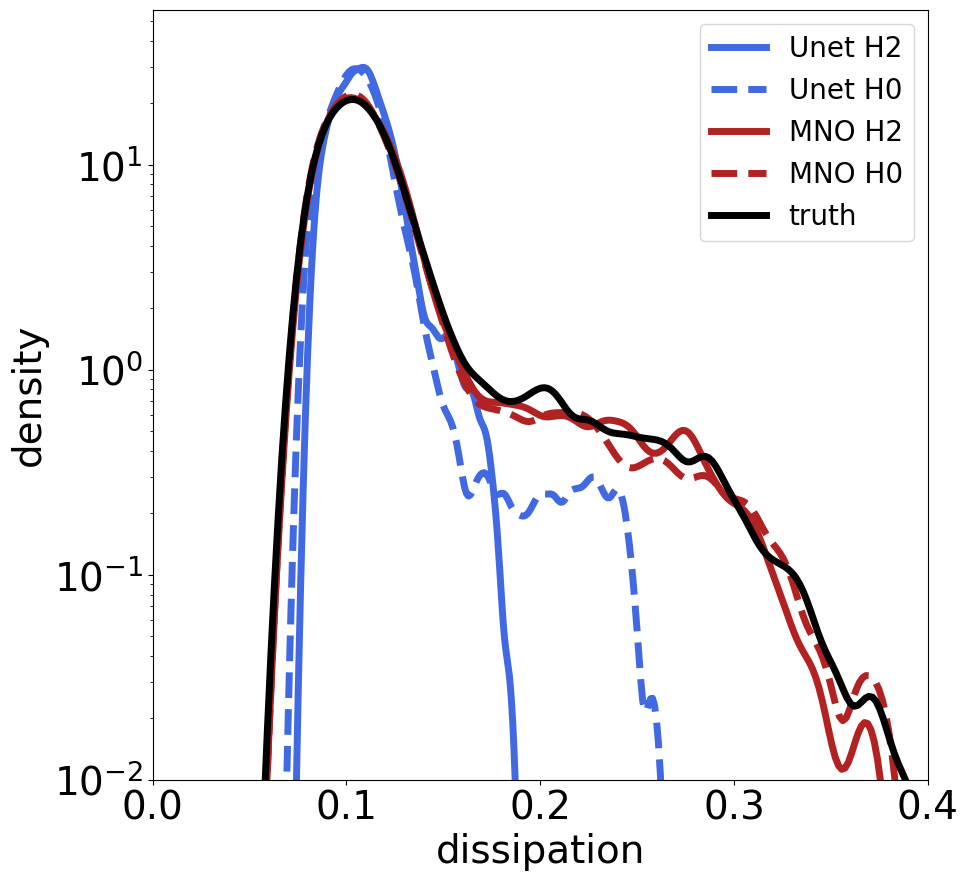}
    \caption{distribution of dissipation rate}
\end{subfigure}\\
    \caption{Invariant statistics for the NS equation}
    \label{fig:NS-appendix}
\end{figure}

\begin{figure}[t]
\centering
\begin{subfigure}{0.45\textwidth}
    \centering
\includegraphics[width=0.85\textwidth]{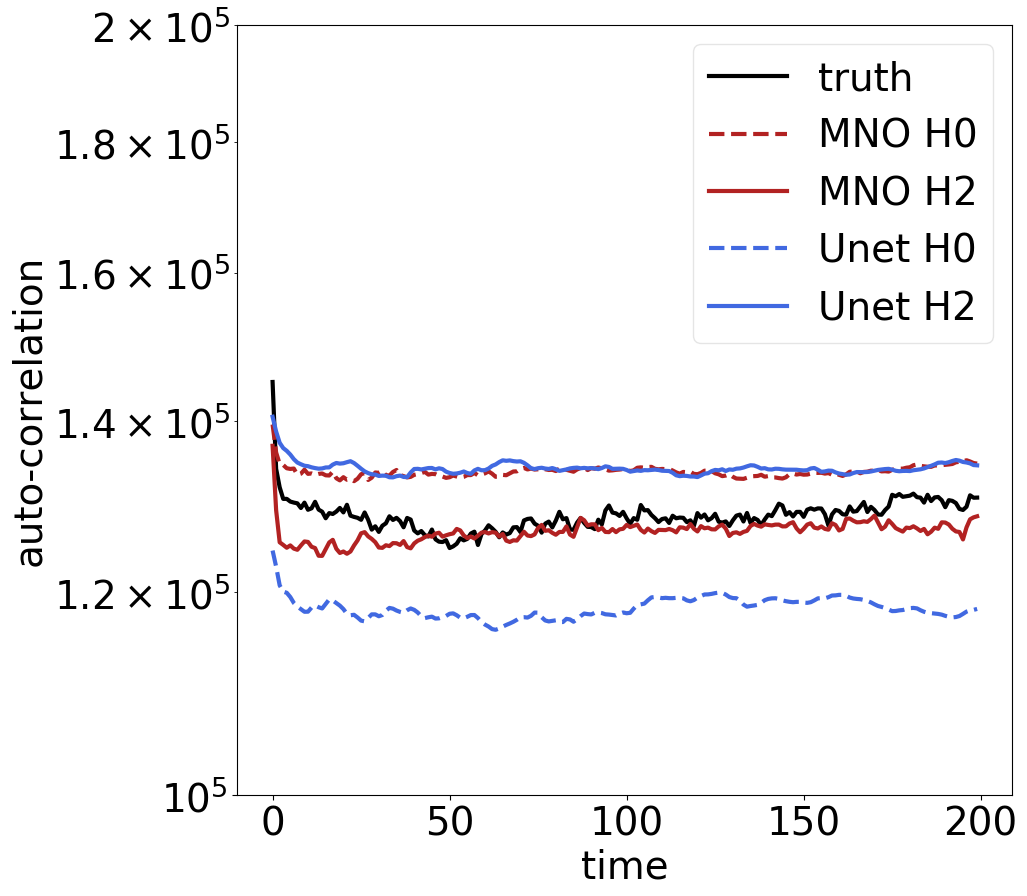}
    \caption{auto-correlation of Fourier mode}
\end{subfigure}
\begin{subfigure}{0.4\textwidth}
    \centering
\includegraphics[width=0.85\textwidth]{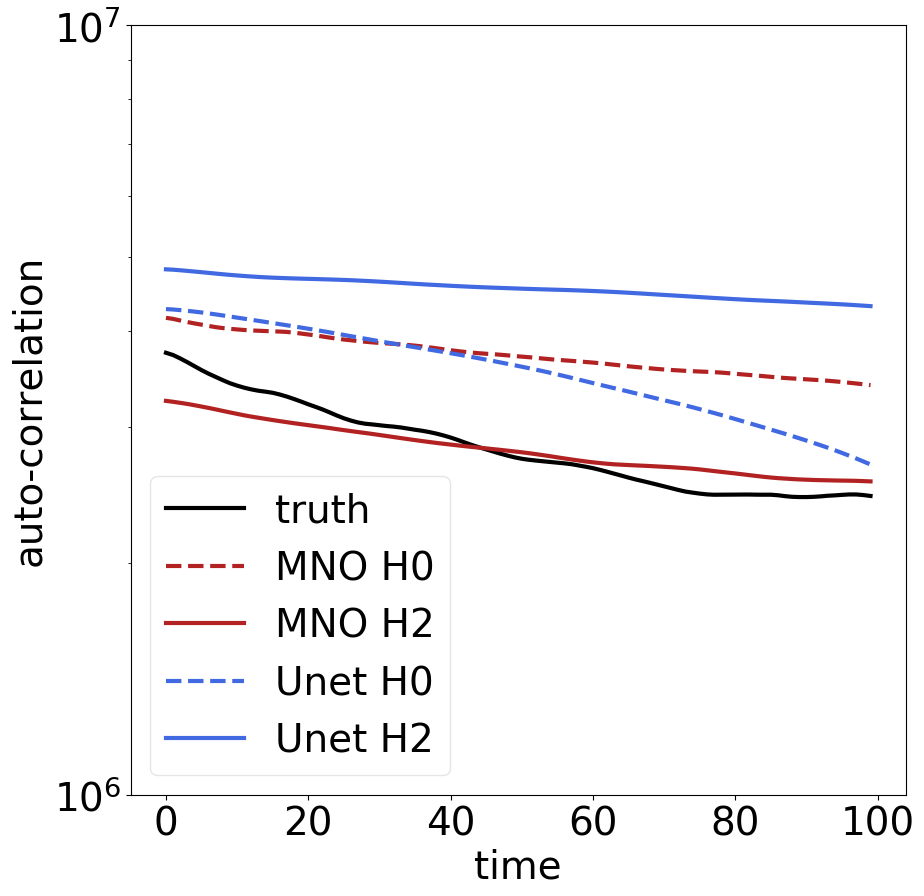}
    \caption{auto-correlation of PCA mode}
\end{subfigure}\\
\begin{subfigure}{\textwidth}
    \centering
\includegraphics[width=\textwidth]{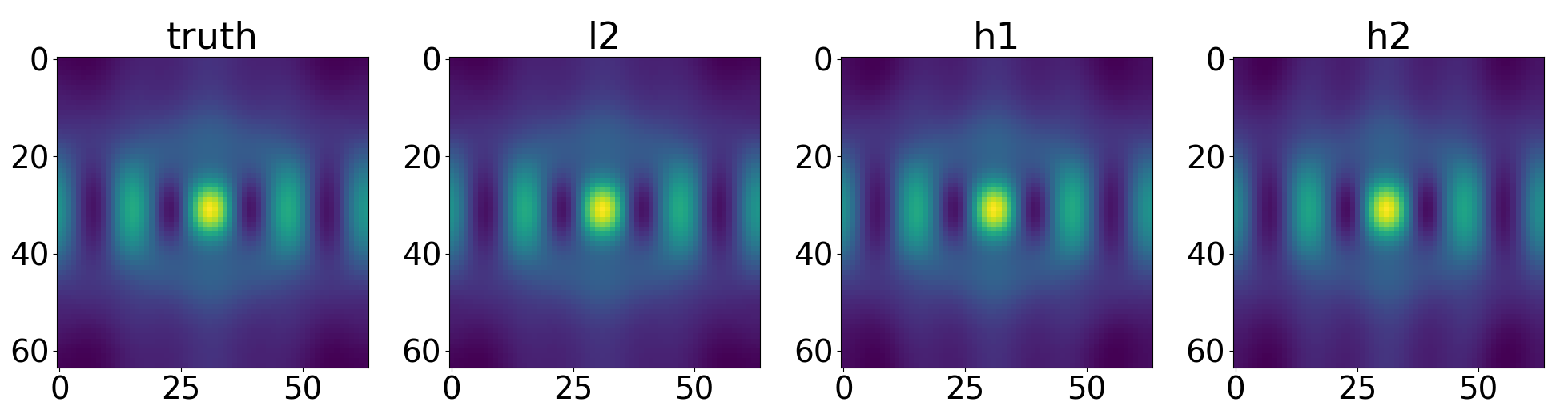}
    \caption{distribution of pixelwise velocity}
\end{subfigure}
    \caption{Invariant statistics for the NS equation (continue)}
    \label{fig:NS-appendix2}
\end{figure}

% \begin{figure}
%     \centering
%     \includegraphics[width=\textwidth]{}
%     \caption{Invariant statistics for the NS equation}
%     \label{fig:NS-appendix}
% \end{figure}

\paragraph{Order of derivatives.}
Roughly speaking, vorticity is the derivative of velocity; velocity is the derivative of the stream function. Therefore we can denote the order of derivative of vorticity, velocity, and stream function as $2$, $1$, and $0$ respectively. Combining vorticity, velocity, and stream function,  with \(L^2\), \(H^1\), and \(H^2\) loss, we have in total the order of derivatives ranging from $0$ to $4$. We observe, in general, it is best practice to keep the order of derivatives in the model at a number slightly higher than that of the target quantity. For example, as shown in Figure \ref{fig:NS_order}, when querying the velocity (first-order quantity), it is best to use second-order (modeling velocity plus \(H^1\) loss or modeling vorticity plus \(L^2\) loss). This is further illustrated in Table \ref{table:re500}.
In general, using a higher order of derivatives as the loss will increase the power of the model and capture the invariant statistics more accurately. However, a higher-order of derivative means higher irregularity. It in turn requires a higher resolution for the model to resolve and for computing the discrete Fourier transform. This trade-off again suggests it is best to pick a Sobolev norm not too low or too high.

% \begin{figure}[t]
%     \centering
%     \includegraphics[width=8cm]{}
%     \caption{Error on velocity with respect to the order of derivative.}
%     \label{fig:order}
% \end{figure}

\begin{table*}[h]
\begin{center}
\begin{tabular}{l|ll|ccc|cc}
% \multicolumn{1}{c}{\bf Data}
\multicolumn{1}{c}{\bf Model}
&\multicolumn{1}{c}{\bf training }
&\multicolumn{1}{c}{\bf loss }
&\multicolumn{1}{c}{\(\mathbf{L^2}\) \bf error}
&\multicolumn{1}{c}{\(\mathbf{H^1}\) \bf error}
&\multicolumn{1}{c}{\(\mathbf{H^2}\) \bf error}
&\multicolumn{1}{c}{\bf TKE error }
&\multicolumn{1}{c}{\bf $\pmb{\epsilon}$ error }\\
\hline 
\hline 
MNO     &\(L^2\) loss & $0.0166$ &
${0.0187}$ &${0.0474}$ &${0.1729}$&${\bf 0.0136}$ &${0.0303}$\Tstrut\\
    &\(H^1\) loss & $0.0184$ &
${0.0151}$ &${0.0264}$ &${0.0656}$&${0.0256}$ &${\bf 0.0017}$\\
     &\(H^2\) loss &${0.0202}$ &
${\bf 0.0143}$ &${\bf 0.0206}$ &${\bf 0.0361}$&${0.0226}$ &${0.0193}$\\
\hline
U-Net     &\(L^2\) loss & $0.0269$ &
${0.0549}$ &${0.1023}$ &${0.3055}$&${0.0958}$ &${0.0934}$\Tstrut\\
     &\(H^1\) loss & $0.0377$ &
${0.0570}$ &${0.0901}$ &${0.2164}$&${0.1688}$ &${0.1127}$\\
     &\(H^2\) loss &${0.0565}$ &
${0.0676}$ &${0.0936}$ &${0.1749}$&${0.0482}$ &${0.0841}$\\
\hline
ConvLSTM     &\(L^2\) loss & $0.2436$ &
${0.2537}$ &${0.3854}$ &${1.0130}$&${0.0140}$ &${24.1119}$\Tstrut\\
     &\(H^1\) loss & $0.2855$ &
${0.2577}$ &${0.3308}$ &${0.5336}$&${0.6977}$ &${6.9167}$\\
     &\(H^2\) loss &${0.3367}$ &
${0.2727}$ &${0.3311}$ &${0.4352}$&${0.8594}$ &${4.0976}$\\
\hline
\hline 
\end{tabular}
\end{center}
\caption{\rebuttal{Benchmark on vorticity for the Kolmogorov flow with Re$=40$}} 
\label{table:vorticity}
\end{table*}  

\begin{table*}
\begin{center}
\begin{tabular}{l|llc|ccc}
\multicolumn{1}{c}{\bf Model}
&\multicolumn{1}{c}{\bf training }
&\multicolumn{1}{c}{\bf loss }
&\multicolumn{1}{c}{\bf (order) }
&\multicolumn{1}{c}{\bf error on $f$}
&\multicolumn{1}{c}{\bf error on $u$}
&\multicolumn{1}{c}{\bf error on $w$}\\
\hline 
Stream function $f$     &\(L^2\) loss & $0.0379$ &($0^{th}$ order)&
${0.0383}$ &${0.2057}$ &${2.0154}$\Tstrut\\
Stream function $f$     &\(H^1\) loss & $0.0512$ &($1^{st}$ order)&
${0.0268}$ &${0.0769}$ &${ 0.3656}$\\
Stream function $f$     &\(H^2\) loss & $0.0973$ &($2^{nd}$ order)&
${0.0198}$ &${0.0522}$ &${0.2227}$\Bstrut\\
\hline 
Velocity $u$     &\(L^2\) loss & $0.0688$ &($1^{st}$ order)&
${0.0217}$ &${0.0691}$ &${ 0.3217}$\Tstrut\\
Velocity $u$     &\(H^1\) loss & $0.1246$ &($2^{nd}$ order)&
${\bf 0.0170}$ &${0.0467}$ &${ 0.1972}$\\
Velocity $u$     &\(H^2\) loss & $0.2662$ &($3^{rd}$ order)&
${0.0178}$ &${0.0482}$ &${ 0.1852}$\Bstrut\\
\hline 
Vorticity $w$     &\(L^2\) loss & $0.1710$ &($2^{nd}$ order)&
${0.0219}$ &${\bf 0.0415}$ &${ 0.1736}$\Tstrut\\
Vorticity $w$     &\(H^1\) loss & $0.3383$ &($3^{rd}$ order)&
${0.0268}$ &${0.0463}$ &${\bf 0.1694}$\\
Vorticity $w$     &\(H^2\) loss & $0.4590$ &($4^{th}$ order)&
${0.0312}$ &${0.0536}$ &${ 0.1854}$\Bstrut\\
\hline 
\end{tabular}
\end{center}
\caption{Vorticity, velocity, and stream function for the Kolmogorov flow with $Re=500$ and $dt=1$} 
\label{table:re500}
\end{table*}

\begin{table*}
\begin{center}
\begin{tabular}{l|l|ccc}
\multicolumn{1}{c}{\bf Model}
&\multicolumn{1}{c}{\bf Training \(H^1\) }
&\multicolumn{1}{c}{\bf Testing \(L^2\)}
&\multicolumn{1}{c}{\bf Testing \(H^1\)}
&\multicolumn{1}{c}{\bf Testing \(H^2\)}\\
\hline 
No dissipative regularization    & $0.3564$ &
${0.1545}$ &${0.5741}$ &${0.7501}$\Tstrut\\
Dissipative regularization     & $0.3520$ &
${0.1543}$ &${0.5657}$ &${0.7343}$\Bstrut\\
\hline 
\end{tabular}
\end{center}
\caption{\rebuttal{Vorticity for the Kolmogorov flow with $Re=5000$ and $dt=0.1$}} 
\label{table:re5000}
\end{table*}

\iffalse
\begin{table}[]
\begin{tabular}{l|llll}
\hline 
Reynolds number & Re40 & Re100 & Re250 & Re500 \\
\hline 
MNO L2 & 0.0183 & 0.2010 & 0.3624 & 0.3995 \\
MNO H1 &\textbf{0.0148} & \textbf{0.1617} &  0.3260 & 0.3927\\
MNO H2 &\textbf{0.0148} & 0.1677 & \textbf{0.3153} & \textbf{0.3841} \\
UNet &0.0686 &  0.3137 & 0.4371 & 0.4570 \\
\hline 
\end{tabular}
\end{table}

\begin{table}[]
\begin{tabular}{l|lllll}
\hline 
dt & 0.25 & 0.50 & 1.00  & 2.00  & 4.00 \\
\hline 
MNO (train) & 0.1038 & 0.1027 & 0.1005  & 0.0980 & 0.0945 \\
MNO (test) & 0.1042 & 0.1041  & 0.1042 & 0.1071 & 0.1195 \\
\hline 
UNet (train) & 0.2079 &0.1996 & 0.1794 & 0.1563 & 0.1262 \\
UNet (test) & 0.2571 &0.2577 & 0.2704 & 0.3094 & 0.3635 \\
\hline 
\end{tabular}
\end{table}

\begin{table}[]
\begin{tabular}{l|llll}
\hline 
Samples & 4000 & 8000 & 16000  & 32000   \\
\hline 
MNO (train) & 0.0119 & 0.0097 & 0.0080 & 0.0065     \\
MNO (test) & 0.0596 & 0.0396 & 0.0228 & 0.0123     \\
\hline 
UNet (train) & 0.0162 & 0.0160 & 0.1794 & 0.0178\\
UNet (test) & 0.1339 & 0.0998 & 0.0777 & 0.0630\\
\hline 
\end{tabular}
\end{table}

\begin{table}[]
\begin{tabular}{l|llll}
\hline 
Model & parameters & runtime & train & test   \\
\hline 
MNO  & 271,073 & 3.85s & 0.0059 & 0.0086     \\
GRU & 2,768,256 & 1.79s & 0.0134 & 0.0169\\
LSTM & 3,648,256 & 1.92s & 0.0115 & 0.0149\\
\hline 
\end{tabular}
\end{table}
\fi

\section{Theoretical background}
\subsection{\rebuttal{Solution} operators and the semigroup}
\label{subsec:appdx_semigroup}
Since the system \eqref{eq:pde} is autonomous, that is, \(F\) does not explicitly depend on time, under the well-posedness assumption, we may define, for any \(t \in [0,\infty)\), a \rebuttal{solution} operator \(S_t : \U \to \U\) such that \(u(t) = S_t u(0)\). This map satisfies the properties
\begin{enumerate}
    \item \(S_0 = I\),
    \item \(S_t (u_0) = u(t)\),
    \item \(S_t (S_s (u_0)) = u(t+s)\),
\end{enumerate}
for any \(s,t \in [0,\infty)\) and any \(u_0 \in \U\) where \(I\) denotes the identity operator on \(\U\). In particular, the family \(\{S_t : t \in [0,\infty)\}\) defines a semigroup of operators acting on \(\U\). Our goal is to approximate a particular element of this semigroup associated to some fixed time step \(h > 0\) given observations of the trajectory from \eqref{eq:pde}. We build an approximation  \(\hat{S}_h : \U \to \U \) such that
\begin{equation}
    \label{eq:approxmap}
    \hat{S}_h \approx S_h.
\end{equation}

% \subsection{\rebuttal{Sobolev norm loss}}
% \label{subsec:appdx_sobolev}
% \rebuttal{We incorporate Sobolev norms in the training process to better capture invariant statistics of the learned operator. Given the ground truth operator $S_h$ and the learned operator $\hat S_h$ and $f = \hat S_h(u) - S_h(u)$, we compute the step-wise loss in the Sobolev norm}
% \begin{equation}
%     \label{eq:sobolev_norm}
%     \rebuttal{\|f\|_{k,p} = \left(\sum_{i=0}^k \|f^{(i)}\|^p_p \right)^\frac{1}{p}.}
% \end{equation}
% \rebuttal{In particular, we use $p=2$, where the Sobolev norm can be easily computed in Fourier space:}
% \begin{equation}
%     \rebuttal{\|f\|_{k,2}^2 =  \sum_{n=-\infty}^\infty (1 + n^2 + \cdots + n^{2k}) \left|\hat f(n) \right|^2,}
% \end{equation}
% \rebuttal{where $\hat f$ is the Fourier series of $f$. In practice, we use \(k = 0,1,2\).}

\subsection{Proof of Theorem~\ref{thm:existance}}
\label{subsec:appdx_proof}
\begin{proof}[Proof of Theorem~\ref{thm:existance}]
Since \(S_h\) is continuous and \(K\) is compact, the set
\[R = \bigcup_{l=0}^n S^l_h (K)\]
is compact. Therefore, there exist a set of representers \(\varphi_1, \varphi_2, \dots \in R\) such that 
\[\lim_{m \to \infty} \sup_{v \in R} \inf_{u \in R_m} \|u - v\|_\U = 0\]
where \(R_m = \text{span} \{\varphi_1,\dots,\varphi_m\}\). For any \(m \in \N\), let \(P_m : \U \to R_m\) denote a projection of \(U\) to \(R_m\). Since \(R\) is compact, the set
\[P = R \bigcup \left ( \bigcup_{m=1}^\infty P_m(R) \right ) \]
is compact \cite[Lemma 14]{kovachki2021nueral}. Since \(S_h\) is locally Lipschitz and \(P\) is compact, there exists a constant \(C = C(P) > 0\) such that 
\[\|S_h(u_1) - S_h(u_2)\|_\U \leq C\|u_1 - u_2\|_\U, \qquad \forall u_1, u_2 \in P.\]
Without loss of generality, assume \(C \neq 1\) and define
\[M = \left ( C^n + \frac{1-C^n}{1-C} \right )^{-1}.\]
By the universal approximation theorem for neural operators \citep[Theorem 4]{kovachki2021nueral} or \citep[Theorem 2.5]{kovachki2021universal}, there exists a neural operator \(\hat{S}_h : \U \to \U\) such that
\[\sup_{u_0 \in P} \|S_h(u_0) - \hat{S}_h(u_0)\|_\U < \epsilon M.\]
Perusal of the proof of the universal approximation theorem for neural operators shows that \(\hat{S}_h\) can be chosen so that \(\hat{S}_h(P) \subseteq R_m\) for some \(m \in \N\) large enough, therefore \(\hat{S}_h(P) \subseteq P\).
Let \(u_0 \in K\), then the triangle inequality implies 
\begin{align*}
    \|u(nh) - \hat{S}^n_h(u_0)\|_\U &= \|S^n_h(u_0) - \hat{S}^n_h(u_0)\|_\U \\
    &= \|S_h(S^{n-1}_h(u_0)) - \hat{S}_h(\hat{S}^{n-1}_h(u_0))\|_\U \\
    &\leq \|S_h(S^{n-1}_h(u_0)) - S_h(\hat{S}^{n-1}_h(u_0))\|_\U + \|S_h(\hat{S}^{n-1}_h(u_0)) - \hat{S}_h(\hat{S}^{n-1}_h(u_0))\|_\U \\
    &\leq C \|S^{n-1}_h(u_0) - \hat{S}^{n-1}_h(u_0)\|_\U  + \epsilon M.
\end{align*}
By the discrete time Gr{\"o}nwall lemma,
\begin{align*}
    \|u(nh) - \hat{S}^n_h(u_0)\|_\U &\leq C^n \|S_h(u_0) - \hat{S}_h(u_0)\|_\U + \epsilon M \left ( \frac{1-C^n}{1-C} \right ) \\
    &< \epsilon M \left ( C^n + \frac{1-C^n}{1-C} \right ) \\
    &= \epsilon
\end{align*}
which completes the proof.

\end{proof}

\section{Discussion and future work}
\label{sec:discussion}

In this work, we learn MNO from only local data and compose it to obtain the global attractor of chaotic systems, and by explicitly enforcing dissipativity, we empirically show that MNO predictions do not blow up or collapse even in the long-time horizon, while still achieving relatively low error running several orders of magnitude faster than traditional methods.

MNO has two major limitations. First, it assumes the target system is approximately Markovian. If the system is heavily path-dependent, then the MNO framework does not directly apply.
Second, although we develop an approximation theorem for finite period, it does not hold for infinite time horizon.

As discussed previously, it is infeasible to track the exact trajectory of chaotic systems on an infinite time horizon. Even very small errors will accumulate in each step, and eventually cause the simulation to diverge from the true trajectory. However, it is possible to track the attractor of the system. An attractor is absorbing. If the simulated trajectory only makes a small error, the attractor will absorb it back, so that the simulated trajectory will never diverge from the true attractor. Therefore, it is possible to have the simulated trajectory capture the true attractor. 

To obtain an infinite-time approximation error bound is non-trivial. Previously, \citep{stuart1998dynamical, humphries1994runge} (cf. Theorem 3.12) show a result for (finite-dimensional) ODE systems. If the system is Lipschitz then there exists a numerical simulation that forms a dissipative dynamical system that does not blow up or collapse. And the the simulated attractor $\mathcal{A}_h$ approximates the true attractor $\mathcal{A}$ with the time step $h$ 
\[dist(\mathcal{A}_h , \mathcal{A}) \rightarrow 0,\quad \text{as}\ h \rightarrow 0\]

To generalize such theorem to Markov neural operator (MNO), we need to overcome two difficulties (1) generalize the formulation from (finite-dimensional) ODE systems to (infinite-dimensional) PDE systems, and (2) show MNO can obtain a sufficient error rate with respect to the time step $h$.

The first aspect requires extending the theory from finite dimension to infinite dimension, which is non-trivial since the operator $F$ in \eqref{eq:pde} is not compact or bounded. This makes it hard to bound the error with respect to the attractor \citep{stuart1995perturbation}. The second aspect requires to formulate MNO slightly differently. In the current formulation, the evolution operator is chosen for a fixed time step $h$. To achieve $O(h)$ error we need to formulate the evolution operator continuously for infinitesimal $h$. Especially, for a semi-linear PDE system 
\[\frac{d u}{d t} + Au = F(u)\]
where $A$ is a linear, self-adjoint operator and $F$ is a continuous but nonlinear operator (this formulation includes the KS and NS equations). The evolution can be written as
\[u(t+h) = e^{-Ah}u(t) + \int_0^h e^{-A(h-s)}F(u(t+s))ds\]
Where $\Phi(u(t),A,t):= \int_0^h e^{-A(h-s)}F(u(s))ds$ is bounded despite $F$ is not. If one can approximate $\Phi(u(t), A,h)$ by a neural operator, then MNO can potentially achieve the needed error rate. This shows hope to obtain an approximation error bound for infinite time zero. We leave this as a promising future direction.

\end{document}